\definecolor{hrefcolor}{rgb}{0.0,0.5,0.8}
\definecolor{hlgreen}{rgb}{0,0.7,0}
\pgfplotsset{compat=newest}
\pgfplotsset{plot coordinates/math parser=false}
\newcommand{\field}[1]{\mathbb{#1}}
\newcommand{\R}{\field{R}}
\newcommand{\iprod}[2]{\langle #1,#2\rangle}
\newcommand{\extR}{\overline \R}
\newcommand{\subdiff}{\partial}
\newcommand{\defeq}{:=}
\DeclareMathOperator*{\argmin}{arg\,min}
\newcommand{\norm}[1]{\|#1\|}
\newcommand{\abs}[1]{|#1|}
\newcommand{\inv}[1]{#1^{-1}}
\newcommand{\grad}[1]{\nabla #1}
\renewcommand{\L}{\mathcal{L}}
\DeclareMathOperator{\BVspace}{BV}
\DeclareMathOperator{\BDspace}{BD}
\DeclareMathOperator{\TV}{TV}
\DeclareMathOperator{\TGV}{TGV}
\DeclareMathOperator{\Sym}{Sym}
\DeclareMathOperator{\divergence}{div}
\newcommand{\Tensor}{\mathcal{T}}
\newcommand{\Meas}{\mathcal{M}}
\let\d\relax
\DeclareMathOperator{\d}{d}
\DeclareMathOperator{\esssup}{ess\,sup}
\newcommand{\BALL}{V}
\newcommand{\mathvar}[1]{\textup{#1}}
\newcommand{\sym}{\mathvar{s}}
\newcommand{\freevar}{\,\boldsymbol\cdot\,}
\newcommand{\FA}{\mathvar{FA}}
\newcommand{\PSNR}{\mathvar{PSNR}}
\DeclareMathOperator{\lev}{lev}
\def \weaktostar@sym{\setbox0=\hbox{$\rightharpoonup$}\rlap{\hbox
        to\wd0{\hss\raise1ex\hbox{$\scriptscriptstyle{*\,}$}\hss}}\box0}
    \def \weaktostar    {\mathrel{\weaktostar@sym}}
\newtheorem{theorem}{Theorem}
\newtheorem{lemma}{Lemma}
\newtheorem{proposition}{Proposition}
\theoremstyle{definition}
\newtheorem{definition}{Definition}
\newtheorem*{assumption*}{Assumption}
\newtheorem{remark}{Remark}
\newtheorem*{remark*}{Remark}
\newtheorem*{definition*}{Definition}
\newtheorem{example}{Example}
\numberwithin{equation}{section}
\numberwithin{lemma}{section}
\numberwithin{theorem}{section}
\numberwithin{proposition}{section}
\numberwithin{definition}{section}
\numberwithin{remark}{section}
\numberwithin{example}{section}
\numberwithin{assumption}{section}
\numberwithin{algorithm}{section}
\numberwithin{corollary}{section}
\newcommand{\term}[1]{\emph{#1}}
\newlength{\w}
\title{Diffusion tensor imaging with deterministic error bounds}
\author{
    Artur Gorokh\thanks{Faculty of Physics, Lomonosov Moscow State University, Russia. \texttt{arturgorokh@yahoo.com}}
    \and
    Yury Korolev\thanks{School of Engineering and Materials Science,  Queen Mary University of London, United Kingdom. \texttt{korolev.msu@gmail.com}}
    \and
    Tuomo Valkonen\thanks{Department of Applied Mathematics and Theoretical Physics, University of Cambridge, United Kingdom. \texttt{tuomo.valkonen@iki.fi}}
    }
\begin{document}

\maketitle

\begin{abstract}
    Errors in the data and the forward operator of an inverse problem can be handily modelled using partial order in Banach lattices. We present some existing results of the theory of regularisation in this novel framework, where errors are represented as bounds by means of the appropriate partial order.

    We apply the theory to diffusion tensor imaging (DTI), where correct noise modelling is challenging: it involves the Rician distribution and the nonlinear Stejskal-Tanner equation. Linearisation of the latter in the statistical framework would complicate the noise model even further. We avoid this using the error bounds approach, which preserves simple error structure under monotone transformations.
\end{abstract}

\section{Introduction}

Often in inverse problems,
we have only very rough knowledge of noise models, or the exact model is too difficult to realise in a numerical reconstruction method. The data may also contain process artefacts from black box devices \cite{pan2009why}.
Partial order in Banach lattices has therefore recently been investigated in~\cite{Kor_IP:2014,Kor_Yag_IP:2013,Kor_Yag_JIIP} as a less-assuming error modelling approach for inverse problems. 
This framework allows the representation of errors in the data as well as in the forward operator of an inverse problem by means of order intervals (i.e., lower and upper bounds by means of appropriate partial orders). An important advantage of this approach vs. statistical noise modelling is that deterministic error bounds preserve their simple structure under monotone transformations. 

We apply partial order in Banach lattices to diffusion tensor imaging (DTI).
We will in due course explain the diffusion tensor imaging progress, as well as the theory of inverse problems in Banach lattices, but start by introducing our model
\begin{equation}
    \notag
    \min_u~ R(u)
    \quad
    \text{subject to}
    \quad
    \begin{array}[t]{l}
        u \geqslant 0,\\
        g_j^l \leqslant A_j u \leqslant g_j^u,\quad \L^n\text{-a.e. on } \Omega,\ %
        (j=1,\ldots,N).
    \end{array}
\end{equation}
That is, we want to find a field of symmetric $2$-tensors $u: \Omega \to  \Sym^2(\R^3)$ on the domain $\Omega \subset \R^3$, minimising the value of the regulariser $R$ on the feasible set. The tensor field $u$ is our unknown image. It is subject a positivity constraint, as well as partial order constraints imposed through the operators $[A_j u](x) \defeq -\iprod{b_j}{u(x)b_j}$, and the upper and lower bounds $g_j^l \defeq \log(\hat s_j^l/\hat s_0^u)$ and $g_j^u \defeq \log(\hat s_j^u/\hat s_0^l)$. These model, in terms of error intervals after logarithmic transformation, the Stejskal--Tanner equation
\begin{equation}
    \label{eq:stejskal-tanner-intro}
    s_j(x)=s_0(x) \exp(-\iprod{b_j}{u(x)b_j}),
    \quad (j=1,\ldots,N),
\end{equation}
central to the diffusion tensor imaging process.

To shed more light on $u$ and the equation \eqref{eq:stejskal-tanner-intro}, let us briefly outline the diffusion tensor imaging process. As a first step towards DTI, diffusion weighted magnetic resonance imaging (DWI) is performed. This process measures the anisotropic diffusion of water molecules. To capture the diffusion information, the magnetic resonance images have to be measured with diffusion sensitising gradients in multiple directions. These are the different $b_i$'s in \eqref{eq:stejskal-tanner-intro}.
Eventually, multiple DWI images $\{s_j\}$ are related through the Stejskal-Tanner equation \eqref{eq:stejskal-tanner-intro} to the symmetric positive-definite {diffusion-tensor field} $u: \Omega \to \Sym^2(\R^3)$ \cite{basser2002diffusion,kingsley2006introduction}. At each point $x \in \Omega$, the tensor $u(x)$ is the covariance matrix of a normal distribution for the probability of water diffusing in different spatial directions.

\begin{figure}
    \centering
    \begin{subfigure}[t]{0.24\textwidth}
        \includegraphics[width=\textwidth]{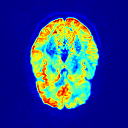}
        \caption{Slice of a real MRI measurement}
    \end{subfigure}
    \begin{subfigure}[t]{0.35\textwidth}
        \includegraphics[width=\textwidth]{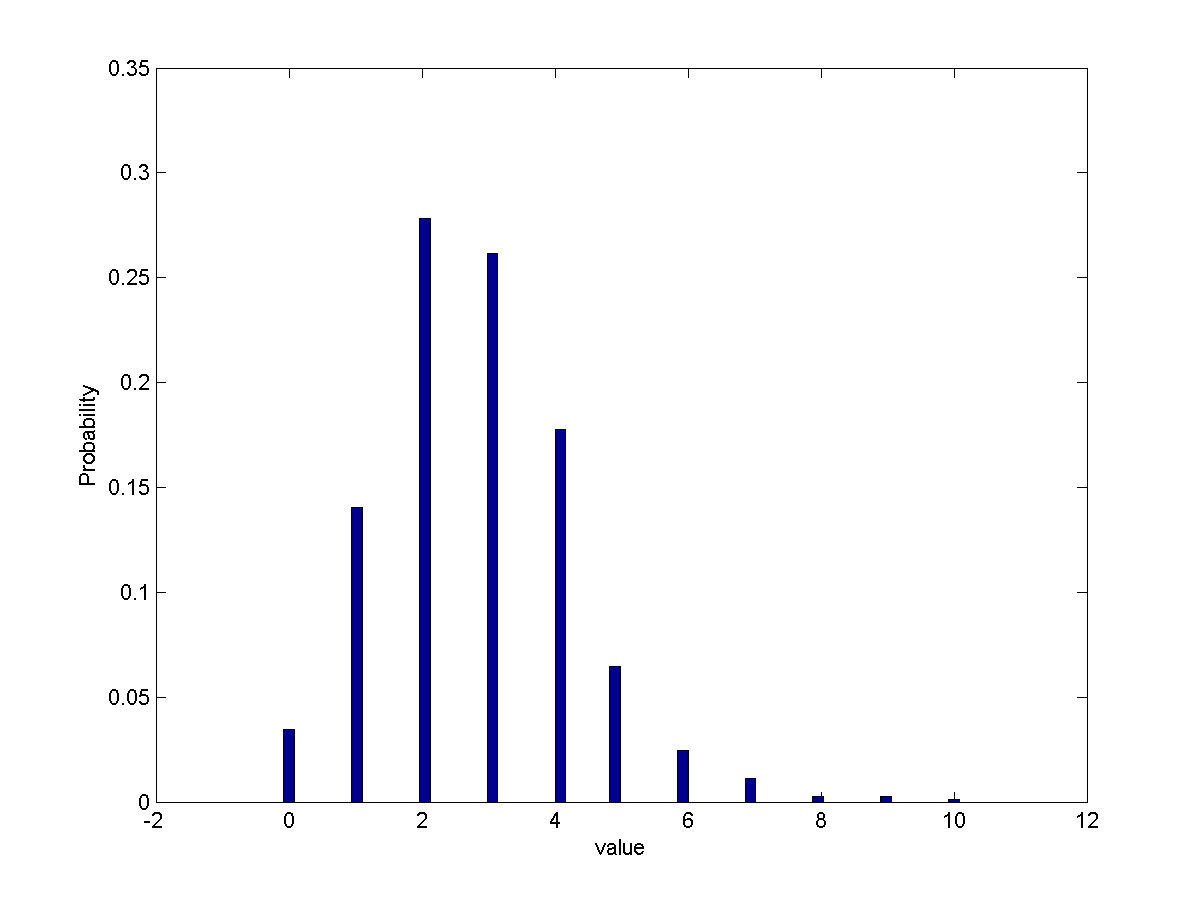}
        \caption{50-bin histogram of noise estimated from background}
    \end{subfigure}
    \begin{subfigure}[t]{0.35\textwidth}
        \includegraphics[width=\textwidth]{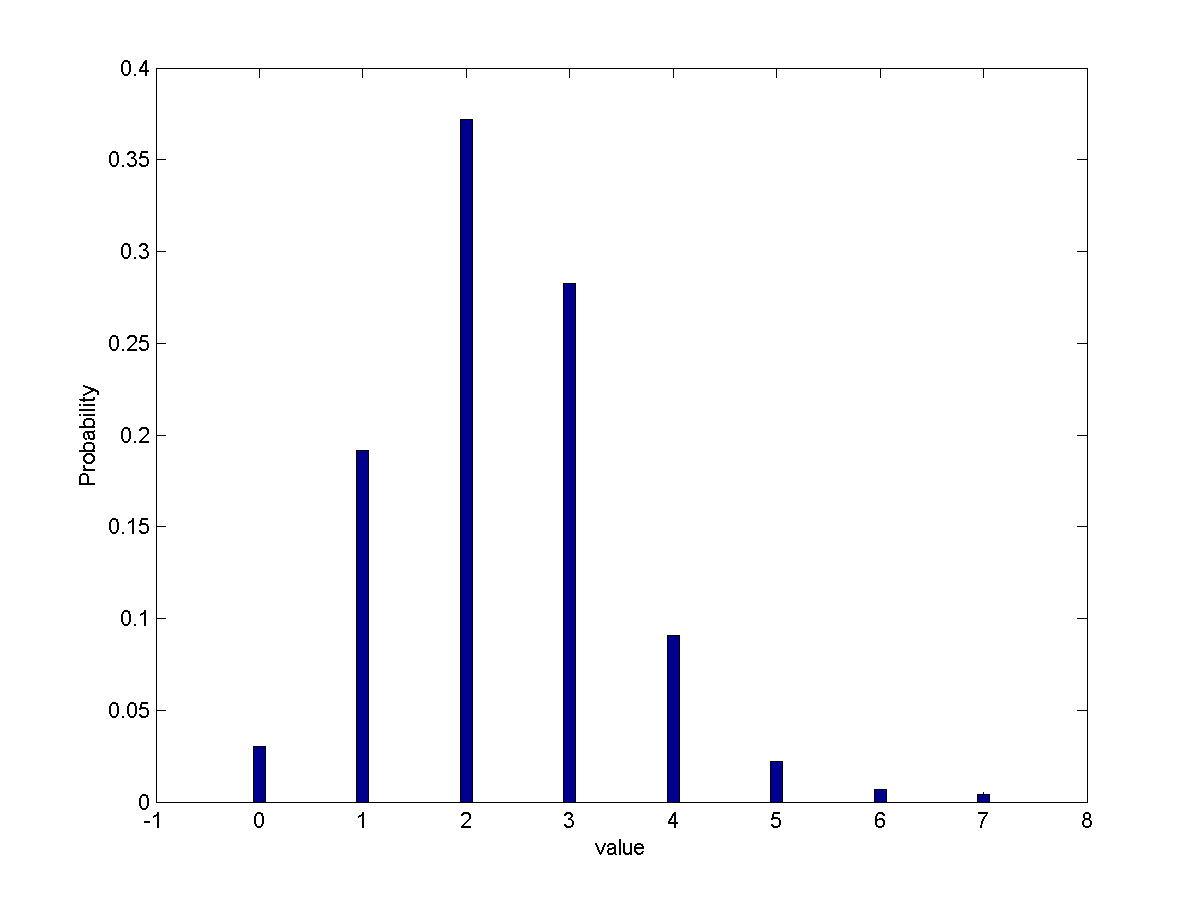}
        \caption{50-bin histogram after eddy-current correction with FSL}
    \end{subfigure}
    \caption{The noise in the absolute values of complex MRI data should be Rician. Here we have taken a 50-bin histogram of the noise in real data. This divides the pixels into bins of 50 different noise levels. However, we only find approximately 10 noise levels to have non-zero pixel count. As the Rician distribution is continuous, we see that the noise cannot be Rician, some bins of the 50-bin histogram being empty.
    The measurement setup of the data used here is described in Section~\ref{sec:invivo}.}
    \label{fig:noise-illustration}
\end{figure}

The fact that multiple $b_i$'s are needed to recover $u$, leads to very long acquisition times, even with ultra fast sequences like echo planar imaging (EPI). Therefore, DTI is inherently a low-resolution and low-SNR method. In theory, the amplitude DWI images exhibit Rician noise \cite{gudbjartsson1995rician}. However, as the histogram of an in vivo measurement in Figure \ref{fig:noise-illustration} illustrates, this may not be the case for practical data sets from black-box devices. Moreover, the DWI process is prone to eddy-current distortions \cite{tournier2011diffusion}, and due to the slowness of it, it is very sensitive to patient motion \cite{herbst2011prospective,aksoy2011real}. We therefore have to use techniques that remove these artefacts in solving for $u(x)$. We also need to ensure the positivity $u$, as non-positive-definite diffusion tensor are non-physical.
One proposed approach for the satisfaction of this constraint is that of log-Euclidean metrics \cite{arsigny2006log}.
This approach has several theoretically desirable aspects, but some practical shortcomings \cite{tuomov-dtireg}.
Special Perona-Malik type constructions on Riemannian manifolds can also be used to maintain the structure of the tensor field \cite{chefd2002constrained,tschumperle2001diffusion}.
Such anisotropic diffusion is however severely ill-posed \cite{weickert1998anisotropic}.
Recently manifold-valued discrete-domain total variation models have also been applied to diffusion tensor imaging \cite{bacak2015second}.

Our approach is also in the total variation family, first considered for diffusion tensor imaging in \cite{setzer2009variational}. Namely, we follow up on the work in \cite{tuomov-dtireg,escoproc,ipmsproc,tuomov-nlpdhgm} on the application of total generalised variation regularisation \cite{bredies2009tgv} to DTI. We should note that in all of these works the fidelity function was the ROF-type \cite{Rud1992} $L^2$ fidelity. This would only be correct, according to the assumption that noise of MRI measurements is Gaussian, if we had access to the original complex k-space MRI data. The noise of the inverse Fourier-transformed magnitude data $s_j$, that we have in practice access to, is however Rician under the Gaussian assumption on the original complex data \cite{gudbjartsson1995rician}. This is not modelled by the $L^2$ fidelity.

Numerical implementation of Rician noise modelling has been studied in \cite{martin2013tgvdti,getreuer2011rician}. As already discussed, in this work, we take the other direction. Instead of modelling the errors in a statistically accurate fashion, not assuming to know an exact noise model, we represent them by means of pointwise bounds.
The details of the model are presented in Section~\ref{sec:dti}. We study the practical performance in Section~\ref{sec:experiments} using the numerical method presented in Section~\ref{sec:optimisation}. First we however start with the general error modelling theory in Section \ref{sec:deterministic}.
Readers who are not familiar with notation for Banach lattices or symmetric tensor fields are advised to start with the Appendix, where we introduce our mathematical notation and techniques.



\section{Deterministic error modelling}
\label{sec:deterministic}

%

\subsection{Mathematical basis}

We now briefly outline the theoretical framework~\cite{Kor_IP:2014} that is the basis for our approach. Consider a linear operator equation
\begin{equation}\label{Az=u}
Au = f, \quad u \in U, \,\, f \in F,
\end{equation}
where $U$ and $F$ are Banach lattices, $A \colon U \to F$ is a regular injective operator. The inaccuracies in the right-hand side $f$ and the operator $A$ are represented as bounds by means of appropriate partial orders, i.e.
\begin{equation}\label{math_basis:bounds}
\begin{aligned}
&f^l, f^u  \colon &&f^l \leqslant_F f \leqslant_F f^u, \\
&A^l, A^u \colon &&A^l \leqslant_{L^\sim(U,F)} A \leqslant_{L^\sim(U,F)} A^u,
\end{aligned}
\end{equation}
\noindent where the symbol $\leqslant_F$ stands for the partial order in $F$ and $\leqslant_{L^\sim(U,F)}$ for the partial order for regular operators induced by partial orders in $U$ and $F$.  Further, we will drop the subscripts at inequality signs where it will not cause confusion.

The exact right-hand side $f$ and operator $A$ are not available. Given the approximate data $(f^l,f^u,A^l,A^u)$, we need to find an approximate solution $u$ that converges to the exact solution $\bar u$ as the inaccuracies in the data diminish. This statement needs to be formalised. We consider monotone convergent sequences of lower and upper bounds
\begin{equation}\label{math_basis:data}
\begin{aligned}
&f^l_n \colon f^l_{n+1} \geqslant f^l_n, &		\quad 	&A^l_n \colon A^l_{n+1} \geqslant A^l_n, \\	
&f^u_n \colon f^u_{n+1} \leqslant f^u_n, &	\quad	&A^u_n \colon A^u_{n+1} \leqslant A^u_n, \\
&f^l_n \leqslant f \leqslant f^u_n, 	      &	\quad 	&A^l_n \leqslant A \leqslant A^u_n  \quad \forall n \in \mathbb N,\\
&\| f^l_n - f^u_n \| \to 0, &				\quad	& \| A^l_n - A^u_n \| \to 0 \quad \text{as $n \to \infty$}.
\end{aligned}
\end{equation}
We are looking for an approximate solution $u_n$ such that $\|u_n-\bar u\|_U \to 0$ as $n \to \infty$.

Let us ask the following question. What are the elements $u \in U$ that could have produced data within the tolerances (\ref{math_basis:data})? Obviously, the exact solution is one of such elements. Let us call the set containing all such elements the feasible set $U_n \subset U$.

Suppose that we know \emph{a priori} that the exact solution is positive (by means of the appropriate partial order in $U$). Then it is easy to verify that the following inequalities hold for all $n \in \mathbb N$
\begin{equation*}
\bar u \geqslant_U 0, \quad A^u_n \bar u \geqslant_F f^l_n, \quad A^l_n \bar u \leqslant_F f^u_n.
\end{equation*}

This observation motivates our choice of the feasible set:
\begin{equation*}
U_n = \{ u \in U \colon u \geqslant_U 0, \quad A^u_n u \geqslant_F f^l_n, \quad A^l_n  u \leqslant_F f^u_n \}.
\end{equation*}

It is clear that the exact solution $\bar u$ belongs to the sets $U_n$ for all $n \in \mathbb N$. Our goal is to define a rule that will choose for any $n$ an element $u_n$ of the set $U_n$ such that the sequence $u_n \in U_n$ will strongly converge to the exact solution $\bar u$.
We do so by minimising an appropriate regularisation functional $R(u)$ on $U_n$:
\begin{equation}\label{u_n}
u_n = \argmin_{u \in U_n} R(u).
\end{equation}

This method, in fact, is a lattice analogue of the well-known residual method~\cite{GrasmairHalmeierScherzer2011,TGSYag}. The convergence result is as follows~\cite{Kor_IP:2014}.
\begin{theorem}\label{thm_strong}
Suppose that
\begin{enumerate}
\item $R(u)$ is bounded from below on $U$,
\item $R(u)$ is lower semi-continuous,
\item level sets $\{u \colon R(u) \leqslant C\}$ ($C=\emph{const}$) are sequentially compact in $U$ (in the strong topology induced by the norm).
\end{enumerate}
Then the sequence defined in~(\ref{u_n}) strongly converges to the exact solution $\bar u$ and $\mathcal R(u_n) \to \mathcal R(\bar u)$.
\end{theorem}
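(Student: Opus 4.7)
My plan is to follow the standard variational scheme: admissibility of $\bar u$ combined with compactness of sublevel sets of $R$ yields a convergent subsequence, and identification of its limit uses injectivity of $A$ together with the convergence of the bounds.

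First I would note that $\bar u \geqslant 0$ by a priori assumption and, since $A^l_n \leqslant A \leqslant A^u_n$ and $f^l_n \leqslant f \leqslant f^u_n$, we have $A^l_n \bar u \leqslant A \bar u = f \leqslant f^u_n$ and similarly $A^u_n \bar u \geqslant f^l_n$; thus $\bar u \in U_n$ for every $n$. Minimality of $u_n$ on $U_n$ then gives $R(u_n) \leqslant R(\bar u)$ for all $n$, so the whole sequence $(u_n)$ lies in the sublevel set $\{u : R(u) \leqslant R(\bar u)\}$. By hypothesis~3 this set is sequentially compact in the norm topology of $U$, hence some subsequence $u_{n_k}$ converges strongly to some $u^* \in U$.

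Next I would identify $u^* = \bar u$. From $A^l_n \leqslant A \leqslant A^u_n$ one has $0 \leqslant A - A^l_n \leqslant A^u_n - A^l_n$ and $0 \leqslant A^u_n - A \leqslant A^u_n - A^l_n$ in the regular order, so by the compatibility of the regular norm with the partial order together with the assumption $\|A^l_n - A^u_n\| \to 0$, both $A^l_n$ and $A^u_n$ converge to $A$ in the operator norm; analogously $f^l_n, f^u_n \to f$ in $F$. Passing to the limit along $n_k$ in $A^l_{n_k} u_{n_k} \leqslant f^u_{n_k}$ and $A^u_{n_k} u_{n_k} \geqslant f^l_{n_k}$, using continuity of addition and closedness of the positive cone under strong limits, yields $A u^* \leqslant f$ and $A u^* \geqslant f$, i.e.\ $A u^* = f$. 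Injectivity of $A$ then forces $u^* = \bar u$. A standard subsequence argument upgrades this to convergence of the full sequence: any subsequence of $(u_n)$ still lies in the compact sublevel set, so extracts a further strongly convergent sub-subsequence whose limit, by the argument above, must be $\bar u$. Finally, combining $\limsup_n R(u_n) \leqslant R(\bar u)$ with the lower semicontinuity estimate $R(\bar u) \leqslant \liminf_n R(u_n)$ yields $R(u_n) \to R(\bar u)$.

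I expect the main subtlety to lie in the order-to-norm passage: one must be precise about which norm on $L^\sim(U,F)$ is used in~\eqref{math_basis:data} so that the sandwich $A^l_n \leqslant A \leqslant A^u_n$ together with $\|A^l_n - A^u_n\| \to 0$ actually yields the operator-norm convergence needed to pass to the limit in $A^l_{n_k} u_{n_k}$ (where both factors move), and that the order inequalities themselves survive strong limits in $F$. Both issues reduce to the continuity of the lattice operations and closedness of the positive cone in a Banach lattice, but they are the ingredient that genuinely distinguishes this proof from a classical Tikhonov-style convergence argument.
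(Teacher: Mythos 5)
Your argument is correct, and its skeleton (admissibility of $\bar u$, hence $R(u_n)\leqslant R(\bar u)$, compactness of the sublevel set, identification of the limit via injectivity of $A$, then the sub-subsequence upgrade and lower semicontinuity for $R(u_n)\to R(\bar u)$) is exactly the scheme the paper uses. Note, however, that the paper does not actually prove Theorem~\ref{thm_strong}; it cites the result and only writes out a proof for the operator-error-free variant, Theorem~\ref{thm_fixed}. There the authors sidestep the order-theoretic limit passage entirely: they set $f_{\delta_n}=(f^u_n+f^l_n)/2$, use the lattice identity $|f-f_{\delta_n}|=(f-f_{\delta_n})\vee(-(f-f_{\delta_n}))$ and the monotonicity of the norm to show $U_n\subset\{u:\|Au-f_{\delta_n}\|\leqslant\delta_n\}$, and then run a purely metric estimate $\|Au_{n_k}-A\bar u\|\leqslant 2\delta_{n_k}\to 0$. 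You instead keep the order inequalities to the end and invoke closedness of the positive cone plus operator-norm convergence $A^l_n,A^u_n\to A$ (which you correctly reduce to monotonicity of the operator norm on positive operators via $0\leqslant A-A^l_n\leqslant A^u_n-A^l_n$). Your route is the one that actually covers the stated theorem with operator errors, where both factors in $A^l_{n_k}u_{n_k}$ move and a bound on $\|u_{n_k}\|$ (available from compactness) is needed; the paper's route is slicker in the fixed-operator case because it converts the feasible set into a residual ball and never needs the cone-closedness or the operator convergence. The one point to make explicit in your write-up is the monotonicity of whichever norm is meant on $L^\sim(U,F)$ in~\eqref{math_basis:bounds} --- you flag this yourself, and for the usual operator norm it follows from $|Su|\leqslant S|u|\leqslant T|u|$ for $0\leqslant S\leqslant T$.
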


Examples of regularisation functionals that satisfy the conditions of Theorem~\ref{thm_strong} are as follows. Total Variation in $L^1(\Omega)$, where $\Omega$ is a subset of $\mathbb R^n$, assures strong convergence in $L^1$, given that the $L^1$-norm of the solution is bounded. The Sobolev norm $\| u \|_{W^{1,q}(\Omega)}$ yields strong convergence in the spaces $L^p(\Omega)$, where $p \geqslant 1$, $q > \frac{np}{p+n}$. The latter fact follows from the compact embedding of the corresponding Sobolev $W^{1,q}(\Omega)$ space into $L^p(\Omega)$~\cite{DS1}.

However, the assumption that the sets $\{u \colon R(u) \leqslant C\}$ are strong compacts in $U$ is quite strong. It can be replaced by the assumption of weak compactness, provided that the regularisation functional possesses the so-called Radon-Riesz property.

\begin{definition}
A functional $F \colon U \to \mathbb R$ has the Radon-Riesz property (sometimes referred to as the $H$-property), if for any sequence $u_n \in U$ weak convergence $u_n \rightharpoonup u_0$ and simultaneous convergence of the values $F(u_n) \to F(u_0)$ imply strong convergence $u_n \to u_0$.
\end{definition}

\begin{theorem}\label{thm_weak}
Suppose that
\begin{enumerate}
\item $R(u)$ is bounded from below on $U$,
\item $R(u)$ is weakly lower semi-continuous,
\item level sets $R(u) \leqslant C$ ($C=\emph{const}$) are weakly sequentially compact in $U$,
\item $R(u)$ possesses the Radon-Riesz property.
\end{enumerate}
Then the sequence defined in~(\ref{u_n}) strongly converges to the exact solution $\bar u$ and $\mathcal R(u_n) \to \mathcal R(\bar u)$.
\end{theorem}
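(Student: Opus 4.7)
The plan is to follow the template of Theorem~\ref{thm_strong} but use weak compactness together with the Radon-Riesz property to upgrade a weakly convergent subsequence to strong convergence.

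First I would extract a bounded sequence from the minimisation rule. Since $\bar u \in U_n$ for every $n$, optimality gives $R(u_n) \leqslant R(\bar u)$ for all $n$, so the whole sequence $\{u_n\}$ lies in a fixed level set of $R$. By weak sequential compactness of level sets, some subsequence $u_{n_k}$ satisfies $u_{n_k} \weakto u^*$ in $U$. The main task is then to identify $u^* = \bar u$.

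To identify the weak limit, I would pass to the limit in the order inequalities defining $U_{n_k}$. From $A^l_n \leqslant A \leqslant A^u_n$ with $\|A^l_n - A^u_n\| \to 0$, monotonicity of the lattice norm yields $\|A^l_n - A\| \to 0$ and $\|A^u_n - A\| \to 0$, and similarly $f^l_n, f^u_n \to f$ in $F$. Since $u_{n_k} \weakto u^*$ is bounded, the identity
\begin{equation*}
    A^l_{n_k} u_{n_k} = A u_{n_k} + (A^l_{n_k} - A) u_{n_k}
\end{equation*}
and weak continuity of $A$ give $A^l_{n_k} u_{n_k} \weakto A u^*$, and likewise $A^u_{n_k} u_{n_k} \weakto A u^*$. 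The constraint $A^l_{n_k} u_{n_k} \leqslant f^u_{n_k}$ and weak closedness of the positive cone in a Banach lattice then yield $A u^* \leqslant f$; the symmetric inequality gives $A u^* \geqslant f$, hence $A u^* = f$. Injectivity of $A$ forces $u^* = \bar u$. Because every weakly convergent subsequence of the bounded sequence $\{u_n\}$ has the same limit $\bar u$, the full sequence satisfies $u_n \weakto \bar u$.

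It remains to upgrade to strong convergence, which is where the Radon-Riesz hypothesis does the work. From $R(u_n) \leqslant R(\bar u)$ and weak lower semi-continuity applied to $u_n \weakto \bar u$,
\begin{equation*}
    R(\bar u) \leqslant \liminf_{n\to\infty} R(u_n) \leqslant \limsup_{n\to\infty} R(u_n) \leqslant R(\bar u),
\end{equation*}
so $R(u_n) \to R(\bar u)$. Combining $u_n \weakto \bar u$ with this convergence of values, the Radon-Riesz property of $R$ yields $u_n \to \bar u$ strongly in $U$. I expect the only delicate point to be the passage to the limit in the order constraints: one must use that the weak limit of a bounded sequence under a fixed continuous operator behaves well, and that order inequalities survive under weak limits; both rest on standard properties of Banach lattices (continuity of the positive cone and monotonicity of the norm) that are already implicit in the framework of~\cite{Kor_IP:2014}.
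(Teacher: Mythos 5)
Your proof is correct. Note that the paper does not actually prove Theorem~\ref{thm_weak} in the text --- it is quoted from \cite{Kor_IP:2014} --- so the only in-paper template is the proof of Theorem~\ref{thm_fixed}, which treats the strong-compactness case without operator errors. Your argument follows the same overall skeleton ($\bar u \in U_n$ gives $R(u_n) \leqslant R(\bar u)$, hence a compact level set, hence a convergent subsequence whose limit is identified via injectivity of $A$), but the identification step is genuinely different. The paper replaces the order interval $f^l_n \leqslant Au \leqslant f^u_n$ by the norm ball $\|Au - f_{\delta_n}\| \leqslant \delta_n$ about the midpoint $f_{\delta_n} = (f^u_n+f^l_n)/2$, using $|g| = g \vee (-g)$ and monotonicity of the lattice norm, and then estimates $\|Au_{n_k} - A\bar u\| \leqslant 2\delta_{n_k} \to 0$. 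You instead pass the order inequalities directly to the weak limit, using that the positive cone is norm-closed and convex, hence weakly closed by Mazur, together with $A u_{n_k} \weakto A u^*$. Both routes are valid; the midpoint trick has the advantage of reducing everything to norm estimates (and still works under mere weak convergence of $u_{n_k}$, since $Au_{n_k} \to A\bar u$ strongly while $Au_{n_k} \weakto Au^*$, forcing $Au^* = A\bar u$), while your cone argument extends more directly to the perturbed operators $A^l_n, A^u_n$, which Theorem~\ref{thm_fixed} excludes but Theorem~\ref{thm_weak} implicitly allows. Two cosmetic points: the estimate $\|A - A^l_n\|\to 0$ from $A^l_n \leqslant A \leqslant A^u_n$ deserves the one-line lattice computation (split $u = u_+ - u_-$ and use monotonicity of the norm, giving a factor $2$), and your final ``subsequence of a subsequence'' step should invoke the weak sequential compactness of the level set from hypothesis~3 rather than mere boundedness of $\{u_n\}$, since the latter alone does not yield weakly convergent subsequences in a general Banach lattice. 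The concluding use of weak lower semicontinuity plus the Radon--Riesz property to upgrade $u_n \weakto \bar u$ and $R(u_n) \to R(\bar u)$ to strong convergence is exactly the intended mechanism.
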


It is easy to verify that the norm in any Hilbert space possesses the Radon-Riesz property. Moreover, this holds for the norm in any reflexive Banach space~\cite{DS1}.

As we explain in the Appendix, the spaces $L^p(\Omega; \Sym^{2}(\R^m))$ are not Banach lattices, therefore, Theorems~\ref{thm_strong} and~\ref{thm_weak} cannot be applied directly. Further theoretical work will be undertaken to extend the framework to the non-lattice case. 
For the moment, however, we will prove that if there are no errors in the operator $A$ in~(\ref{Az=u}), the requirement that the solution space $U$ is a lattice can be dropped.

\begin{theorem}\label{thm_fixed}
Let $U$ be a Banach space, and $F$ be a Banach lattice. Let the operator $A$ in~(\ref{Az=u}) be a linear, continuous and injective operator. Let $f^l_n$ and $f^u_n$ be sequences of lower and upper bounds for the right-hand side defined in~(\ref{math_basis:data}), and suppose that there are no errors in the operator $A$. Let us redefine the feasible set in the following way
\begin{equation*}
U_n = \{ u \in U \colon \quad f^l_n \leqslant_F A  u \leqslant_F f^u_n \}.
\end{equation*}
Suppose also that the regulariser $R(x)$ satisfies conditions of either Theorem~\ref{thm_strong} or Theorem~\ref{thm_weak}. Then the sequence defined in~(\ref{u_n}) strongly converges to the exact solution $\bar u$  and $R(u_n) \to R(\bar u)$.
\end{theorem}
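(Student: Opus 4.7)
The plan is to run the standard residual-method convergence argument of Theorems~\ref{thm_strong} and~\ref{thm_weak}, but to bypass the lattice structure of $U$ by exploiting that all order information lives on the $F$-side, where $F$ is already a Banach lattice. The argument has four main ingredients: feasibility of $\bar u$, (sub)sequential compactness of the minimisers, a squeeze argument in $F$, and an upgrade from weak to strong convergence.

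First I would observe that since $A\bar u = f$ and $f^l_n \leqslant_F f \leqslant_F f^u_n$, the exact solution $\bar u$ lies in every feasible set $U_n$. Hence $R(u_n) \leqslant R(\bar u)$ for all $n$, which confines the whole sequence $(u_n)$ to the sublevel set $\{u : R(u) \leqslant R(\bar u)\}$. By the compactness hypothesis (strong compactness under the conditions of Theorem~\ref{thm_strong}, weak sequential compactness under those of Theorem~\ref{thm_weak}), any subsequence of $(u_n)$ admits a further subsequence $u_{n_k}$ converging to some $u^* \in U$, strongly or weakly respectively.

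The key step, and the one I expect to carry the real content, is identifying $u^* = \bar u$. Writing $A u_n = f^l_n + h_n$ with $0 \leqslant_F h_n \leqslant_F f^u_n - f^l_n$, the Banach lattice property of $F$ (monotonicity of the norm on the positive cone) gives
\begin{equation*}
\|h_n\|_F \leqslant \|f^u_n - f^l_n\|_F \to 0.
\end{equation*}
Combined with $f^l_n \to f$ (which follows from $f^l_n \leqslant f \leqslant f^u_n$ and the same monotonicity), this yields $A u_n \to f$ strongly in $F$. Continuity (hence weak-to-weak continuity) of $A$ implies $A u_{n_k} \to A u^*$ in whichever topology $u_{n_k}$ converges. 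In the weak case we pass to weak limits: $A u_{n_k} \weakto A u^*$ on one hand, and $A u_{n_k} \to f$ strongly (a fortiori weakly) on the other, so by uniqueness of weak limits $A u^* = f$. Injectivity of $A$ forces $u^* = \bar u$.

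Once $u^* = \bar u$ is established, lower semicontinuity of $R$ (strong or weak, as appropriate) combines with $R(u_{n_k}) \leqslant R(\bar u)$ to give
\begin{equation*}
R(\bar u) \leqslant \liminf_{k\to\infty} R(u_{n_k}) \leqslant \limsup_{k\to\infty} R(u_{n_k}) \leqslant R(\bar u),
\end{equation*}
so $R(u_{n_k}) \to R(\bar u)$. In the strong-compactness case we already have $u_{n_k} \to \bar u$ in $U$; in the weak-compactness case, weak convergence $u_{n_k} \weakto \bar u$ together with $R(u_{n_k}) \to R(\bar u)$ triggers the Radon–Riesz property and upgrades convergence to the norm. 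Since every subsequence of $(u_n)$ contains a sub-subsequence converging strongly to the same limit $\bar u$, the full sequence $u_n \to \bar u$ in $U$ and $R(u_n) \to R(\bar u)$, completing the proof.
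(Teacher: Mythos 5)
Your proof is correct and follows essentially the same route as the paper's: feasibility of $\bar u$ gives $R(u_n)\leqslant R(\bar u)$, monotonicity of the lattice norm in $F$ squeezes $Au_n\to f$ strongly, and level-set compactness plus injectivity of $A$ identify the limit as $\bar u$. The only differences are cosmetic --- you anchor the estimate at $f^l_n$ rather than at the midpoint $(f^u_n+f^l_n)/2$, you argue via the subsequence principle rather than by contradiction, and you explicitly write out the weak-compactness/Radon--Riesz case that the paper states but leaves to the reader.
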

\begin{proof}
Let us define an approximate right-hand side and its approximation error in the following way
\begin{equation*}
f_{\delta_n} = \frac{f^u_n + f^l_n}{2}, \quad \delta_n = \frac{\|f^u_n-f^l_n\|}{2}.
\end{equation*}
One can easily verify, that the inequality $\|f-f_{\delta_n}\| \leqslant \delta_n$ holds. Indeed, we have
\begin{equation*}
\begin{aligned}
&f - f_{\delta_n} \leqslant f^u_n - f_{\delta_n} = \frac{f^u_n-f^l_n}{2}, \\
&-(f - f_{\delta_n}) \leqslant f_{\delta_n} - f^l_n = \frac{f^u_n-f^l_n}{2},\\
&|f - f_{\delta_n}| = (f - f_{\delta_n}) \vee (-(f - f_{\delta_n})) \leqslant \frac{f^u_n-f^l_n}{2},\\
&\|f - f_{\delta_n}\| \leqslant \frac{\|f^u_n-f^l_n\|}{2}.
\end{aligned}
\end{equation*}
The first two inequalities are consequences of the conditions~(\ref{math_basis:data}), the third one holds by the definition of supremum and the equality $|f| = f \vee (-f)$ that holds for all $f \in F$, and the last inequality is due to the monotonicity of the norm in a Banach lattice.

Similarly, one can show that for any $u \in U_n$, we have
\begin{equation*}
\|Au - f_{\delta_n}\| \leqslant \delta_n.
\end{equation*}
Therefore, the inclusion $U_n \subset \{u \colon \|Au-f_{\delta_n}\| \leqslant \delta_n\}$ holds.

Now we will proceed with the proof of convergence $\|u_n-\bar u\| \to 0$. Will prove it for the case when the regulariser $R(u)$ satisfies conditions of Theorem~\ref{thm_strong}. Suppose that the sequence $u_n$ does not converge to the exact solution $\bar u$. Then it contains a subsequence $u_{n_k}$ such that $\|u_{n_k} - \bar u\| \geqslant \varepsilon$ for any $k \in \mathbb N$ and some fixed $\varepsilon >0$.

Since the inclusion $\bar u \in U_n$ holds for all $n \in \mathbb N$, we have $R(u_n) \leqslant R(\bar u)$ for all $n \in \mathbb N$. Since the level set $\{u \colon R(u) \leqslant R(\bar u)\}$ is a compact set by assumptions of the theorem, the sequence $u_{n_k}$ contains a strongly convergent subsequence. With no loss of generality, let us assume that $u_{n_k} \to u_0$. We will now show that $u_0 = \bar u$. Indeed, we have
\begin{eqnarray*}
\|A u_{n_k} - A \bar u\| \leqslant \|A u_{n_k} - f_{\delta_n}\| + \|f-f_{\delta_n}\| \leqslant 2\delta_{n_k} \to 0.
\end{eqnarray*}
On the other hand, we have
\begin{equation*}
\|A u_{n_k} - A \bar u\| \to \|A u_0 - A \bar u \|
\end{equation*}
due to continuity of $A$ and $\| \cdot \|$. Therefore, $A u_0 = A \bar u$ and $u_0 = \bar u$, since $A$ is an injective operator. By contradiction, we get $\|u_n - \bar u\| \to 0$.

Finally, since the regulariser $R(u)$ is lower semi-continuous, we get that $\lim\inf R(u_n) = R(\bar u)$. However, for any $n$ we have $R(u_n) \leqslant R(\bar u)$, therefore, we get the convergence $R(u_n) \to R(\bar u)$ as $n \to \infty$.
\end{proof}

\subsection{Philosophical discussion and statistical interpretation}
\label{sec:confidence}

In practice, our data is discrete. So let us momentarily switch to measurements $\hat f = (\hat f^1, \ldots, \hat f^n) \in \R^n$ of a true data $f \in \R^n$. If we actually knew the pointwise noise model of the data, then one way to obtain potentially useful upper and lower bounds for the deterministic model is by means of statistical interval estimates: confidence intervals. Roughly, the idea is to find for each true signal $f$ individual \emph{random} upper and lower bounds $\hat f^u$ and $\hat f^l$ such that
\[
    P(\hat f^u \le f \le \hat f^l) = 1-\theta.
\]
If $\hat f^u$ and $\hat f^l$ are computed based on multiple experiments (i.e., multiple noisy samples $\hat f_1,\dots,\hat f_m$, of the true data $f$), the interval $[\hat f^{u,i}, \hat f^{l,i}]$ will converge in probability to the true data $\hat f^i$, as the number of experiments $m$ increases. Thus we obtain a probabilistic version of the convergences in \eqref{math_basis:data}.

Let us try to see, how such intervals might work in practice. For the purpose of the present discussion, assume that the noise is additive and normal-distributed with variance $\sigma$ and zero mean---an assumption that does not hold in practice, as we have already seen in Figure \ref{fig:noise-illustration}, but will suffice for the next thought experiments.
That is, $\hat f_j=f+\nu_j$ for the noise $\nu_j$. Let the sample mean of $\{\hat f_j\}_{j=1}^m$ be $\bar f_m = (\bar f^1_m, \ldots \bar f^n_m)$, and pointwise sample variance $\sigma=(\sigma^1,\ldots,\sigma^n)$. The product of the pointwise confidence intervals $I_i$ with confidence $1-\theta$ is \cite{cox1979theoretical,shiryaev1996probability}
\[
    \prod_{i=1}^n I_i = [\bar f_m - k^*_{1-\theta/2}\frac{\sigma}{\sqrt{m}}, \bar f_m +k^*_{1-\theta/2}\frac{\sigma}{\sqrt{m}}],
    \quad
    k^*_{1-t} \defeq \inv\Phi(t),
\]
for $\Phi$ the cumulative normal distribution function. For $\theta=0.05$, i.e., the $95\%$ confidence interval, $\inv\Phi(0.05/2)=1.96$. Now, the probability that $I_i$ covers the true  $f^i$ is $1-\theta$, e.g. $95\%$. If we have only a single sample $m=1$, the intervals stay large, but the joint probability, $(1-\theta)^n$ goes to zero as $n$ increases.
As an example, for a rather typical single $128 \times 128$ slice of a DTI measurement, the probability that exactly $\phi=5\%$ (to the closest discrete value possible) 
of the $1-\theta=95\%$ confidence intervals do not cover the true parameter would be about $1.4\%$, or
\[
    1.4\% \approx {n \choose m} \theta^m {(1-\theta)}^{n-m},
    \quad
    \text{where } n=128^2 \text{ and } m=\lceil \phi n \rceil.
\]
The probability of \emph{at least} $\phi=5\%$ of the pointwise $95\%$ confidence intervals not covering the true parameter is in this setting approximately $49\%$.
This can be verified by summing the above estimates over $m=\lceil \phi n \rceil,\ldots,n$.
%
%

In summary, unless $\theta$ simultaneously goes to $1$, the product intervals are very unlikely to cover the true parameter. Based on a single experiment, the deterministic approach as interpreted statistically through confidence intervals, is therefore very likely to fail to discover the true solution as the data size $n$ increases unless the pointwise confidence is very low. But, if we let the pointwise confidences be arbitrarily high, such that the intervals are very large, the discovered solution in our applications of interest would be just a constant!

Asymptotically, the situation is more encouraging.
Indeed, if we could perform more experiments to compute the confidence intervals, then for any fixed $n$ and $\theta$, it is easy to see that the solution of the ``deterministic'' error model is an asymptotically consistent and hence asymptotically unbiased estimator of the true $f$. That is, the estimates converge in probability to $f$ as the experiment count $m$ increases. Indeed, the error-bounds based estimator $\tilde f_m$, based on $m$ experiments, by definition satisfies $\tilde f_m \in \prod_{i=1}^n I_i$. Therefore, we have
\[
    P(\abs{\tilde f^i_m-\bar f_m^i}>\epsilon \text{ for some } i) = 0
    \quad
    \text{whenever}
    \quad
    m \ge (k^*_{1-\theta/2}\sigma/\epsilon)^2.
\]
Thus $f \overset{P}{\to} \bar f$ in probability. Since by the law of large numbers also $\bar f_m \overset{P}{\to} f$, this proves the claim, and to some extent justifies our approach from the statistical viewpoint.

It should be noted that this is roughly the most that has previously been known of the maximum a posteriori estimate (MAP), corresponding to the Tikhonov models
\[
    \min \frac{1}{2}\norm{\hat f-u}^2 + \alpha R(u).
\]
In particular, the MAP is not the Bayes estimator for the typical squared cost functional. This means that it does not minimise $\tilde f \mapsto \mathbb{E}[\norm{f-\tilde f}^2]$. The minimiser in this case is the conditional mean (CM) estimate, which is why it has been preferred by Bayesian statisticians despite its increased computational cost. The MAP estimate is merely an asymptotic Bayes estimator for the uniform cost function.
In a very recent work \cite{burger2014maximum}, it has however been proved that the MAP estimate \emph{is} the Bayes estimator for certain Bregman distances. One possible critique of the result is that these distances are not universal and do depend on the regulariser $R$, unlike the squared distance for CM. The CM estimate however has other problems in the setting of total variation and its discretisation \cite{lassas2004can,lassas2009discretization}.

\section{Application to diffusion tensor imaging}
\label{sec:dti}

We now build our model for applying the deterministic error modelling theory to diffusion tensor imaging. We start by building our forward model based on the Stejskal-Tanner equation, and then briefly introduce the regularisers we use.

\subsection{The forward model}

For $u: \Omega \to \Sym^2(\R^3)$, $\Omega \subset \R^3$, a mapping from $\Omega$ to symmetric second order tensors, let us introduce non-linear operators $T_j$, defined by
\[
    [T_j(u)](x) \defeq s_0(x) \exp(-\iprod{b_j}{u(x)b_j}),
    \quad
    (j=1,\ldots,N).
\]
Their role is to model the so-called Stejskal-Tanner equation
\cite{basser2002diffusion}
\begin{equation}
    \label{eq:stejskal-tanner}
    s_j(x)=s_0(x) \exp(-\iprod{b_j}{u(x)b_j}),
    \quad (j=1,\ldots,N).
\end{equation}
Each tensor $u(x)$ models the covariance of a Gaussian probability distribution at $x$ for the diffusion of water molecules. The data $s_j \in L^2(\Omega)$, ($j=1,\ldots,N$), are the diffusion-weighted MRI images. Each of them is obtained by performing the MRI scan with a different non-zero diffusion sensitising gradient $b_j$, while $s_0$ is obtained with a zero gradient. After correcting the original $k$-space data for coil sensitivities, each $s_j$ is assumed real. As a consequence, any measurement $\hat s_j$ of $s_j$ has---in theory---Rician noise distribution \cite{gudbjartsson1995rician}.

Our goal is to reconstruct $u$ with simultaneous denoising. Following \cite{tuomov-nlpdhgm,knoll2015model}, we consider using a suitable regulariser $R$ the Tikhonov model
\begin{equation}
    \label{eq:dti-recons-nl}
    \min_{u \geqq 0}~ \sum_{j=1}^N \frac{1}{2} \norm{\hat s_j-T_j(u)}^2 + \alpha R(u).
\end{equation}
The constraint $u\geqq 0$ is to be understood in the sense that $u(x)$ is positive semidefinite for $\L^n$-a.e.~$x \in \Omega$ (see Appendix for more details).
Due to the Rician noise of $\hat s_j$, the Gaussian noise model implied by the $L^2$-norm in \eqref{eq:dti-recons-nl} is not entirely correct.
However, in some cases the $L^2$ model may be accurate enough, as for suitable parameters the Rician distribution is not too far from a Gaussian distribution.
If one were to model the problem correctly, one should either modify the fidelity term to model Rician noise, or include the (unit magnitude complex number) coil sensitivities in the model. The Rician noise model is highly nonlinear due to the Bessel functional logarithms involved. Its approximations have been studied in \cite{basu2006medical,getreuer2011rician,martin2013tgvdti} for single MR images and DTI. Coil sensitivities could be included either by knowing them in advance, or by simultaneous estimation as in \cite{knoll2012parallel}. Either way, significant complexity is introduced into the model, and for the present work, we are content with the simple $L^2$ model.

We may also consider, as is often the case, and as was done with TGV in \cite{tuomov-dtireg}, the linearised model
\begin{equation}
    \label{eq:dti-recons-lin1}
    \min_{u \geqq 0}~ 
    \norm{f-u}^2 + \alpha R(u),
\end{equation}
where, for each $x \in \Omega$, $f(x)$ is solved by regression for $u(x)$ from the system of equations \eqref{eq:stejskal-tanner} with $s_j(x)=\hat s_j(x)$. Further, as in \cite{ipmsproc}, we may also consider
\begin{equation}
    \label{eq:dti-recons-lin2}
    \min_{u \geqq 0}~ \sum_{j=1}^N \frac{1}{2} \norm{g_j-A_j u}^2 + \alpha R(u),
\end{equation}
with $[A_j u](x) \defeq -\iprod{b_j}{u(x)b_j}$, and $g_j(x) \defeq \log(\hat s_j(x)/\hat s_0(x))$. In both of these linearised models, the assumption of Gaussian noise is in principle even more remote from the truth than in the nonlinear model \eqref{eq:dti-recons-nl}.
We will employ \eqref{eq:dti-recons-lin1} and \eqref{eq:dti-recons-nl} as benchmark models.

We want to further simplify the model, and forgo with accurate noise modelling. After all, we often do not know the real noise model for the data available in practice. It can be corrupted by process artefacts from black-box algorithms in the MRI devices. This problem of black box devices has been discussed extensively in \cite{pan2009why}, in the context of Computed Tomography. Moreover, as we have discussed above, even without such artefacts, the correct model may be difficult to realise numerically. 
So we might be best off choosing the least assuming model of all -- that of error bounds. This is what we propose in the reconstruction model
\begin{equation}
    \label{eq:dti-recons-constr}
    \min_u~ R(u)
    \quad
    \text{subject to}
    \quad
    \begin{array}[t]{l}
        u \geqq 0,\\
        g_j^l \leqslant A_j u \leqslant g_j^u,\quad \L^n\text{-a.e.},\ %
        (j=1,\ldots,N).
    \end{array}
\end{equation}
Here $g_j^l \defeq \log(\hat s_j^l/\hat s_0^u)$ and $g_j^u \defeq \log(\hat s_j^u/\hat s_0^l)$, $g_j^l, g_j^u \in L^2(\Omega)$, are our upper and lower bounds on $g_j$ that we derive from the data.

\subsection{Choice of the regulariser $R$}

A prototypical regulariser in image processing is the total variation, first studied in this context in \cite{Rud1992}. It can be defined for a symmetric tensor field $u \in L^1(\Omega; \Sym^k(\R^m))$ as
\[
    \begin{split}
    \TV(u)
    & \defeq \norm{Eu}_{\Meas(\Omega; \Sym^{k+1}(\R^m))}
    \\
    & \defeq \sup\left\{
        \int_\Omega \iprod{\divergence \phi(x)}{u(x)} \d x
        \middle|
        \begin{array}{l}
        \phi \in C_c^\infty(\Omega; \Sym^{k+1}(\R^m))
        \\
        \sup_x \norm{\phi(x)}_F \le 1
        \end{array}
        \right\}.
    \end{split}
\]
Observe that for scalar or vector fields, i.e., the cases $k=0,1$, we have $\Sym^0(\R^m)=\Tensor^0(\R^m)=\R$, and $\Sym^1(\R^m)=\Tensor^1(\R^m)=\R^m$. Therefore, for scalars in particular, this gives the usual isotropic total variation
\[
    \TV(u) = \norm{Du}_{\Meas(\Omega))}.
\]

%
%

Total generalised variation was introduced in \cite{bredies2009tgv} as a higher-order extension of $\TV$. Following \cite{tuomov-dtireg}, the second-order variant may be defined using the differentiation cascade formulation for symmetric tensor fields $u \in L^1(\Omega; \Sym^{k}(\R^m))$ as the marginal
\begin{equation}
    \label{eq:tgv2-cascade}
    \TGV^2_{(\beta,\alpha)}(u) \defeq
        \min\{
            \Phi_{(\beta,\alpha)}(u, w)
            \mid
            w \in L^1(\Omega; \Sym^{k+1}(\R^m))
            \}
\end{equation}
for
\[
    \Phi_{(\beta,\alpha)}(u, w) \defeq
            \alpha \norm{E u - w}_{F,\Meas(\Omega; \Sym^{k+1}(\R^m))}
            \\ \phantom{\Bigl(}
            +\beta \norm{E w}_{F,\Meas(\Omega; \Sym^{k+2}(\R^m))}.
\]

It turns out that the standard BV norm
\[
    \norm{u}_{\BVspace(\Omega; \Sym^k(\R^m))}
    \defeq \norm{u}_{L^1(\Omega; \Sym^k(\R^m))}+\TV(u)
\]
and the ``BGV norm'' \cite{bredies2009tgv}
\[
    \norm{u}' \defeq \norm{u}_{L^1(\Omega; \Sym^k(\R^m))}+\TGV^2_{(\beta,\alpha)}(u)
\]
are topologically equivalent norms \cite{l1tgv,sampta2011tgv} on $\BVspace(\Omega; \Sym^k(\R^m))$, yielding the same convergence results for TGV regularisation as for TV regularisation. 
The geometrical 
regularisation behaviour is however different, and TGV tends to avoid the staircasing observed in TV regularisation.

Regarding topologies, we say that a sequence $\{u^i\}$ in $\BVspace(\Omega; \Sym^k(\R^m))$ converges \term{weakly*} to $u$, if $u^i \to u$ strongly in $L^1$, and $Eu^i \weaktostar Eu$ weakly* as Radon measures \cite{ambrosio2000fbv,temam1985mpp,tuomov-dtireg}. The latter is characterised as $\int_\Omega \iprod{\divergence \phi(x)}{u^i(x)} \d x \to  \int_\Omega \iprod{\divergence \phi(x)}{u(x)} \d x$ for all $\phi \in C_c^\infty(\Omega; \Sym^{k+1}(\R^m))$.

\subsection{Compact subspaces}

Now, for a weak* lower semi-continuous seminorm $R$ on $\BVspace(\Omega; \Sym^k(\R^m))$, let us set
\[
    \BVspace_{0,R}(\Omega; \Sym^k(\R^m))
    \defeq
    \BVspace(\Omega; \Sym^k(\R^m)) / \ker R.
\]
That is, we identify elements $u, \tilde u \in \BVspace(\Omega; \Sym^k(\R^m))$, such that
$R(u - \tilde u)=0$. 
Now $R$ is a norm on $\BVspace_{0,R}(\Omega; \Sym^k(\R^m))$; compare, e.g., \cite{meyer2002oscillating} for the case of $R=\TV$. 

Suppose
\[
    \norm{u}' \defeq \norm{u}_{L^1(\Omega)} + R(u)
\]
is a norm on $\BVspace(\Omega; \Sym^k(\R^m))$, equivalent to the standard norm.
If also the \emph{$R$-Sobolev-Korn-Poincaré inequality}
\begin{equation}
    \label{eq:r-poincare}
    \inf_{R(v)=0} \norm{u-v}_{L^1(\Omega)} \le C R(u)
\end{equation}
holds, we may then bound
\begin{multline}
    \notag
    \inf_{R(v)=0} \norm{u-v}_{\BVspace(\Omega; \Sym^k(\R^m))}
    \le
    \inf_{R(v)=0} C' \norm{u-v}'
    \\
    =
    \inf_{R(v)=0} C' \bigl( \norm{u-v}_{L^1(\Omega)} + R(u-v)\bigr)
    \le C' (1+C)R(u).
\end{multline}
By the weak* lower semicontinuity of the BV-norm, and the weak* compactness of the unit ball in $\BVspace(\Omega; \Sym^k(\R^m))$---we refer to \cite{ambrosio2000fbv} for these and other basic properties of BV-spaces---we may thus find a representative $\tilde u$ in the $\BVspace_{0,R}(\Omega; \Sym^k(\R^m))$ equivalence class of $u$, satisfying
\[
    \norm{\tilde u}_{\BVspace(\Omega; \Sym^k(\R^m))} \le C' (1+C)R(u).
\]
Again using the weak* compactness of the unit ball in $\BVspace(\Omega; \Sym^k(\R^m))$, and the weak* lower semicontinuity of $R$, it follows that the sets
\[
    \lev_a R \defeq \{ u \in \BVspace_{0,R}(\Omega; \Sym^k(\R^m)) \mid R(u) \le a \},
    \quad (a>0),
\]
are weak* compact in $\BVspace_{0,R}(\Omega; \Sym^k(\R^m))$, in the topology inherited form $\BVspace(\Omega; \Sym^k(\R^m))$. Consequently, they are strongly compact subsets of $L^1(\Omega; \Sym^k(\R^m))$. This feature is crucial for the application of the regularisation theory in Banach lattices above.

On a connected domain $\Omega$, in particular
\[
    \BVspace_{0,\TV}(\Omega) \simeq \left\{ u \in \BVspace(\Omega) \middle| \int_\Omega u \d x=0 \right\}.
\]
That is, the space consists of zero-mean functions. Then $u \mapsto \norm{Du}_{\Meas(\Omega; \R^m)}$ is a norm on $\BVspace_{0,\TV}(\Omega)$ \cite{meyer2002oscillating}, and this space is weak* compact. In particular, the sets $\lev_a \TV$ are compact in $L^1(\Omega)$.

More generally, we know from \cite{bredies2014symmetric} that on a connected domain $\Omega$, $\ker \TV$ consists of $\Sym^k(\R^m)$-valued polynomials of maximal degree $k$. By extension, $\ker \TGV^2$ consists of $\Sym^k(\R^m)$-valued polynomials of maximal degree $k+1$. In both cases, \eqref{eq:r-poincare}, weak* lower semicontinuity of $R$, and the equivalence of $\norm{\freevar}'$ to $\norm{\freevar}_{\BVspace(\Omega; \Sym^k(\R^m))}$ hold by the results in \cite{bredies2014symmetric,sampta2011tgv,temam1985mpp}. Therefore, we have proved the following.

\begin{lemma}
    Let $\Omega \subset \R^m$ and $k \ge 0$. Then the sets $\lev_a \TV$ and $\lev_a \TGV^2$ are weak* compact in $\BVspace(\Omega; \Sym^k(\R^m))$ and strongly compact in $L^1(\Omega; \Sym^k(\R^m))$.
\end{lemma}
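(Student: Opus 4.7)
The plan is to apply the abstract argument already carried out in the paragraphs immediately preceding the lemma, checking its hypotheses for the two specific choices $R = \TV$ and $R = \TGV^2_{(\beta,\alpha)}$. So the work reduces to the verification of three properties for each regulariser: (i) weak* lower semicontinuity of $R$ on $\BVspace(\Omega; \Sym^k(\R^m))$, (ii) the $R$-Sobolev-Korn-Poincar\'e inequality \eqref{eq:r-poincare}, and (iii) the equivalence of $\norm{\freevar}'$ with the BV norm on $\BVspace(\Omega; \Sym^k(\R^m))$.

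First I would invoke \cite{bredies2014symmetric} to identify $\ker \TV$ on $\Sym^k(\R^m)$-valued fields with $\Sym^k(\R^m)$-valued polynomials of degree at most $k$, and, using the cascade definition \eqref{eq:tgv2-cascade}, $\ker \TGV^2_{(\beta,\alpha)}$ with $\Sym^k(\R^m)$-valued polynomials of degree at most $k+1$. Both kernels are finite-dimensional, so the quotient $\BVspace_{0,R}(\Omega; \Sym^k(\R^m))$ is well-defined and $R$ is a norm on it. Weak* lower semicontinuity of $\TV$ follows directly from its dual definition and the $L^1$-to-Radon-measure duality used to define weak* convergence in BV; for $\TGV^2_{(\beta,\alpha)}$, weak* lower semicontinuity comes from the same duality together with the marginalisation structure of \eqref{eq:tgv2-cascade}, combined with a diagonal/infimum argument as in \cite{bredies2009tgv,sampta2011tgv}.

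Next I would quote the Sobolev-Korn-Poincar\'e inequality \eqref{eq:r-poincare} for $R=\TV$ from \cite{temam1985mpp} in the symmetric tensor setting and for $R=\TGV^2_{(\beta,\alpha)}$ from \cite{sampta2011tgv,bredies2014symmetric}; the latter also yields the topological equivalence of $\norm{\freevar}'$ and $\norm{\freevar}_{\BVspace(\Omega; \Sym^k(\R^m))}$ in both cases. Having secured these hypotheses, the displayed chain of inequalities just before the lemma produces, for each equivalence class $[u] \in \BVspace_{0,R}$, a representative $\tilde u$ with $\norm{\tilde u}_{\BVspace(\Omega; \Sym^k(\R^m))} \le C'(1+C) R(u)$. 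Hence, if $R(u)\le a$, the representative lies in a fixed norm ball of $\BVspace(\Omega; \Sym^k(\R^m))$, which is weak* sequentially compact by the standard BV compactness result in \cite{ambrosio2000fbv}. Weak* lower semicontinuity of $R$ then makes the level set weak* closed in the quotient topology inherited from BV, hence weak* compact, giving the first assertion.

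For the second assertion I would invoke the compact embedding $\BVspace(\Omega; \Sym^k(\R^m)) \hookrightarrow L^1(\Omega; \Sym^k(\R^m))$ (again from \cite{ambrosio2000fbv}), applied to the bounded set of representatives. Any sequence in $\lev_a R$ then admits a subsequence whose BV representatives converge strongly in $L^1$, and weak* lower semicontinuity passes the $R$-bound to the limit, so the limit still lies in $\lev_a R$. The main obstacle is really only a bookkeeping one: taking care that the compactness in the quotient is correctly lifted via the bounded-representative construction, and that the kernel-quotienting is compatible with weak* convergence in BV. All the genuinely analytic ingredients, in particular \eqref{eq:r-poincare}, are off-the-shelf from \cite{temam1985mpp,sampta2011tgv,bredies2014symmetric}.
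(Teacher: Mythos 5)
Your proposal follows essentially the same route as the paper: the lemma is obtained exactly as you describe, by feeding the kernel characterisations and the Sobolev--Korn--Poincar\'e inequality from \cite{bredies2014symmetric,sampta2011tgv,temam1985mpp} into the bounded-representative construction of the preceding paragraphs, and then using weak* compactness of the BV unit ball together with weak* lower semicontinuity of $R$ (the paper's notion of weak* convergence in BV already contains strong $L^1$ convergence, so your appeal to the compact embedding is the same step in different words). No gaps; this matches the paper's argument.
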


Now, in the above cases, $\ker R$ is finite-dimensional, and we may write
\[
    \BVspace(\Omega; \Sym^k(\R^m))
    \simeq \BVspace_{0,R}(\Omega; \Sym^k(\R^m)) \oplus \ker R.
\]
Denoting by
\[
    B_X(r) \defeq \{x \in X \mid \norm{x} \le r\},
\]
the closed ball of radius $r$ in a normed space $X$, we obtain by the finite-dimensionality of $\ker R$ the following result.

\begin{proposition}
    \label{prop:R}
    Let $\Omega \subset \R^m$ and $k \ge 0$. Pick $a>0$.
    Then the sets
    \[
        V \defeq \lev_a R \oplus B_{\ker R}(a)
    \]
    for $R=\TV$ and $R=\TGV^2$ are weak* compact in $\BVspace(\Omega; \Sym^k(\R^m))$ and strongly compact in $L^1(\Omega; \Sym^k(\R^m))$.
\end{proposition}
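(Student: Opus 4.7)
The plan is to decompose $V$ as the image of a product of two compact sets under the continuous addition map, exploiting that $\ker R$ is finite-dimensional.

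First I would invoke the preceding lemma to conclude that $\lev_a R$ is weak* compact in $\BVspace_{0,R}(\Omega; \Sym^k(\R^m))$, viewed as a subspace of $\BVspace(\Omega; \Sym^k(\R^m))$, and strongly compact in $L^1(\Omega; \Sym^k(\R^m))$. Next, I would note that by the structural results cited from \cite{bredies2014symmetric} and the cascade characterisation, $\ker \TV$ and $\ker \TGV^2$ consist of symmetric tensor-valued polynomials of maximal degree $k$ and $k+1$ respectively, which constitute \emph{finite-dimensional} subspaces of $\BVspace(\Omega; \Sym^k(\R^m))$. On any finite-dimensional normed space the weak* (inherited) topology, the strong $L^1$ topology and the norm topology all coincide, and closed norm-balls are compact; hence $B_{\ker R}(a)$ is compact in both topologies of interest.

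Then I would consider the addition map
\[
    \sigma \colon \BVspace_{0,R}(\Omega; \Sym^k(\R^m)) \times \ker R \to \BVspace(\Omega; \Sym^k(\R^m)),
    \quad
    \sigma(u_0, p) \defeq u_0 + p.
\]
This is the restriction of ordinary addition on a Banach space and is therefore continuous jointly in both the weak* topology (addition being sequentially continuous for the BV weak* topology, with the $\ker R$ factor contributing only norm-convergent perturbations) and the strong $L^1$ topology. The product $\lev_a R \times B_{\ker R}(a)$ is compact in the corresponding product topology in each case, and $V = \sigma(\lev_a R \times B_{\ker R}(a))$ is the continuous image of this compact set. Hence $V$ is weak* compact in $\BVspace(\Omega; \Sym^k(\R^m))$ and strongly compact in $L^1(\Omega; \Sym^k(\R^m))$, as claimed.

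The only subtle point is the interaction of the two topologies in the addition step: one must be careful that weak* convergence $u_0^i \weaktostar u_0$ in $\BVspace_{0,R}$ together with norm convergence $p^i \to p$ in the finite-dimensional $\ker R$ implies weak* convergence $u_0^i + p^i \weaktostar u_0 + p$ in $\BVspace$. This is however immediate from the linearity of the BV weak* topology and the fact that norm convergence trivially implies weak* convergence, so no real obstacle arises; the proof is essentially an assembly of the preceding lemma with elementary facts about finite-dimensional subspaces.
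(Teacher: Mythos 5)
Your proposal is correct and follows essentially the same route as the paper, which derives the result in one line from the direct-sum decomposition $\BVspace \simeq \BVspace_{0,R} \oplus \ker R$, the compactness of $\lev_a R$ from the preceding lemma, and the finite-dimensionality of $\ker R$ (so that $B_{\ker R}(a)$ is compact and the sum of the two compact sets under the continuous addition map is compact). Your explicit check that weak* convergence of the $\BVspace_{0,R}$ component plus norm convergence of the finite-dimensional component yields weak* convergence of the sum is exactly the detail the paper leaves implicit, and it is handled correctly.
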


The next result summarises Theorem~\ref{thm_fixed} and Proposition~\ref{prop:R}.

\begin{theorem}
    With $U=L^1(\Omega; \Sym^k(\R^m))$,
    let the operator $A: U \to F$ be linear, continuous and injective. Let $f^l_n$ and $f^u_n$ be sequences of lower and upper bounds for the right-hand such that 
    \begin{equation*}
\begin{aligned}
&f^l_n \colon f^l_{n+1} \geqslant f^l_n, &		\quad 	&f^u_n \colon f^u_{n+1} \leqslant f^u_n, \\	
&f^l_n \leqslant f \leqslant f^u_n, 	      &	\quad 	&\| f^l_n - f^u_n \| \to 0 \quad \text{as $n \to \infty$}.
\end{aligned}
\end{equation*}
    \noindent Supposing that there are no errors in the operator $A$ and the exact solution $\bar u$ exists, define the feasible set as follows
    \begin{equation*}
        U_n = \{ u \in U \colon \quad f^l_n \leqslant_F A  u \leqslant_F f^u_n \}.
    \end{equation*}
    Decomposing $u \in U$ as $u=u_0+u^\perp$ with $u^\perp \in \ker R$, suppose 
    \begin{equation}
        \label{eq:u-perp-bound}
        u \in U_n \implies \norm{u^\perp} \le a
    \end{equation}
    for some constant $a>0$, then for $R=\TV$ and $R=\TGV^2$, the sequence 
    \begin{equation*}
    u_n = \argmin_{u \in U_n} R(u)
    \end{equation*}
    \noindent converges strongly in $L^1(\Omega; \Sym^k(\R^m))$ to the exact solution $\bar u$ and $R(u_n) \to R(\bar u)$.
\end{theorem}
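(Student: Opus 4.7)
The strategy is to adapt the argument of Theorem~\ref{thm_fixed} to the present setting, where $R$ is only a seminorm with nontrivial finite-dimensional kernel (polynomials of bounded degree), so that its level sets in $U = L^1(\Omega; \Sym^k(\R^m))$ are \emph{not} strongly compact and Theorem~\ref{thm_fixed} does not apply out of the box. The rescuing ingredient is Proposition~\ref{prop:R}, which says that sets of the form $V = \lev_{a'} R \oplus B_{\ker R}(a')$ are strongly compact in $L^1$. Combining this with the a priori kernel bound~\eqref{eq:u-perp-bound} for every feasible $u$ provides a substitute for the compactness hypothesis in Theorem~\ref{thm_strong}, and allows us to replicate the contradiction argument of Theorem~\ref{thm_fixed} verbatim.

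Here is how I would carry this out. First, since $\bar u \in U_n$ for every $n$, the minimality of $u_n$ gives $R(u_n) \le R(\bar u)$ for all $n$. Decomposing $u_n = u_{n,0} + u_n^\perp$ with $u_n^\perp \in \ker R$, hypothesis~\eqref{eq:u-perp-bound} yields $\norm{u_n^\perp} \le a$. Setting $a' \defeq \max(a, R(\bar u))$, it follows that the entire sequence $\{u_n\}$ lies in the $L^1$-strongly compact set $V = \lev_{a'} R \oplus B_{\ker R}(a')$ provided by Proposition~\ref{prop:R}. Arguing by contradiction as in Theorem~\ref{thm_fixed}, if $u_n \not\to \bar u$ in $L^1$, a subsequence $u_{n_k}$ is bounded away from $\bar u$ by some $\varepsilon > 0$; strong compactness of $V$ then yields a further subsequence converging in $L^1$ to some $u_0$. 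The same lattice estimate as in Theorem~\ref{thm_fixed} (with $f_{\delta_n} \defeq (f^l_n + f^u_n)/2$ and $\delta_n \defeq \norm{f^l_n - f^u_n}/2$) gives $\norm{A u_{n_k} - A\bar u} \le 2\delta_{n_k} \to 0$; continuity of $A$ and strong $L^1$ convergence force $A u_0 = A \bar u$, and injectivity of $A$ then gives $u_0 = \bar u$, a contradiction.

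For the $R$-value convergence, combine the minimality bound $R(u_n) \le R(\bar u)$ with lower semicontinuity along the now strongly $L^1$-convergent sequence $u_n \to \bar u$. The bound $R(u_n) \le R(\bar u)$ together with $\norm{u_n^\perp} \le a$ produces a uniform BV-bound (via the $R$-Sobolev--Korn--Poincar\'e inequality~\eqref{eq:r-poincare} and the equivalence of $\norm{\freevar}'$ with the BV-norm), so one may extract a weak* convergent subsequence in $\BVspace$ whose limit must be $\bar u$; weak* lower semicontinuity of $R$ then delivers $\liminf_n R(u_n) \ge R(\bar u)$, so that $R(u_n) \to R(\bar u)$.

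\textbf{Main obstacle.} The only genuine conceptual issue, beyond bookkeeping, is the seminormity of $R$: without~\eqref{eq:u-perp-bound} there would be no control on the kernel component of a minimising sequence, and the argument would collapse since one could not confine $\{u_n\}$ to a compact set. Marrying Proposition~\ref{prop:R} to the scheme of Theorem~\ref{thm_fixed} through hypothesis~\eqref{eq:u-perp-bound} is precisely the content of the result.
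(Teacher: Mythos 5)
Your proposal is correct and follows essentially the same route as the paper: decompose the minimisers, use minimality to bound $R(u_{0,n})\le R(\bar u)$ and hypothesis~\eqref{eq:u-perp-bound} to bound the kernel component, place the whole sequence in the compact set $V$ of Proposition~\ref{prop:R}, and then invoke the argument of Theorem~\ref{thm_fixed}. The paper states this more tersely (it simply notes that Proposition~\ref{prop:R} supplies ``the necessary compactness'' for Theorem~\ref{thm_fixed}), whereas you unfold the contradiction argument explicitly, but the content is identical.
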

\begin{proof}
    With the decomposition $u_n = u_{0,n} + u_n^\perp$, where $u_n^\perp \in \ker R$,
    we have $u_{0,n} \in \lev_a R$ for suitably large $a>0$ through
    \[
        R(u_{0,n}) = R(u_n) = \min_{u' \in U_n} R(u') \le R(\bar u).
    \]
    The assumption \eqref{eq:u-perp-bound} bounds $\norm{u_n^\perp} \le a$. Thus $u_n \in V$ for $V$ as in Proposition \ref{prop:R}. The proposition thus implies the necessary compactness in $U=L^1(\Omega; \Sym^k(\R^m))$ for the application of Theorem \ref{thm_fixed}.
\end{proof}

\begin{remark}
    The condition \eqref{eq:u-perp-bound} simply says for $R=\TV$ that the data has to bound the solution in mean. This is very reasonable to expect for practical data; anything else would be very non-degenerate. For $R=\TGV^2$ we also need that the data bounds the entire affine part of the solution. Again, this is very likely for real data.
    Indeed, in DTI practice, with at least 6 independent diffusion sensiting gradients, $A$ is an invertible or even over-determined linear operator. In that typical case, the bounds $f^l_n$ and $f^u_n$ will be translated into $U_n$ being a bounded set.
\end{remark}

\section{Solving the optimisation problem}
\label{sec:optimisation}

\subsection{The Chambolle--Pock method}

The Chambolle--Pock algorithm is an inertial primal-dual back\-ward-back\-ward
splitting method, classified in \cite{esser2010general} as the modified primal-dual hybrid
gradient method (PDHGM). It solves the minimax problem
\begin{equation}
    \label{eq:saddle}
    \min_x \max_y\ G(x) + \iprod{Kx}{y} - F^*(y),
\end{equation}
where $G: X \to \extR$ and $F^*: Y \to \extR$ are convex, proper, lower semicontinuous functionals on (finite-dimensional) Hilbert spaces $X$ and $Y$. The operator $K\colon X \to Y$ is linear, although an extension of the method to nonlinear $K$ has recently been derived \cite{tuomov-nlpdhgm}. The PDHGM can also be seen as a preconditioned ADMM (alternating
directions method of multipliers); we refer to \cite{esser2010general,setzer2011operator,tuomov-big-images} for reviews of optimisation methods popular in image processing.
For step sizes $\tau,\sigma>0$, and an over-relaxation parameter $\omega>0$, each iteration of the algorithm consists of the updates
\begin{subequations}
    \label{eq:cp}
    \begin{align}
        u_{i+1} & \defeq (I+\tau \subdiff G)^{-1}(u_i - \tau K^* y_{i}),
        \\
        \bar u_{i+1} & \defeq u_{i+1} + \omega(u_{i+1}-u_i),
        \\
        y_{i+1} & \defeq (I+\sigma \subdiff F^*)^{-1}(y_i + \sigma K \bar u_{i+1}).
    \end{align}
\end{subequations}
We should remark that the order of the primal ($u$) and dual ($y$) updates here is reversed from the original presentation in \cite{chambolle2010first}. The reason is that when reordered, the updates can, as discovered in \cite{he2012convergence}, be easily written in a proximal point form.

The first and last update are the backward (proximal) steps for the primal ($x$) and dual ($y$) variables, respectively, keeping the other fixed. However, the dual step includes some ``inertia'' or over-relaxation, as specified by the parameter $\omega$. Usually $\omega=1$, which is required for convergence proofs of the method. If $G$ or $F^*$ is uniformly convex, by smartly choosing for each iteration the step length  parameters $\tau,\sigma$, and the inertia $\omega$, the method can be shown to have convergence rate $O(1/N^2)$. This is similar to Nesterov's
optimal gradient method \cite{nesterov1983method}. In the general case the rate is $O(1/N)$. In practice the method produces visually pleasing solutions in rather few iterations, when applied to image processing problems.

In implementation of the method, it is crucial that the resolvents $(I+\tau \subdiff G)^{-1}$ and $(I+\sigma \subdiff F^*)^{-1}$ can be computed quickly. We recall that they may be written
as
\[
    (I+\tau \subdiff G)^{-1}(u)
    = \argmin_{u'}\left\{\frac{\norm{u'-u}^2}{2\tau} + G(u')\right\}.
\]
Usually in applications, these computations turn out to be simple projections or linear operations -- or the soft-thresholding operation for the $L_1$-norm.

As a further implementation note, since the algorithm \eqref{eq:cp} is formulated in Hilbert spaces (see however \cite{hohage2014generalization}), and we work in the Banach space $\BVspace(\Omega; \Sym^2(\R^3))$, we have to discretise our problems before application of the algorithm. We do this by simple forward-differences discretisation of the operator $E$ with cell width $h=1$ on a regular rectangular grid corresponding to the image voxels.

\subsection{Implementation of deterministic constraints}

We now reformulate the problem \eqref{eq:dti-recons-constr} of DTI imaging with deterministic error bounds in the form \eqref{eq:saddle}. Suppose we have some upper and lower bounds  $s_j^l\leqslant s_j\leqslant s_j^u$ on the DWI signals $s_j$, ($j=0,\ldots,N$). Then the bounds for $g_j=\log(s_j/s_0)$ are
\begin{equation}
    \label{eq:gj}
    g_j^l=\log(s_j^l/s_0^u);\quad g_j^u=\log(s_j^u/s_0^l),
    \quad (j=1,\ldots,N),
\end{equation}
because $g_j$ is monotone in regards to $s_j$.
We are thus trying to solve
\begin{equation}
    \label{eq:detmin-num}
    u=\argmin_{u'\in U=\cap U^j} R(u')
\end{equation}
where
\begin{equation*}
    U^j=\{u:g_j^l\leqslant A_ju\leqslant g_j^u\}
\end{equation*}

For the ease of notation, we write
\begin{align}
    \notag
    g&=\begin{pmatrix}g_1, & \ldots, & g_N\end{pmatrix}, \quad\text{and} \\
    Au & = \begin{pmatrix} A_1 u_1, & \ldots, & A_N u_N\end{pmatrix},
\end{align}
so that the Stejskal-Tanner equation is satisfied with
\[
    Au=g.
\]
The problem \eqref{eq:detmin-num} may be rewritten as 
\[
    \min_{u'}~ F_0(Au')+R(u'),
\]
for
\[
    \notag F_0(y) = \delta(g^l \leqslant y\leqslant g^r)
\]
with $\delta(g^l \leqslant y\leqslant g^u)$ denoting the indicator function of the convex set $\{y \colon g^l \leqslant y\leqslant g^r\}$. Solving the conjugate
\[
    F_0^*(y)=\begin{cases} \iprod{g^l}{y}, & y<0 \\ \iprod{g^u}{y}, & y\geq 0. \end{cases},
\]
and also writing
\[
    R(u)=R_0(K_0 u)
\]
for some $R_0$ and $K_0$, the problem can further be written in the saddle point form \eqref{eq:saddle} with
\[
    \begin{aligned}
        G(u) & = 0, \\
        K & = \begin{pmatrix} A \\ K_0 \end{pmatrix}, \\
        F^*(y,\psi) & = F_0^*(y) + R_0^*(\psi).
    \end{aligned}
\]
To apply algorithm \eqref{eq:cp}, we need to compute the resolvents of $G_0^*$ and $R_0^*$. For details regarding $R_0^*$ for $R=\TGV_{(\beta,\alpha)}^2$ and $R=\alpha \TV$ in the discretised setting, we refer to \cite{tuomov-dtireg,ipmsproc}; here it suffices to note that for $R=\alpha\TV$, we have $K_0=\alpha E$ and $R_0^*(\phi)$ is the indicator function of the dual ball $\{\phi \mid \sup_x \norm{\phi(x)}_F \le 1\}$. Thus the resolvent $\inv{(I+\tau \subdiff R_0^*)}$ becomes a projection to the dual ball. The case of $R=\TGV_{(\beta,\alpha)}^2$ is similar.
For $F_0^*$ we have
\begin{align}
    \notag
    (I+\tau \subdiff F_0^*)^{-1}(y) & = \argmin_{y'} F_0^*(y')+\frac{|y-y'|^2}{2\tau},
\end{align}
which resolves pointwise at each $\xi \in \Omega$ into the expression
\[
    [(I+\tau \subdiff F_0^*)^{-1}(y)](\xi)=S(y(\xi))
\]
for
\begin{equation*}
    \notag
    S(y(\xi))=\begin{cases} y(\xi)-g^u(\xi)\sigma, & y(\xi)\geq g^u(\xi)\sigma, \\ 0, & g^u(\xi)\sigma\leq y(\xi)\leq g^l(\xi)\sigma, \\ y(\xi)-g^l(\xi)\sigma, &  y(\xi)\leq g^l(\xi)\sigma. \end{cases}
\end{equation*}

Finally, we note that the saddle point system \eqref{eq:saddle} has to have a solution for the Chambolle--Pock algorithm to converge. In our setting, in particular, we need to find error bounds $g^l$ and $g^u$, for which there exists a solution $u$ to 
\begin{equation}
    \label{eq:bound-consistency-num}
    g^l \leqslant Au \leqslant g^u.
\end{equation}
If one uses at most six independent diffusion directions ($N=6$), as we will, then, for any $g$, there in fact exists a solution to $g=Au$. The condition \eqref{eq:bound-consistency-num} becomes $g^l \leqslant g^u$, immediately guaranteed through the monotonicity of \eqref{eq:gj}, and the trivial conditions $s_j^l \leqslant s_j^u$.

We are thus ready to apply the algorithm \eqref{eq:cp} to diffusion tensor imaging with deterministic error bounds. For the realisation of \eqref{eq:cp} for models \eqref{eq:dti-recons-lin1}, \eqref{eq:dti-recons-lin2}, and \eqref{eq:dti-recons-nl}, we refer to \cite{tuomov-dtireg,ipmsproc,escoproc,tuomov-nlpdhgm}.

\section{Experimental results}
\label{sec:experiments}

We now study the efficiency of the proposed reconstruction model in comparison to the different $L^2$-squared models, i.e., ones with Gaussian error assumption. This is based on a synthetic data, for which a ground-truth is available, as well as a real in vivo DTI data set. First we, however, have to describe in detail the procedure for obtaining upper and lower bounds for real data, when we do not know the true noise model, and are unable to perform multiple experiments as required by the theory of Section~\ref{sec:confidence}.

\subsection{Estimating lower and upper bounds from real data}
\label{sec:distr-estim}

As we have already discussed, in practice the noise in the measurement signals $\hat s_j$ is not Gaussian or Rician; in fact we do not know the true noise distribution and other corruptions. Therefore, we have to estimate the noise distribution from the image background. To do this, we require a known correspondence between the measurement, the noise, and the true value. As we have no better assumptions available, the standard one that we use is that of additive noise.
Continuing in the statistical setting of Section~\ref{sec:confidence}, we now describe the procedure, working on discrete images expressed as vectors $\hat f=s_j \in \R^n$ for some fixed $j \in \{0,1,\dots,N\}$. We use superscripts to denote the voxel indices, that is $\hat f=(f^1,\ldots,f^n)$.

In the $i$-th voxel, the measured value $\hat f^i$ is the sum of the true value $f^i$ and additive noise $\nu^i$:
\begin{equation*}
\hat f^i = f^i + \nu^i.
\end{equation*} 
All $\nu^i$ are assumed independent and identically distributed (i.i.d.), but their distribution is unknown. If we did know the true underlying cumulative distribution function $F$ of the noise, we could choose a confidence parameter $\theta \in (0,1)$ and use the cumulative distribution function to calculate $\nu_{\theta/2}, \nu_{1-\theta/2}$ such that\footnote{Recall that for a random variable $X$ with a cumulative distribution function $F$, the quantile function $\inv F$ returns a number $x_\theta=\inv F(\theta)$ such that $P(X \leqslant x_\theta) = \theta$.}
\begin{equation}\label{eq:prob_interval}
P(\nu_{\theta/2} \leqslant \nu^i \leqslant \nu_{1-\theta/2}) = 1-\theta.
\end{equation}

Let us instead proceed non-parametrically, and divide the whole image into two groups of voxels - the ones belonging to the background region and the rest. For simplicity, let the indices $i=1,\ldots,k$, ($k < n$), stand for the background voxels. In this region, we have $f^i = 0$ and $\hat f^i = \nu^i$. Therefore, the background region provides us with a number of independent samples from the unknown distribution of $\nu$.
The Dvoretzky--Kiefer--Wolfowitz inequality \cite{dvoretzky1956,massart1990,lehmann2008testing} states that
\[
    P\bigl(\sup_t \abs{F(t)-F_k(t)} > \epsilon\bigr) \le 2e^{-2k\epsilon^2},
\]
for the empirical cumulative distribution function
\[
    F_k(t) \defeq \frac{1}{k} \sum_{i=1}^k \chi_{(-\infty, \hat f^i]}(t).
\]
Therefore, computing $\nu_{\theta/2}$ and $\nu_{1-\theta/2}$ such that
\begin{equation}\notag
    P_k(\nu_{\theta/2} \leqslant \nu \leqslant \nu_{1-\theta/2}) = 1-\theta,
\end{equation}
we also have
\begin{equation}\notag
    P_k(f^i + \nu_{\theta/2} \leqslant \hat f^i \leqslant f^i + \nu_{1-\theta/2}) = 1-\theta.
\end{equation}
We may therefore use the values
\begin{equation}\label{eq:bounds_alpha}
    \hat f^{l,i} = \hat f^i - \nu_{1-\theta/2}, \quad
    \hat f^{u,i} = \hat f^i - \nu_{\theta/2},
\end{equation}
as lower and upper bounds for the true values $f^i$ outside the background region.

The Dvoretzky--Kiefer--Wolfowitz inequality implies that the interval estimates converge to the true intervals, determined by \eqref{eq:prob_interval}, as the number of background pixels $k$ increases with the image size $n$. This procedure, with large $k$, will therefore provide an estimate of a single-experiment ($m=1$) confidence interval for $f^i$. 
We note that this procedure will, however, not yield the convergence of the interval estimate $[\hat f^{l,i}, \hat f^{u,i}]$ to the true data; for that we would need multiple experiments, i.e., multiple sample images ($m>1$), not just agglomeration of the background voxels into a single noise distribution estimate. In practice, however, we can only afford a single experiment ($m=1$), and cannot go to the limit. 

\subsection{Verification of the approach with synthetic data}
To verify the effectiveness of the considered approach and to compare it to the other models, we use synthetic data. For the ground-truth tensor field $u_{g.t.}$ we take a helix region in a 3D box $100 \times 100 \times 30$, and choose the tensor in each point inside the helix in such a way that the principal eigenvector coincides with the helix direction (Figure \ref{fig:helix}). The helix region is described by the following equations:
\begin{eqnarray*}
&&x=(R+r\cos(\theta))\cos(\phi),\\
&&y=(R+r\cos(\theta))\sin(\phi),\\
&&z=r\sin(\theta)+\phi/\phi_{\max},\\
&&\phi \in [0,\phi_{\max}],\quad r \in [0,r_{\max}], \quad \theta \in [0,2\pi].
\end{eqnarray*}
The vector direction in every point coincides with helix direction:
\begin{equation*}
\vec{r}=\begin{pmatrix}-R\sin(\phi)\\R\cos(\phi)\\1/\phi_{\max}\end{pmatrix}.
\end{equation*}
We take the parameters $R=0.3, \phi_{\max}=4\pi, r_{\max}=0.07$ in this numerical experiment.

\begin{figure}
\centering
\includegraphics[width=0.8\textwidth]{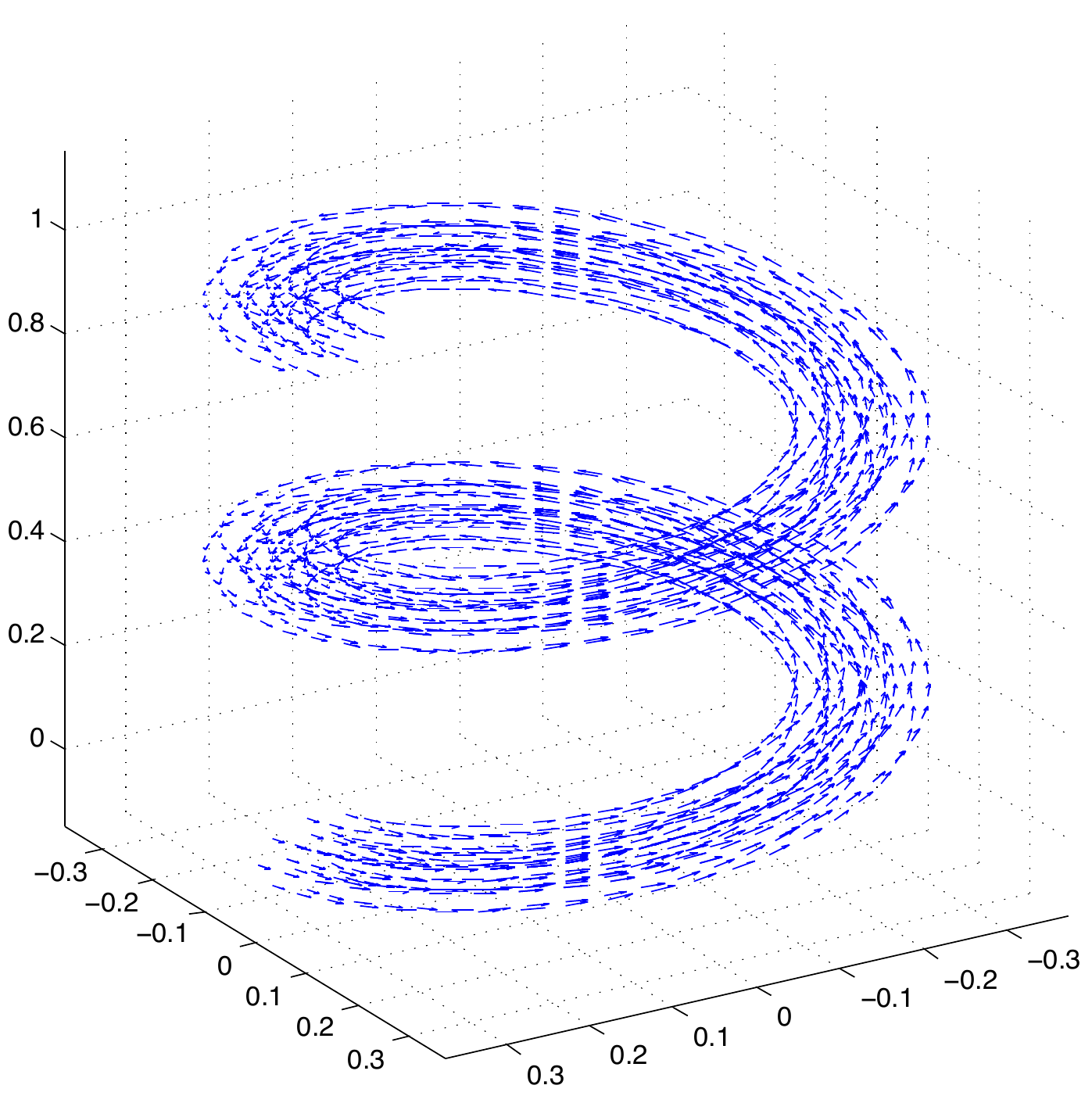}
\caption{Helix vector field for the principal eigenvector of the ground-truth tensor field}
\label{fig:helix}
\end{figure}
We apply the forward operators $T_j(u_{g.t.})$, ($j=0, \ldots, 6$), to obtain the data $s_j(x)$. We then add Rician noise to this data $\bar{s_j}=s_j+\delta$ with $\sigma=2$, which corresponds to $\PSNR\approx 27\mbox{dB}$.

We apply several models for solving the inverse problem of reconstructing $u$: the linear and non-linear $L^2$ approaches \eqref{eq:dti-recons-lin1} and \eqref{eq:dti-recons-nl}, and the constrained problem \eqref{eq:dti-recons-constr}. As the regulariser we use $R=\TGV^2_{(0.9\alpha,\alpha)}$, where the choice $\beta=0.9\alpha$ was made somewhat arbitrarily, however yielding good results for all the models. This is slightly lower than the range $[1, 1.5]\alpha$ discovered in comprehensive experiments for other imaging modalities \cite{tuomov-phaserec,tuomov-tgvlearn}.

For the linear and non-linear $L^2$ models \eqref{eq:dti-recons-lin1} and \eqref{eq:dti-recons-nl}, respectively, the regularisation parameter $\alpha$ is chosen either by a version of the discrepancy principle for inconsistent problems~\cite{TGSYag} or optimally with regard to the $\norm{\freevar}_{F,2}$ distance between the solution and the ground-truth. In case of the discrepancy principle, such an $\alpha$ was chosen that the following equality holds:
\begin{equation}
\Delta \rho(\alpha)=\sum_j||T_j(u)-\bar{s_j}||^2-\tau\sum_j||\bar{s_j}-s_j||^2=0
\end{equation}

We find $\alpha$ by solving this equation numerically using bisection method. We start by finding such $\alpha_1,\alpha_2$ that $\Delta \rho(\alpha_1)>0$ and $\Delta \rho(\alpha_2)<0$. We calculate $\Delta \rho(\alpha_3), \alpha_3=\frac{\alpha_1+\alpha_2}{2}$ and depending on its sign replace either $\alpha_1$ or $\alpha_2$ with $\alpha_3$. We repeat this procedure until the stopping criteria is reached.

As stopping criteria we use $|f(\alpha)|<\epsilon$. We use $\tau=1.05$, $\epsilon=0.01$  for linear and $\tau=1.2$, $\epsilon=0.0001$ for non-linear $L^2$ solution. A value of $\tau$ yielding a reasonable degree of smoothness has been chosen by trial and error, and is different for the non-linear model, reflecting a different non-linear objective in the discrepancy principle. For the constrained problem we calculate $\theta=90\%$, $95\%$, and $99\%$ confidence intervals to generate the upper and lower bounds. We however digress a little bit from the approach of Section~\ref{sec:confidence}. Minding that we do not know the true underlying distribution, which fails to be Rician as illustrated in Figure \ref{fig:noise-illustration}, we do not use it to calculate the confidence intervals, but use the estimation procedure described in Section~\ref{sec:distr-estim}. We stress that we only have a single sample of each signal $s_j$, so are unable to verify any asymptotic estimation properties.

The numerical results are in Table \ref{table:synthetic} and Figures \ref{fig:synthetic-colordir}--\ref{fig:synthetic-fa-peangle}, with the first of the figures showing the colour-coded principal eigenvector of the reconstruction, the second showing the fractional anisotropy and principal eigenvectors, and the last one the errors in the latter two, in a colour-coded manner. All plots are masked to represent only the non-zero region. The field of fractional anisotropy is defined for a field $u$ of 2-tensors on $\Omega \subset \R^m$ as
\[
    \FA_u(x) =\Bigl(\textstyle\sum_{i=1}^m (\lambda_i-\bar \lambda)^2\Bigr)^{1/2}
              \Bigl({\textstyle\sum_{i=1}^m \lambda_i^2}\Bigr)^{-1/2}
        \in [0, 1],
    \quad
    (x \in \Omega),
\]
with $\lambda_1,\ldots,\lambda_m$ denoting the eigenvalues of $u(x)$. It measures how far the ellipsoid prescribed by the eigenvalues and eigenvectors is from a sphere, with $\FA_u(x)=1$ corresponding a full sphere, and $\FA_u(x)=0$ corresponding to a degenerate object not having full dimension.

\begin{table}
\centering
\caption{Numerical results for the synthetic data. For the linear and non-linear $L^2$ the free parameter chosen by the parameter choice criterion is the regularisation parameter $\alpha$, and for the constrained problem it is the confidence interval.}
\label{table:synthetic}
\footnotesize
\begin{tabular}{l|l|l|l|l}
Method&Parameter choice & \parbox[t]{1.6cm}{Frobenius PSNR}& \parbox[t]{1.6cm}{Pr. e.val.\\ PSNR}& \parbox[t]{1.7cm}{Pr. e.vect.\\ angle PSNR}\\ \hline
Regression& & 33.90dB & 25.04dB & 47.86dB\\
Linear $L^2$&Discr. Principle& 32.93dB& 27.81dB&61.89dB\\
Linear $L^2$&Frob. Error-optimal& 34.51dB& 28.42dB&60.93dB\\
Non-linear $L^2$&Discr. Principle& 37.33dB&  27.81dB&61.89dB\\
Non-linear $L^2$&Frob. Error-optimal& 37.44dB& 28.03dB& 61.12dB\\
Constraints&90\%& 32.28dB& 28.86dB& 65.65dB\\
Constraints&95\%& 30.97dB& 28.14dB& 64.80dB\\
Constrains &99\%& 27.86dB& 24.51dB& 61.41dB\\
\end{tabular}
\end{table}

\begin{figure}[!htp]
\centering
\begin{subfigure}[t]{0.23\textwidth}
\includegraphics[width=1\textwidth,bb=20 45 120 220,clip]{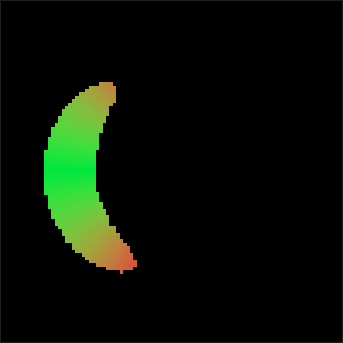}
\subcaption{Ground-truth}
\end{subfigure}
\begin{subfigure}[t]{0.23\textwidth}
\includegraphics[width=1\textwidth,bb=20 42 114 206,clip]{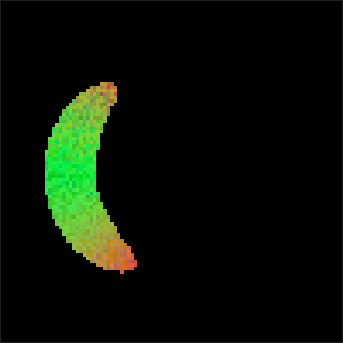}
\subcaption{Regression result}
\end{subfigure}
\begin{subfigure}[t]{0.23\textwidth}
\includegraphics[width=1\textwidth,bb=20 45 120 220,clip]{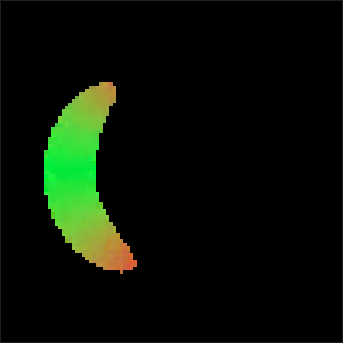}
\subcaption{Linear $L^2$, discrepancy principle}
\end{subfigure}
\begin{subfigure}[t]{0.23\textwidth}
\includegraphics[width=1\textwidth,bb=20 45 120 220,clip]{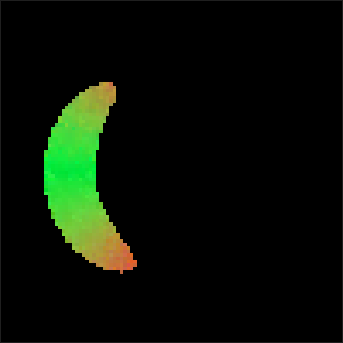}
\subcaption{Linear $L^2$, error-optimal}
\end{subfigure}
\\
\begin{subfigure}[t]{0.23\textwidth}
\includegraphics[width=1\textwidth,bb=20 45 120 220,clip]{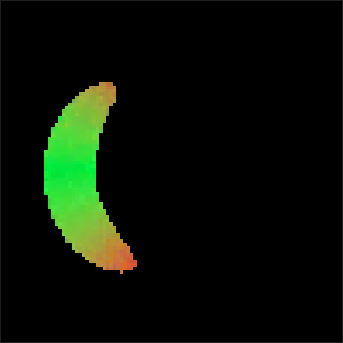}
\subcaption{Non-linear $L^2$, discrepancy principle}
\end{subfigure}
\begin{subfigure}[t]{0.23\textwidth}
\includegraphics[width=1\textwidth,bb=30 59 156 280,clip]{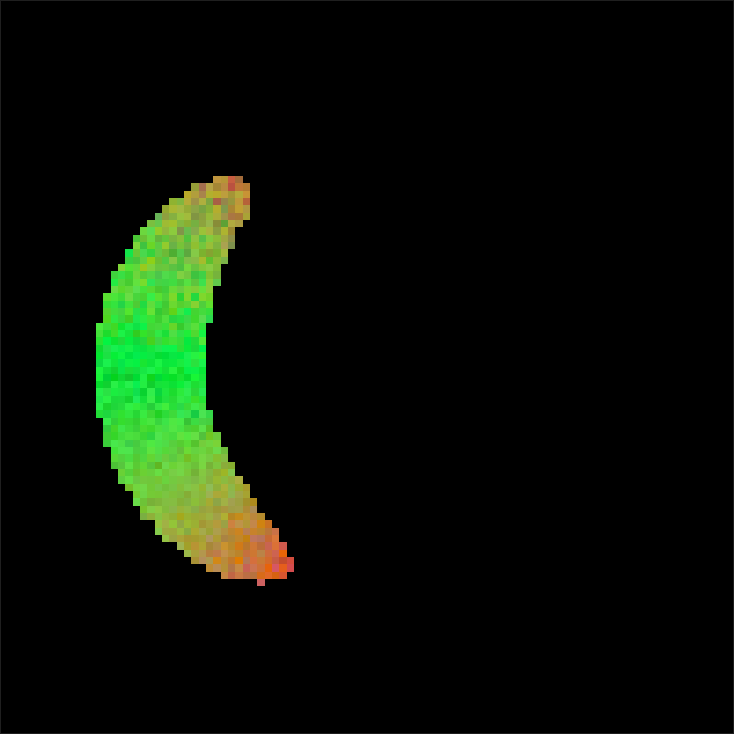}
\subcaption{Non-linear $L^2$, error-optimal}
\end{subfigure}
\begin{subfigure}[t]{0.23\textwidth}
\includegraphics[width=1\textwidth,bb=30 59 156 280,clip]{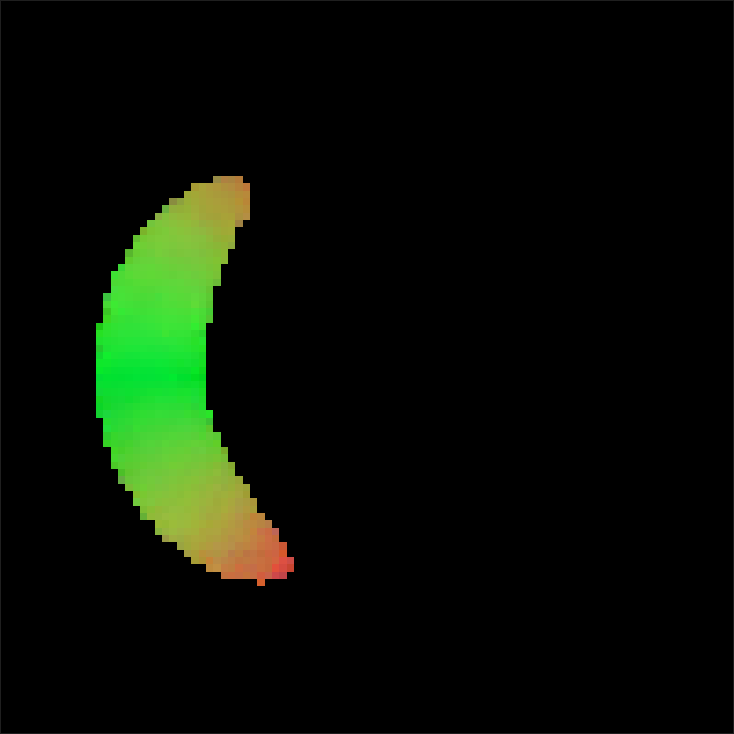}
\subcaption{Constrained, 95\% confidence intervals}
\end{subfigure}
\begin{subfigure}[t]{0.23\textwidth}
\includegraphics[width=1\textwidth,bb=20 45 120 220,clip]{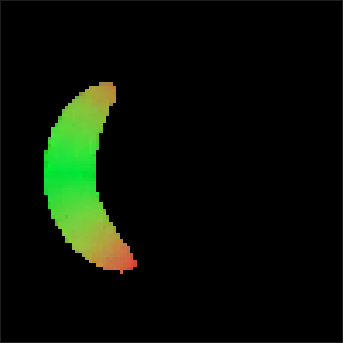}
\subcaption{Constrained, 90\% confidence intervals}
\end{subfigure}
\\
\begin{minipage}[c]{0.8\textwidth}
\caption{Synthetic test data results. (a) ground-truth plot. (b) regression result plot. (c)--(h) Plot of a slice of the solution for $L^2$, non-linear $L^2$ and constrained models. 
The legend on the right indicates the colour-coding of directions of the principal eigenvector plotted.
}
\label{fig:synthetic-colordir}
\end{minipage}%
\begin{minipage}[t]{0.2\textwidth}
\flushright%
\setlength{\w}{\textwidth}
\input{figures/legend-dirplot3d.tikz}
\null
\vfill
\end{minipage}
\end{figure}

\begin{figure}[!htp]
\centering%
\begin{subfigure}[t]{0.23\textwidth}
\includegraphics[width=1\textwidth,bb=40 70 180 330,clip]{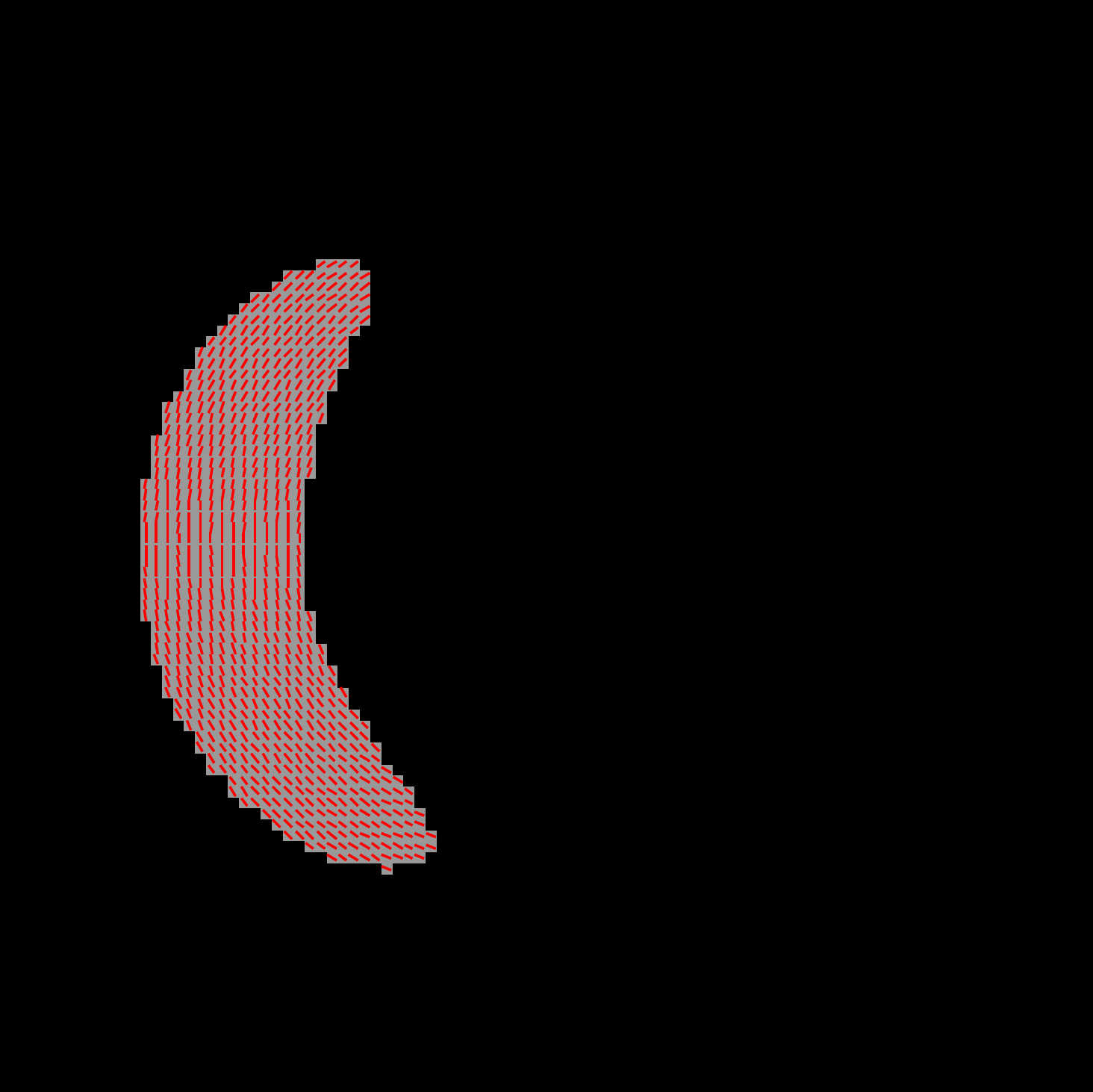}
\subcaption{Ground-truth}
\end{subfigure}
\begin{subfigure}[t]{0.23\textwidth}
\includegraphics[width=1\textwidth,bb=40 70 180 330,clip]{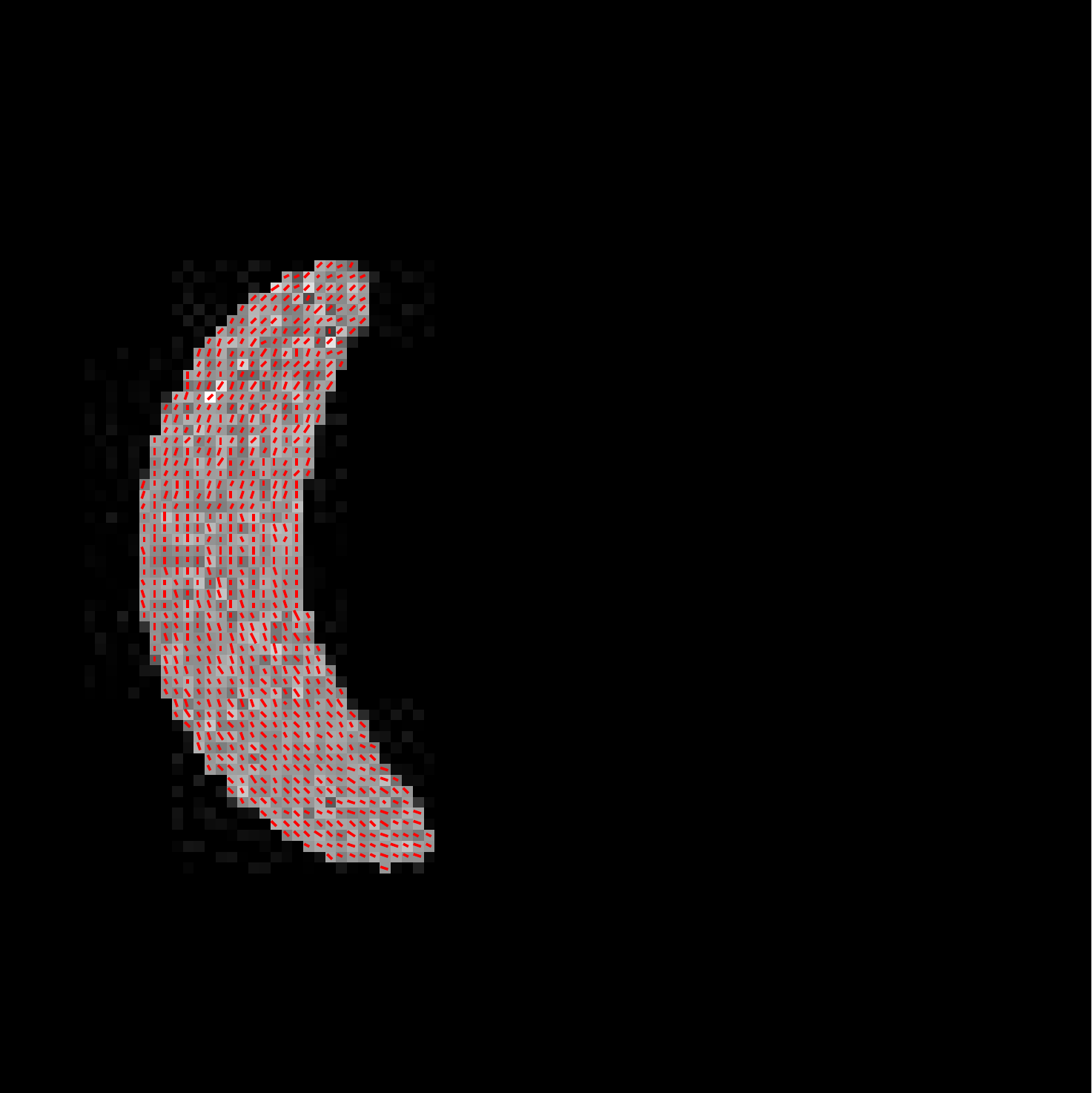}
\subcaption{Regression result}
\end{subfigure}
\begin{subfigure}[t]{0.23\textwidth}
\includegraphics[width=1\textwidth,bb=40 70 180 330,clip]{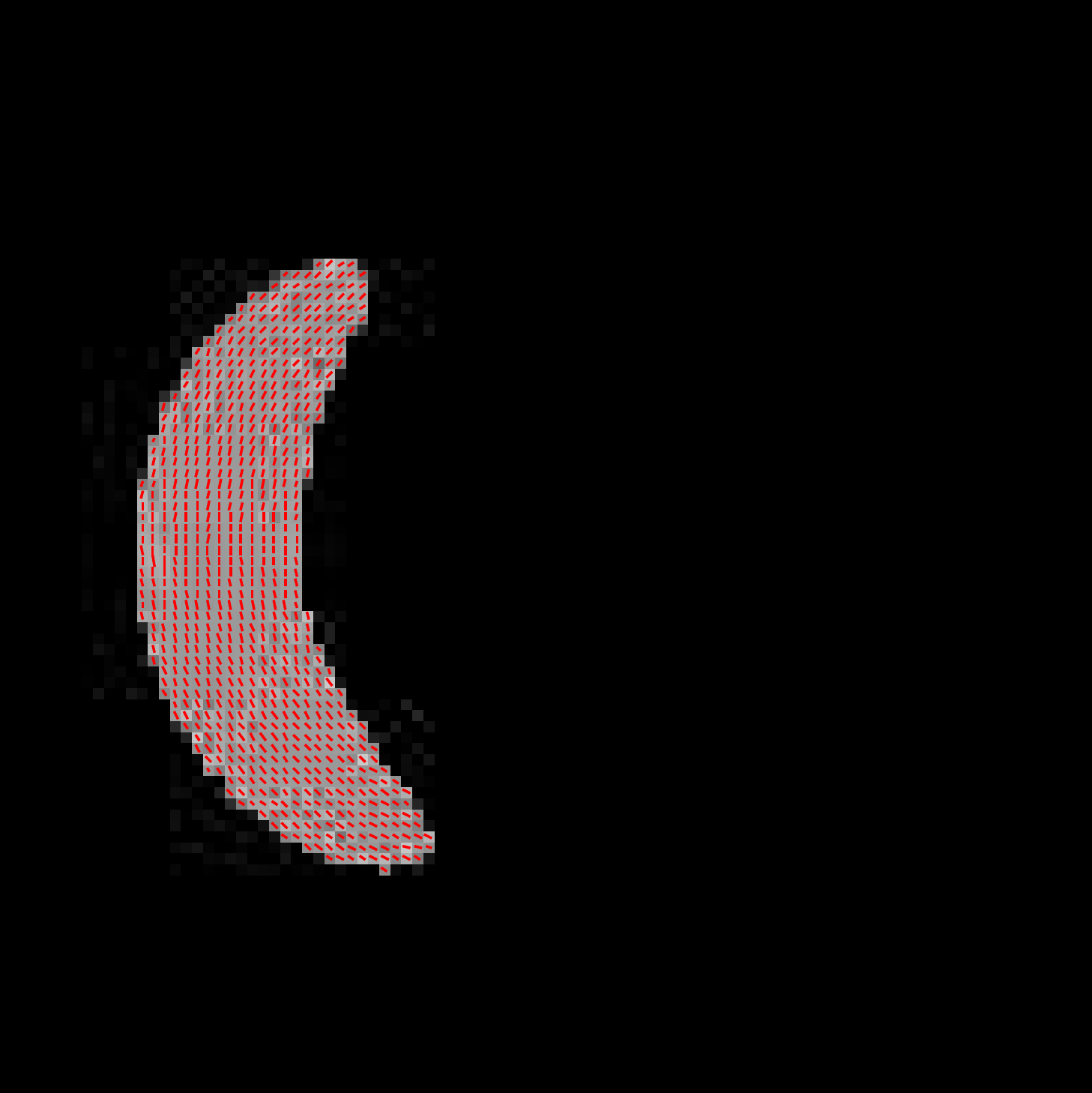}
\subcaption{Linear $L^2$, discrepancy principle}
\end{subfigure}
\begin{subfigure}[t]{0.23\textwidth}
\includegraphics[width=1\textwidth,bb=40 70 180 330,clip]{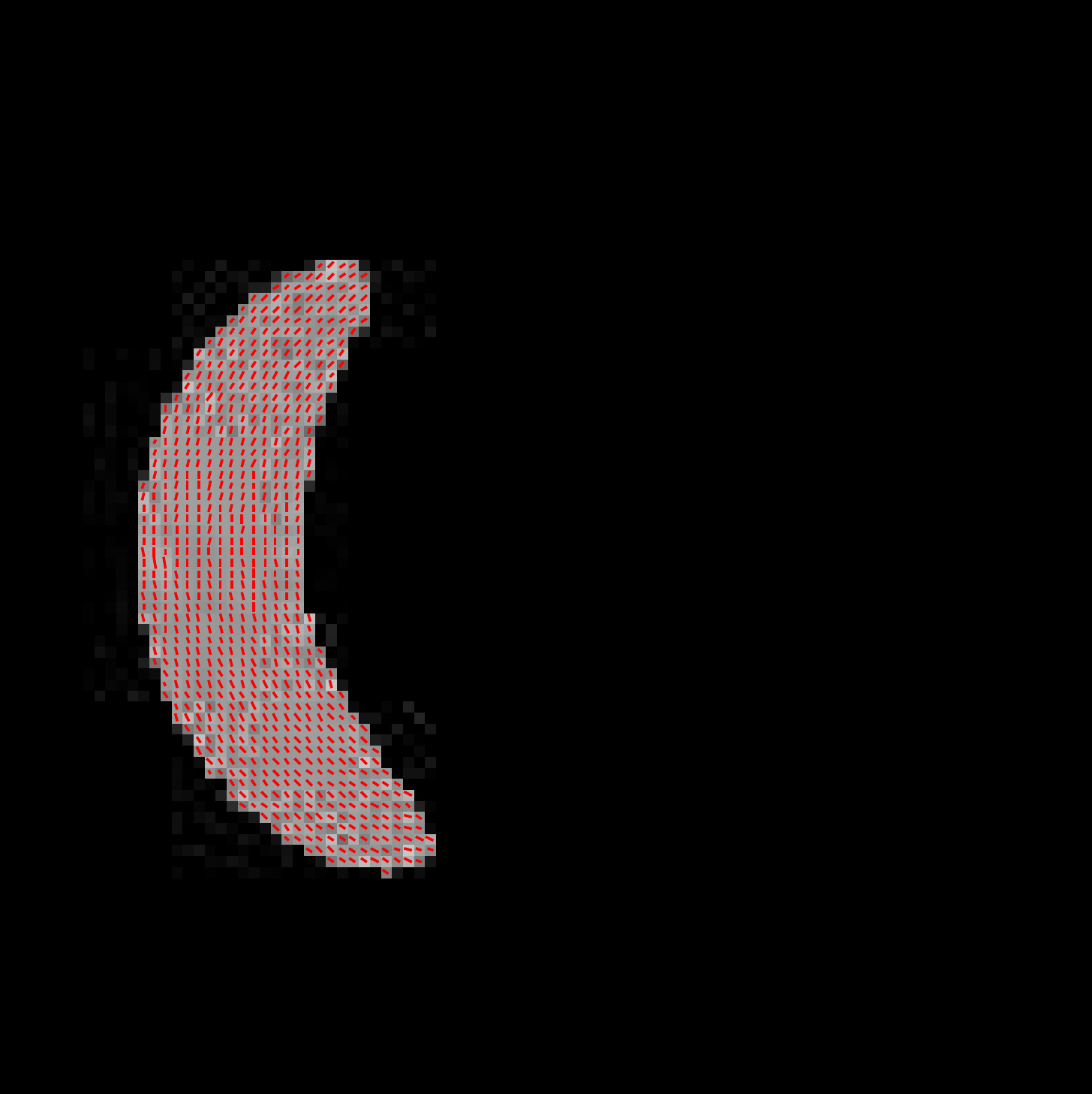}
\subcaption{Linear $L^2$, error-optimal}
\end{subfigure}
\begin{subfigure}[t]{0.23\textwidth}
\includegraphics[width=1\textwidth,bb=40 70 180 330,clip]{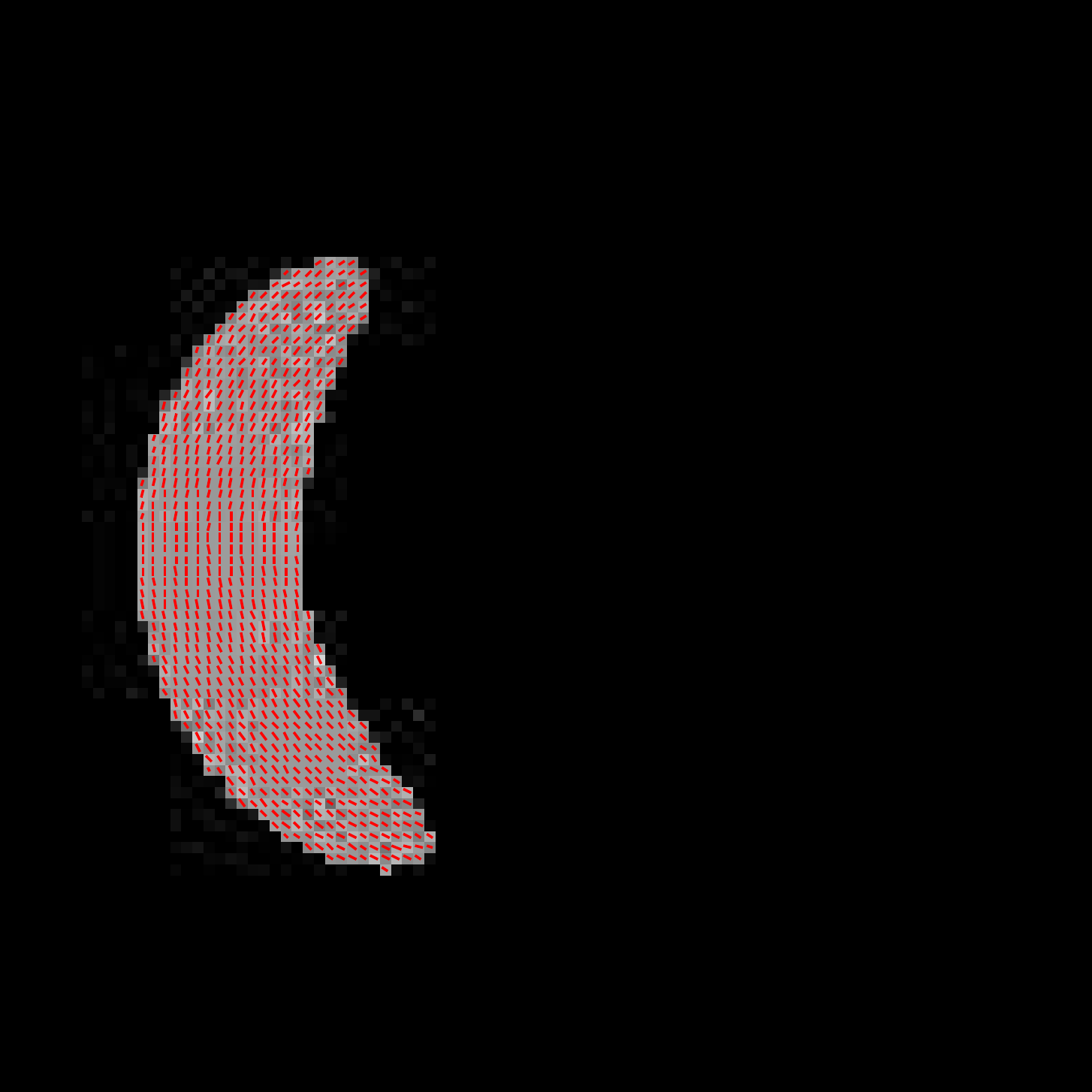}
\subcaption{Non-linear $L^2$, discrepancy principle}
\end{subfigure}
\begin{subfigure}[t]{0.23\textwidth}
\includegraphics[width=1\textwidth,bb=40 70 180 330,clip]{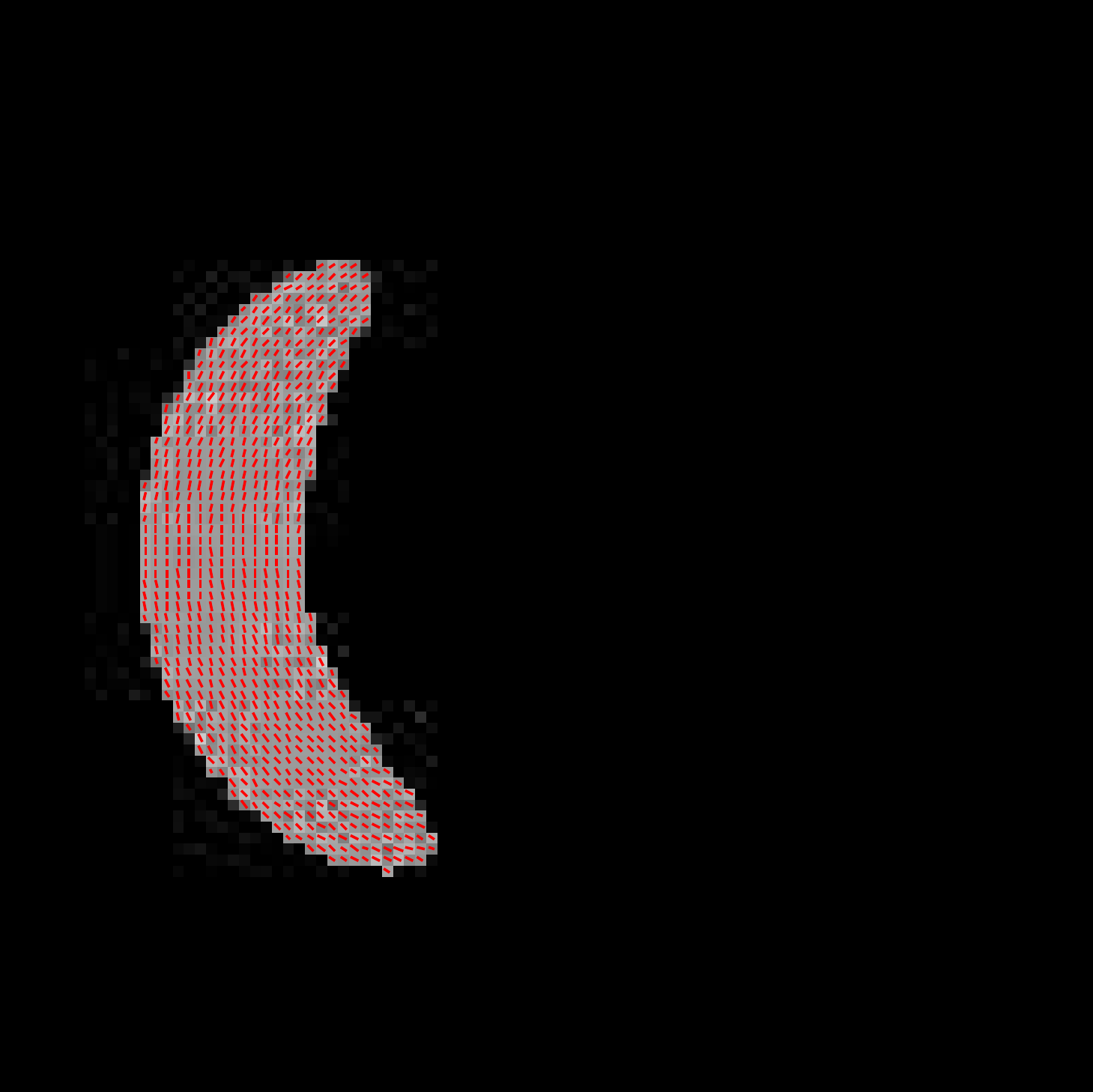}
\subcaption{Non-linear $L^2$, error-optimal}
\end{subfigure}
\begin{subfigure}[t]{0.23\textwidth}
\includegraphics[width=1\textwidth,bb=40 70 180 330,clip]{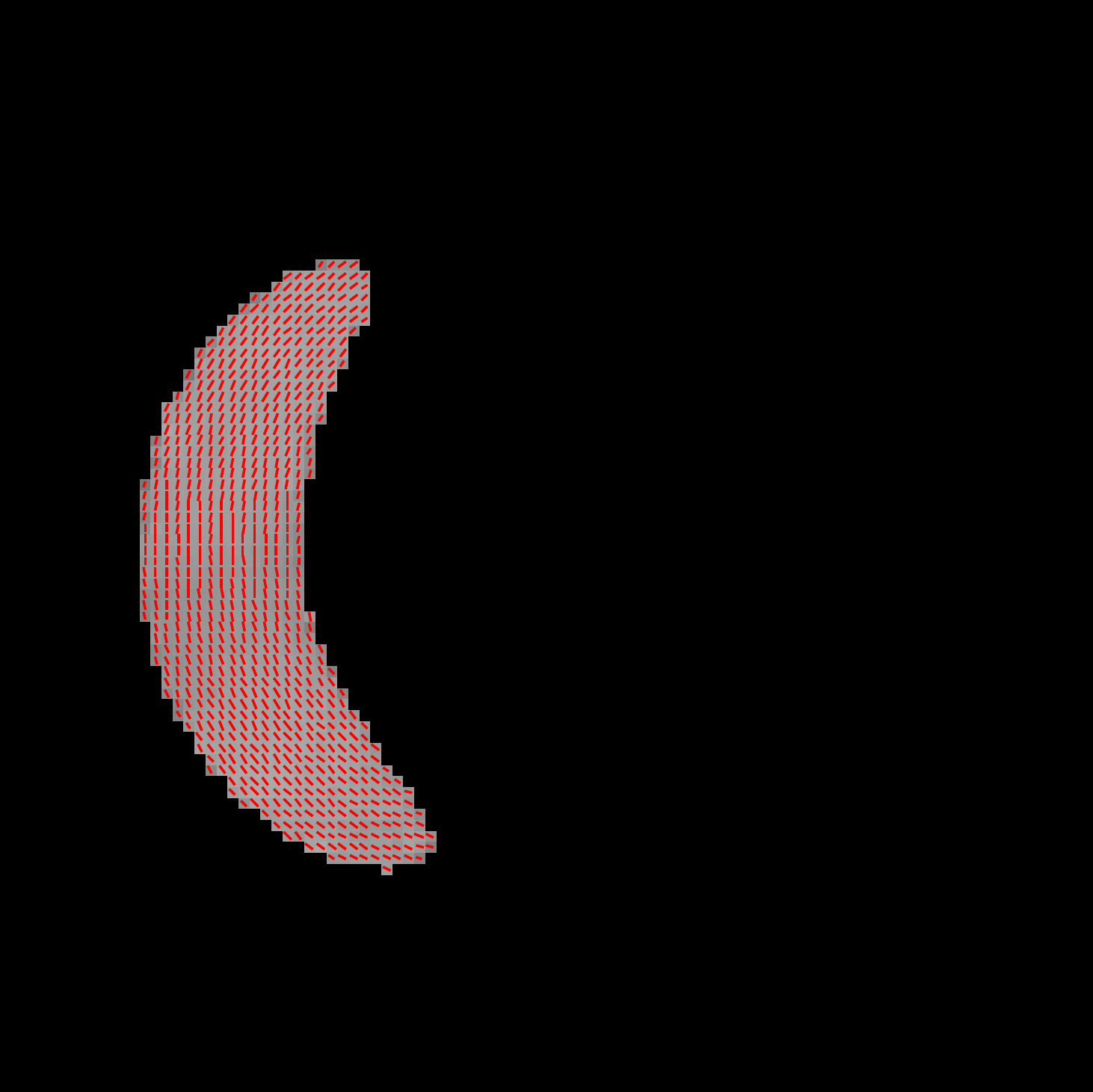}
\subcaption{Constrained, 95\% confidence intervals}
\end{subfigure}
\begin{subfigure}[t]{0.23\textwidth}
\includegraphics[width=1\textwidth,bb=40 70 180 330,clip]{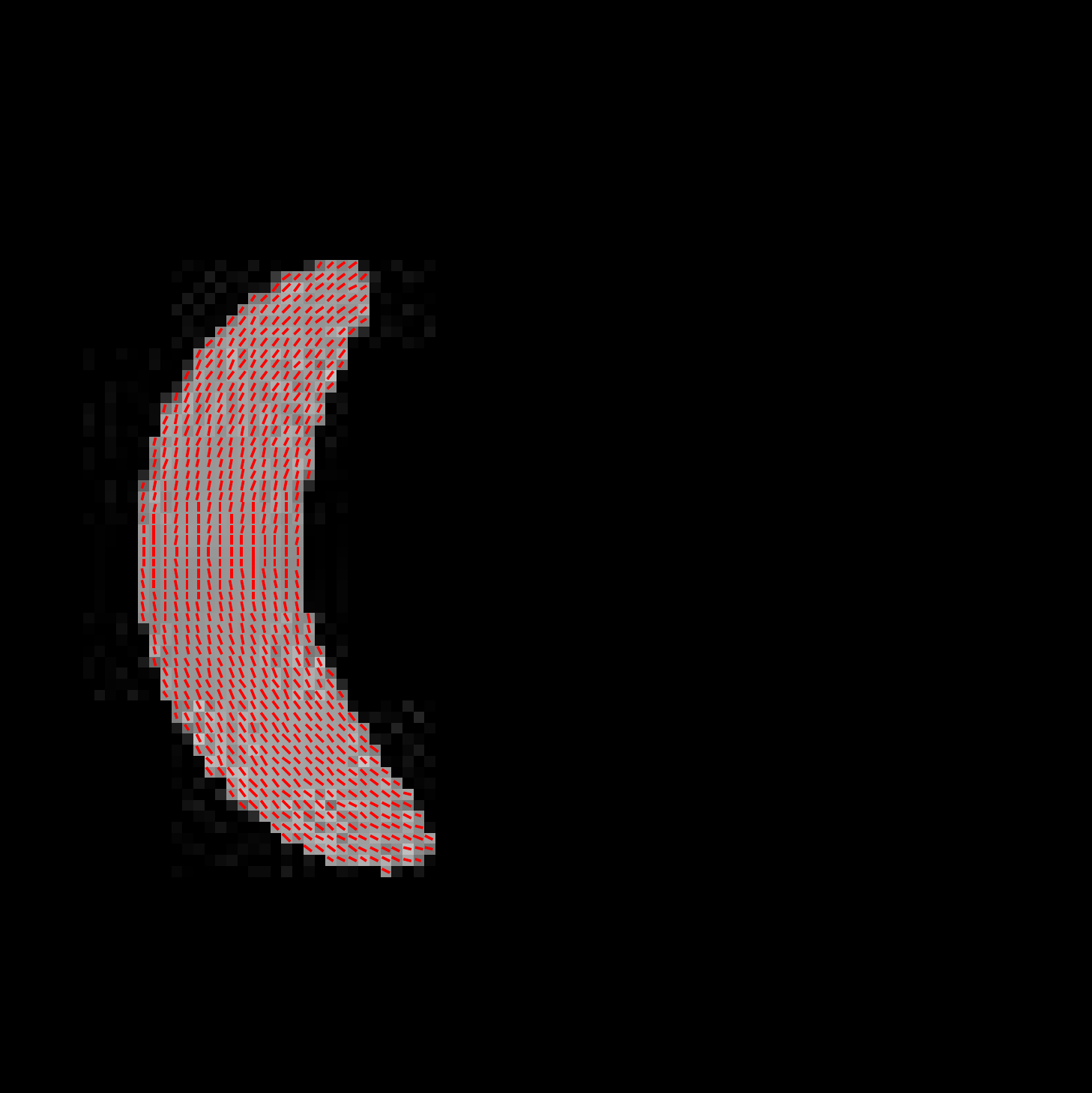}
\subcaption{Constrained, 90\% confidence intervals}
\end{subfigure}
\begin{minipage}[c]{0.8\textwidth}
\caption{Fractional anisotropy in greyscale superimposed by principal eigenvector.
Legend on left indicates the greyscale intensities of the fractional anisotropy.}
\label{fig:synthetic-fa-pev}
\end{minipage}%
\begin{minipage}[t]{0.2\textwidth}
\flushright%
\setlength{\w}{\textwidth}
\tikzexternaldisable
\begin{tikzpicture}
        \scriptsize

        \shade [shading=axis, left color=black, right color=white]
                (-0.3\w,0.05) rectangle (0.3\w,0.1\w);

        \draw[->] (-0.3\w,0)
                  node[below] {0} --
                  node[below] {$\FA$} (0.3\w,0)
                  node[below] {1};

        \def\mp#1#2{\begin{minipage}{#1}
                #2
                \legendstyle
                Greyscale-coding of the fractional anisotropy.
                Principal eigenvector is drawn in red.
        \end{minipage}}

\end{tikzpicture}
\tikzexternalenable
\null
\vfill
\end{minipage}
\end{figure}

\begin{figure}[!htp]
\centering%
\begin{subfigure}[t]{0.23\textwidth}
\includegraphics[width=1\textwidth,bb=30 55 155 280,clip]{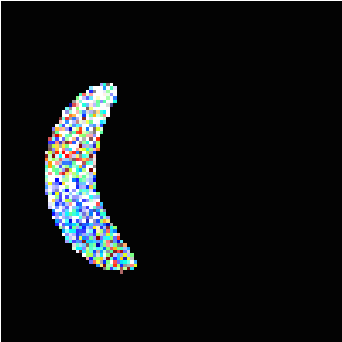}
\subcaption{Regression result}
\end{subfigure}
\begin{subfigure}[t]{0.23\textwidth}
\includegraphics[width=1\textwidth,bb=20 40 115 210,clip]{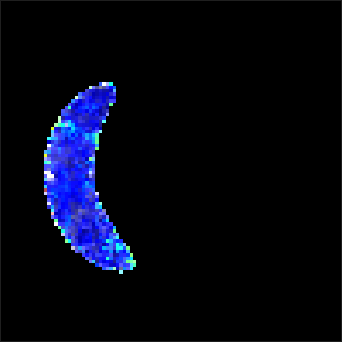}
\subcaption{Linear $L^2$, discrepancy principle}
\end{subfigure}
\begin{subfigure}[t]{0.23\textwidth}
\includegraphics[width=1\textwidth,bb=20 40 115 210,clip]{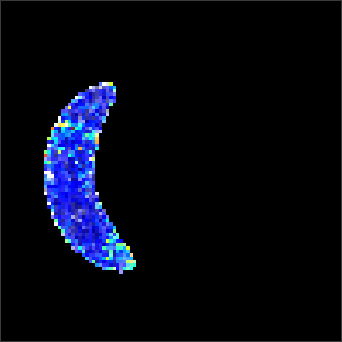}
\subcaption{Linear $L^2$, error-optimal}
\end{subfigure}
\begin{subfigure}[t]{0.23\textwidth}
\includegraphics[width=1\textwidth,bb=20 40 115 210,clip]{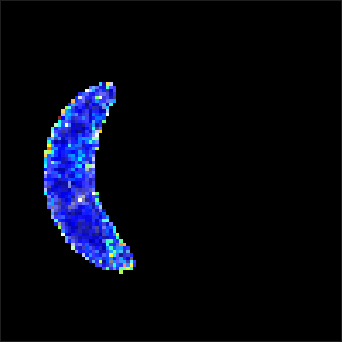}
\subcaption{Non-linear $L^2$, discrepancy principle}
\end{subfigure}
\\
\begin{subfigure}[t]{0.23\textwidth}
\includegraphics[width=1\textwidth,bb=20 40 115 210,clip]{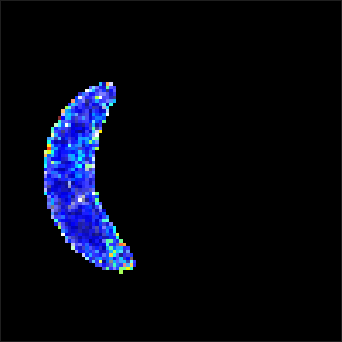}
\subcaption{Non-linear $L^2$, error-optimal}
\end{subfigure}
\begin{subfigure}[t]{0.23\textwidth}
\includegraphics[width=1\textwidth,bb=20 40 115 210,clip]{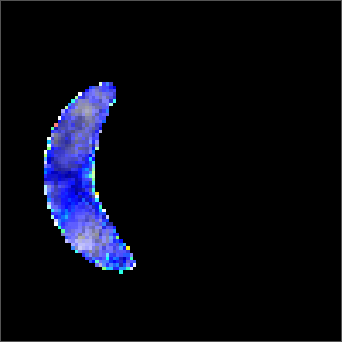}
\subcaption{Constrained, 95\% confidence intervals}
\end{subfigure}
\begin{subfigure}[t]{0.23\textwidth}
\includegraphics[width=1\textwidth,bb=20 40 115 210,clip]{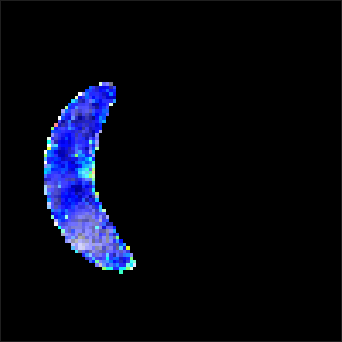}
\subcaption{Constrained, 90\% confidence intervals}
\end{subfigure}
\begin{minipage}[c]{0.8\textwidth}
\caption{Colour-coded errors of fractional anisotropy and principal eigenvector for the computations on the synthetic test data. Legend on the right indicates the
colour-coding of errors between
$u$ and $g_0$ as functions of the principal eigenvector angle error
\mbox{$\theta=\inv \cos(\iprod{\hat v_u}{\hat v_{g_0}})$} in terms of the hue, and the fractional anisotropy error
\mbox{$e_\FA=\abs{\FA_u -\FA_{g_\mathrm{0}}}$} in terms of whiteness.
}
\label{fig:synthetic-fa-peangle}
\end{minipage}%
\begin{minipage}[t]{0.2\textwidth}
\flushright%
\setlength{\w}{0.8\textwidth}
\input{figures/legend-errplot.tikz}
\null
\vfill
\end{minipage}
\end{figure}

As we can see, the non-linear approach~\eqref{eq:dti-recons-nl} performs overall the best by a wide margin, in terms of the pointwise Frobenius error, i.e., error in $\norm{\cdot}_{F,2}$. This is expressed as a PSNR in Table \ref{table:synthetic}. What is, however, interesting, is that the constraint-based approach~\eqref{eq:dti-recons-constr} has a much better reconstruction of the principal eigenvector angle, and a comparable reconstruction of its magnitude.
Indeed, the 95\% confidence interval in Figure \ref{fig:synthetic-colordir}(g) and Figure \ref{fig:synthetic-fa-pev}(g) suggests a nearly perfect reconstruction in terms of smoothness. But, the Frobenius PSNR in Table \ref{table:synthetic} for this approach is worse than the simple unregularised inversion by regression. The problem is revealed by Figure \ref{fig:synthetic-fa-peangle}(f): the large white cloudy areas indicate huge fractional anisotropy errors, while at the same time, the principal eigenvector angle errors expressed in colour are much lower than for other approaches.  Good reconstruction of the principal eigenvector is important for the process of tractography, i.e., the reconstruction of neural pathways in a brain.  One explanation for our good results is that the regulariser completely governs the solution in areas where the error bounds are inactive due to generally low errors. This results in very smooth reconstructions, which is in the present case desirable as our synthetic tensor field is also smooth within the helix.


\subsection{Results with in vivo brain imaging data}
\label{sec:invivo}

We now wish to study the proposed regularisation model on a real in-vivo diffusion tensor image. Our data is that of a human brain, with the measurements of a volunteer performed on a clinical 3T system (Siemens Magnetom TIM Trio, Erlangen, Germany), with a 32 channel head coil. A 2D diffusion weighted single shot EPI sequence with diffusion sensitising gradients applied in 12 independent directions ($b = 1000\mbox{s}/\mbox{mm}^2$). An additional reference scan without diffusion was used with the parameters: $\mbox{TR}=7900\mbox{ms}$, $\mbox{TE}=94\mbox{ms}$, flip angle $90^\circ$. Each slice of the 3D data set has plane resolution $1.95\mbox{mm} \times 1.95\mbox{mm}$, with a total of $128 \times 128$ pixels. The total number of slices is 60 with a slice thickness of 2mm.
The data set consists of 4 repeated measurements. The GRAPPA acceleration factor is 2.
Prior to the reconstruction of the diffusion tensor, eddy current correction was performed  with FSL \cite{smith2004advances}.
Written informed consent was obtained from the volunteer before the examination.

For error bounds calculation according to the procedure of Section~\ref{sec:distr-estim}, to avoid systematic bias near the brain, we only use about 0.6\% of the total volume near the borders, or roughly $k \approx 6000$ voxels.

To estimate errors for the all the considered reconstruction models, for each gradient direction $b_i$ we use only one out of the four duplicate measurements. We then calculate the errors using a somewhat less than ideal pseudo-ground-truth, which is the linear regression reconstruction from all the available measurements.

The results are in Table \ref{table:invivo} and Figures \ref{fig:invivo-colordir}--\ref{fig:invivo-fa-peangle}, again with the first of the figures showing the colour-coded principal eigenvector of the reconstruction, the second showing the fractional anisotropy and principal eigenvectors, and the last one the errors in the latter two, in a colour-coded manner. Again, all plots are masked to represent only the non-zero region. In the figures, we concentrate on error bounds based on 95\% confidence intervals, as the results for the 90\% and 99\% cases do not differ significantly according to Table \ref{table:invivo}.

\begin{table}[t]
\caption{Numerical results for the in-vivo brain data data. For the $L^2$ and non-linear $L^2$  reconstruction models the free parameter chosen by the parameter choice criterion is the regularisation parameter $\alpha$, and for the constrained problem it is the confidence interval.}
\label{table:invivo}
\centering
\footnotesize
\begin{tabular}{l|l|l|l|l}
Method&Parameter choice & \parbox[t]{1.6cm}{Frobenius PSNR}& \parbox[t]{1.6cm}{Pr. e.val.\\ PSNR}& \parbox[t]{1.7cm}{Pr. e.vect.\\ angle PSNR}\\ \hline
Regression& & 32.35dB& 33.67dB& 28.56dB \\
Linear $L^2$&Discr. Principle& 34.80dB& 36.35dB& 24.81dB\\
Linear $L^2$&Frob. Error-optimal& 34.81dB& 36.32dB& 24.97dB\\
Non-linear $L^2$&Discr. Principle& 33.53dB&  35.87dB& 27.12dB\\
Non-linear $L^2$&Frob. Error-optimal& 33.57dB&  36.03dB  & 27.58dB\\
Constraints&90\%&33.71dB &34.93dB & 27.00dB \\
Constraints&95\%& 33.70dB& 34.97dB& 26.91dB\\
Constraints&99\%& 33.67dB & 34.89dB & 26.88dB \\
\end{tabular}
\end{table}

\begin{figure}[!htp]
\begin{subfigure}[t]{0.32\textwidth}
\includegraphics[width=1\textwidth]{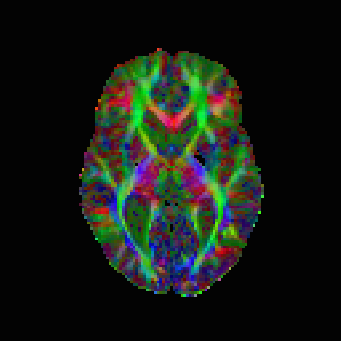}
\subcaption{Pseudo-ground-truth}
\end{subfigure}
\begin{subfigure}[t]{0.32\textwidth}
\includegraphics[width=1\textwidth]{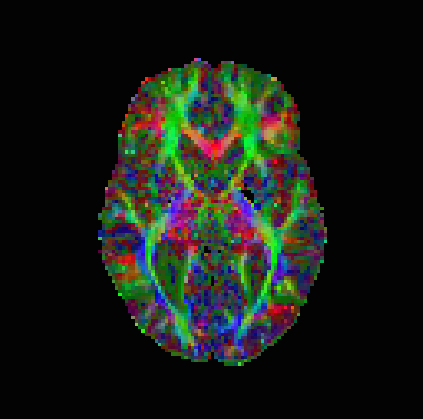}
\subcaption{Regression result}
\end{subfigure}
\begin{subfigure}[t]{0.32\textwidth}
\includegraphics[width=1\textwidth]{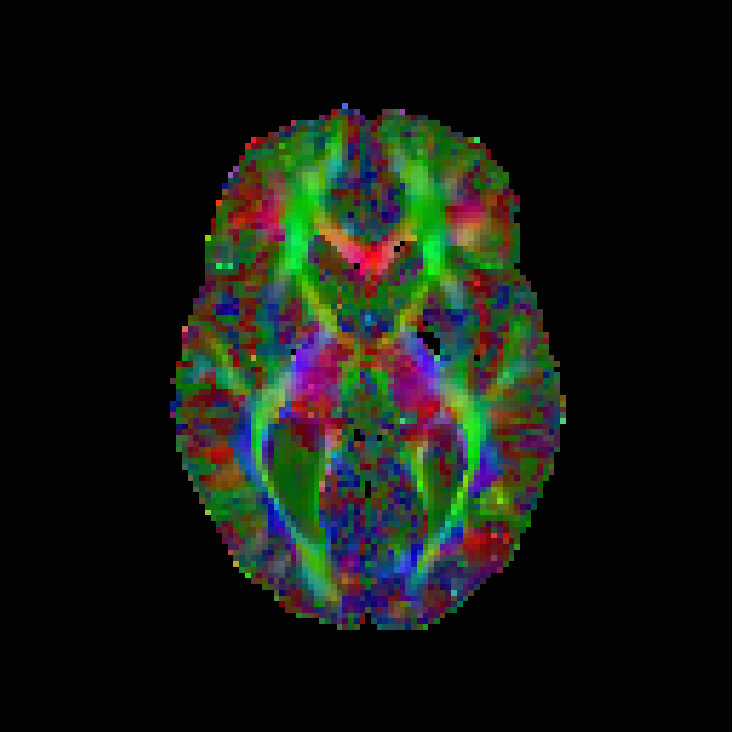}
\subcaption{Linear $L^2$, discrepancy principle}
\end{subfigure}
\begin{subfigure}[t]{0.32\textwidth}
\includegraphics[width=1\textwidth]{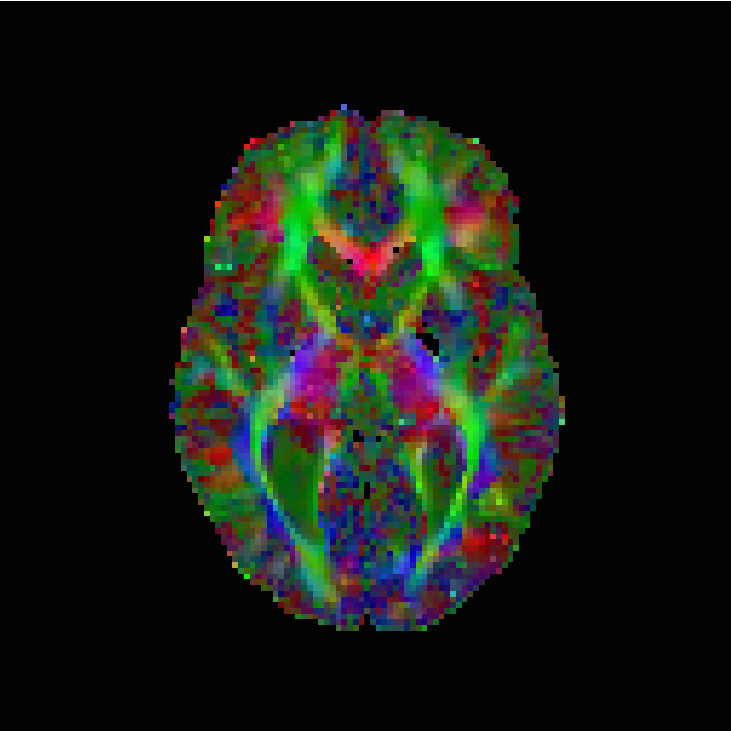}
\subcaption{Linear $L^2$, error-optimal}
\end{subfigure}
\begin{subfigure}[t]{0.32\textwidth}
\includegraphics[width=1\textwidth]{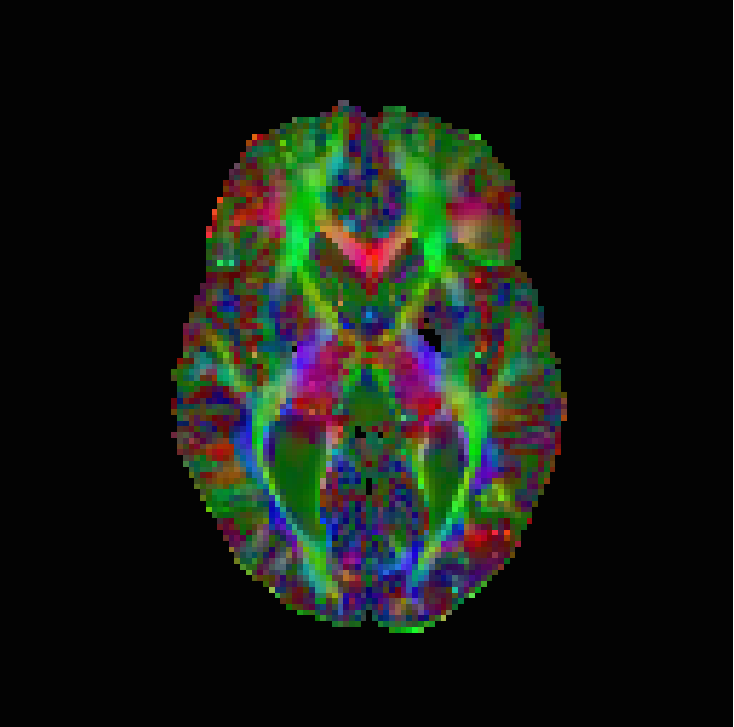}
\subcaption{Non-linear $L^2$, discrepancy principle}
\end{subfigure}
\begin{subfigure}[t]{0.32\textwidth}
\includegraphics[width=1\textwidth]{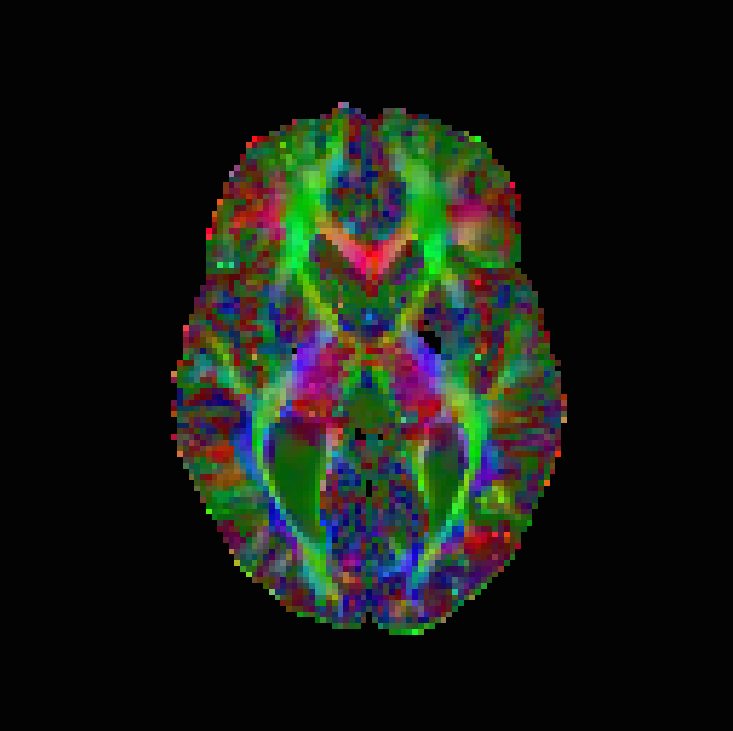}
\subcaption{Non-linear $L^2$, error-optimal}
\end{subfigure}
\begin{subfigure}[t]{0.32\textwidth}
\includegraphics[width=1\textwidth]{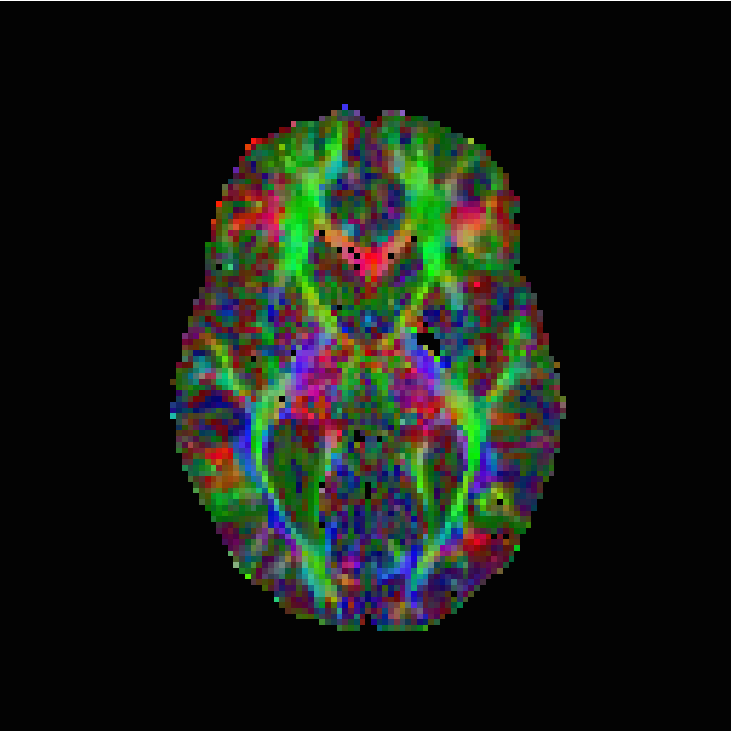}
\subcaption{Constrained, 95\% confidence intervals}
\end{subfigure}
\begin{subfigure}[b]{0.32\textwidth}
\hspace{\textwidth}
\end{subfigure}
\begin{subfigure}[b]{0.32\textwidth}
\setlength{\w}{0.7\textwidth}
\centering
\input{figures/legend-dirplot3d.tikz}
\vspace{1cm}
\end{subfigure}
\caption{Reconstruction results on the in vivo brain data. (a) Pseudo-ground-truth plot. (b) regression result. (c)--(g) Plot of a slice of the solution for $L^2$, non-linear $L^2$ and constrained problem approach. 
The legend on the bottom-right indicates the colour-coding of directions of the principal eigenvector plotted.
}
\label{fig:invivo-colordir}
\end{figure}

\begin{figure}[!htp]
\centering%
\begin{subfigure}[t]{0.32\textwidth}
\includegraphics[width=1\textwidth]{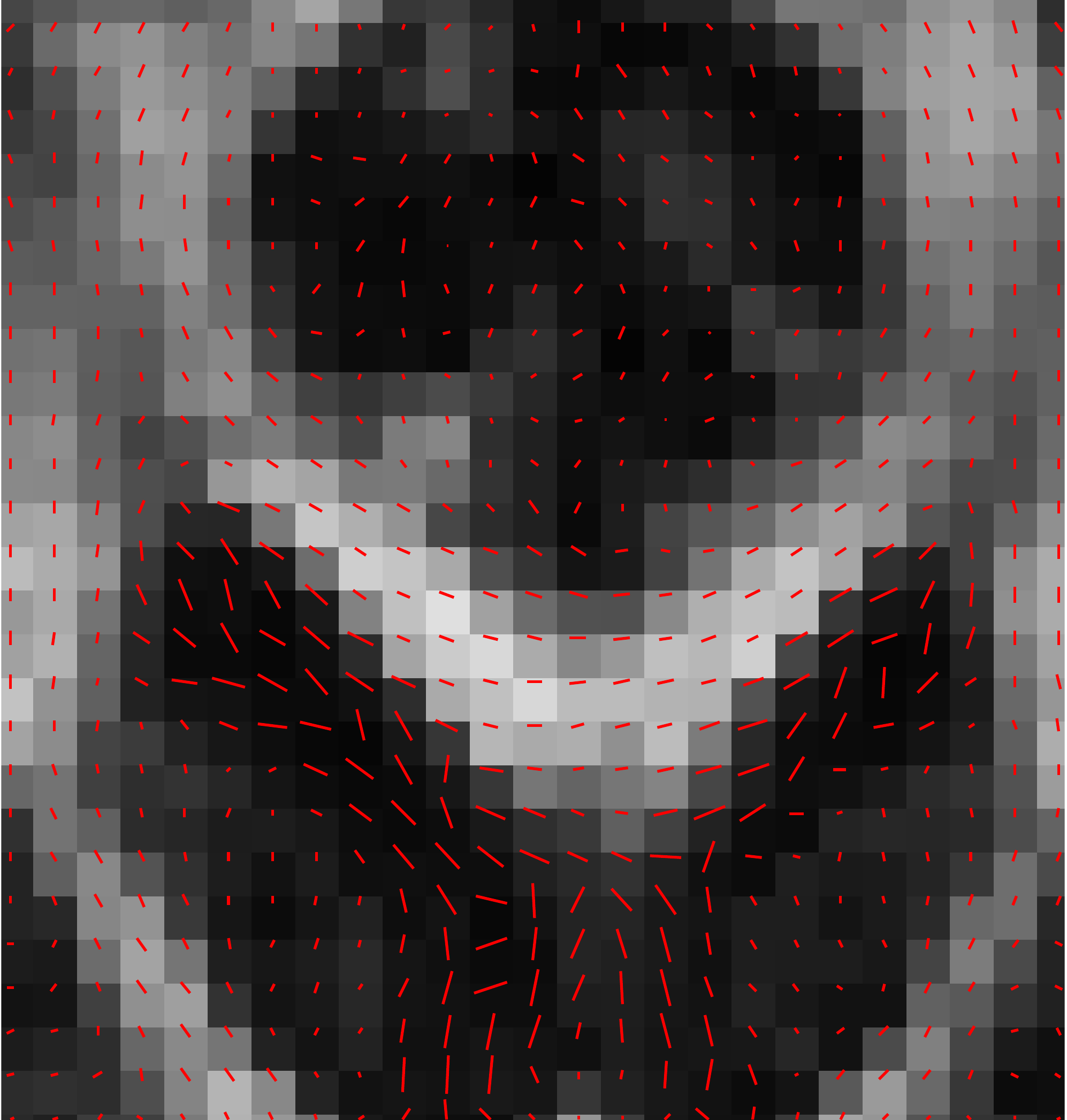}
\subcaption{Pseudo-ground-truth}
\end{subfigure}
\begin{subfigure}[t]{0.32\textwidth}
\includegraphics[width=1\textwidth]{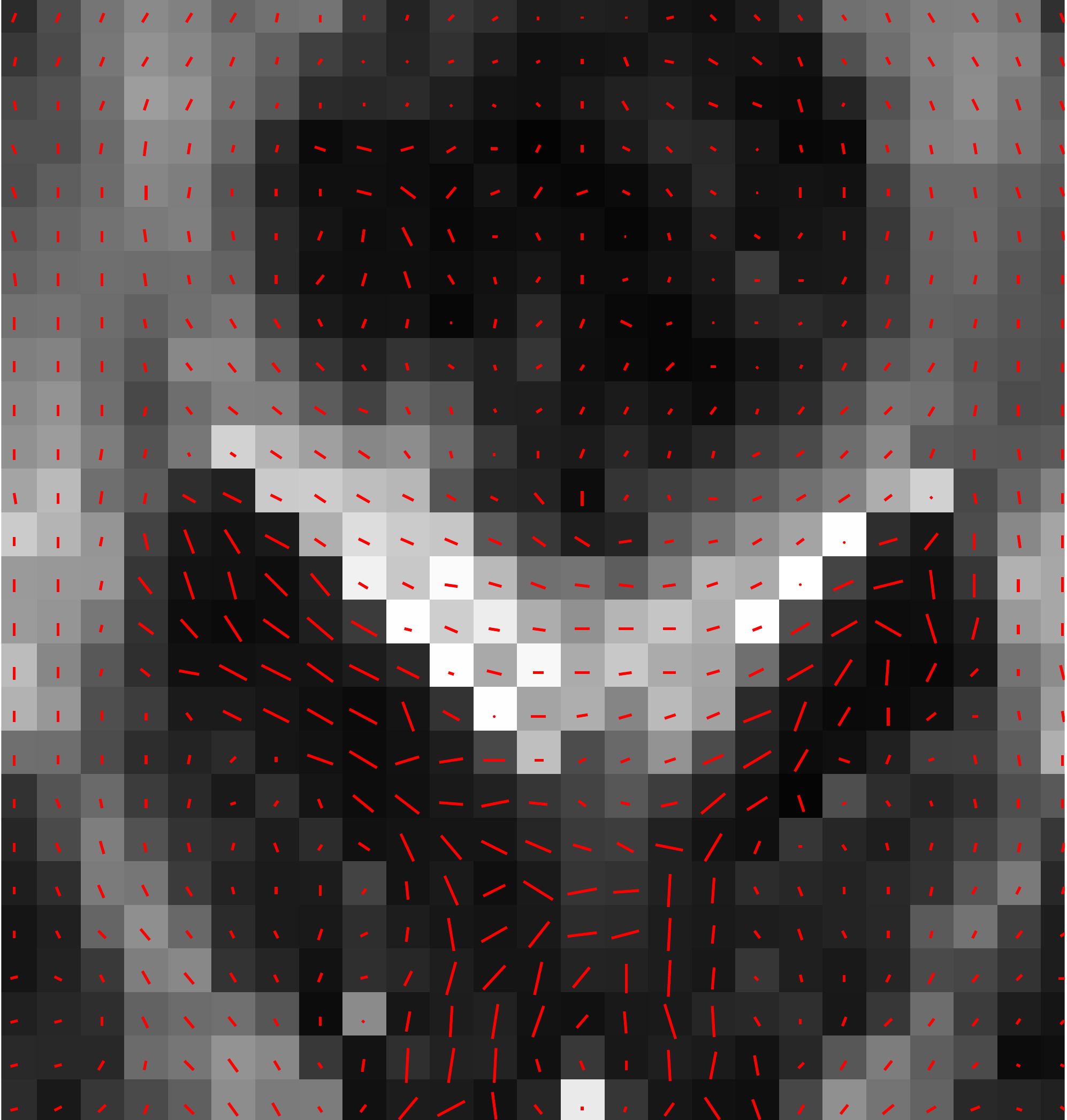}
\subcaption{Linear $L^2$, discrepancy principle}
\end{subfigure}
\begin{subfigure}[t]{0.32\textwidth}
\includegraphics[width=1\textwidth]{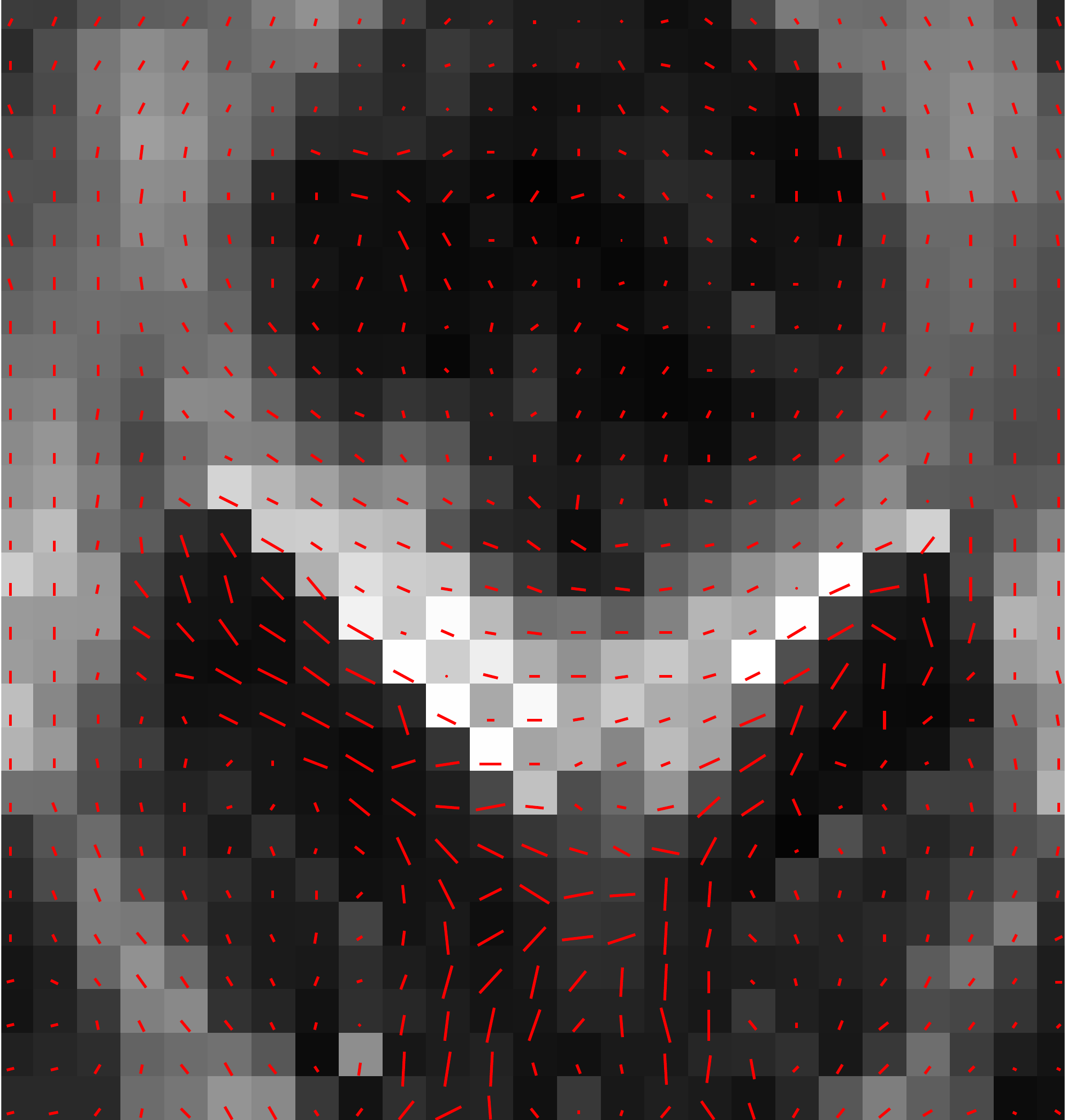}
\subcaption{Linear $L^2$, error-optimal}
\end{subfigure}
\begin{subfigure}[t]{0.32\textwidth}
\includegraphics[width=1\textwidth]{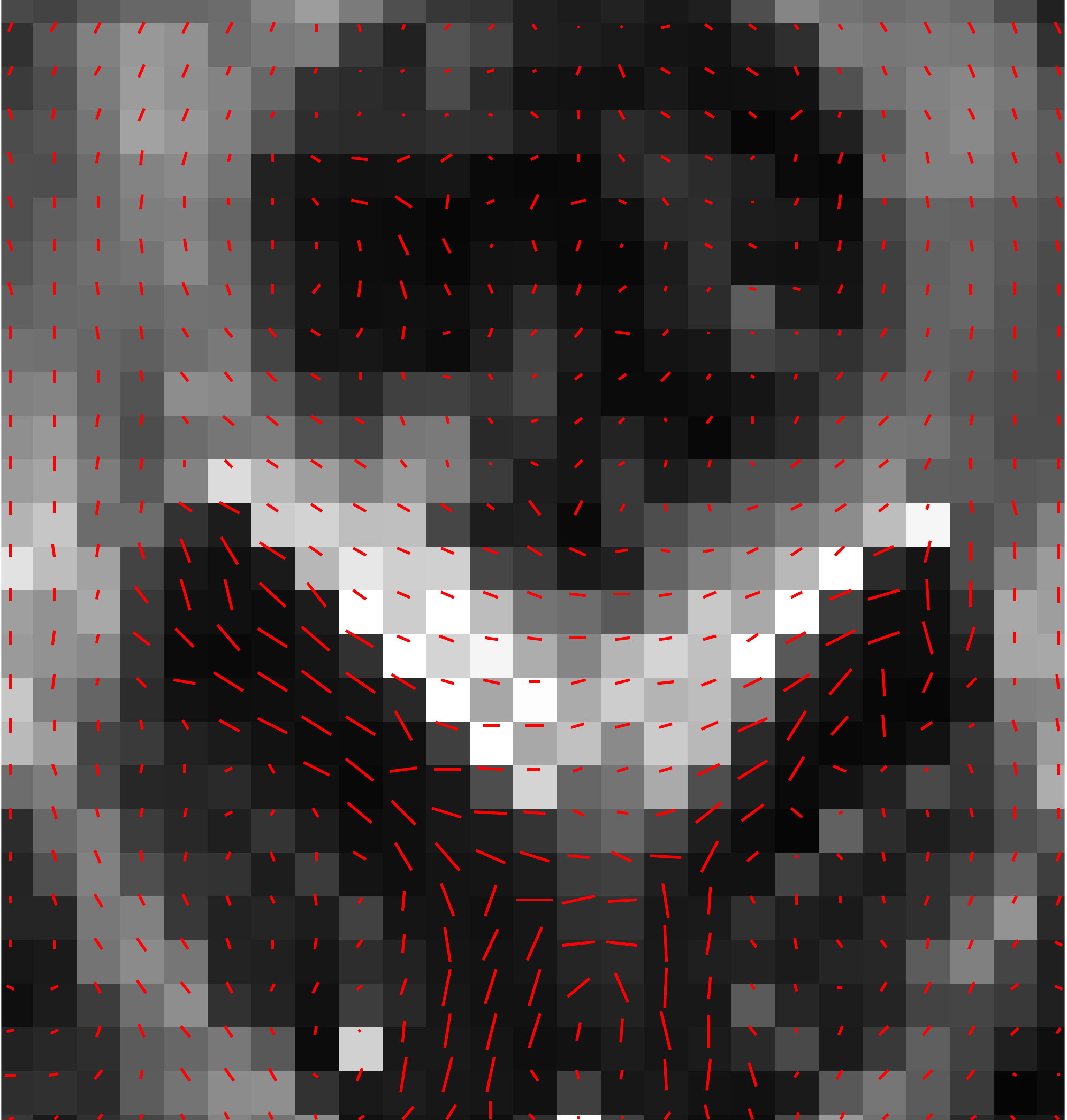}
\subcaption{Non-linear $L^2$, discrepancy principle}
\end{subfigure}
\begin{subfigure}[t]{0.32\textwidth}
\includegraphics[width=1\textwidth]{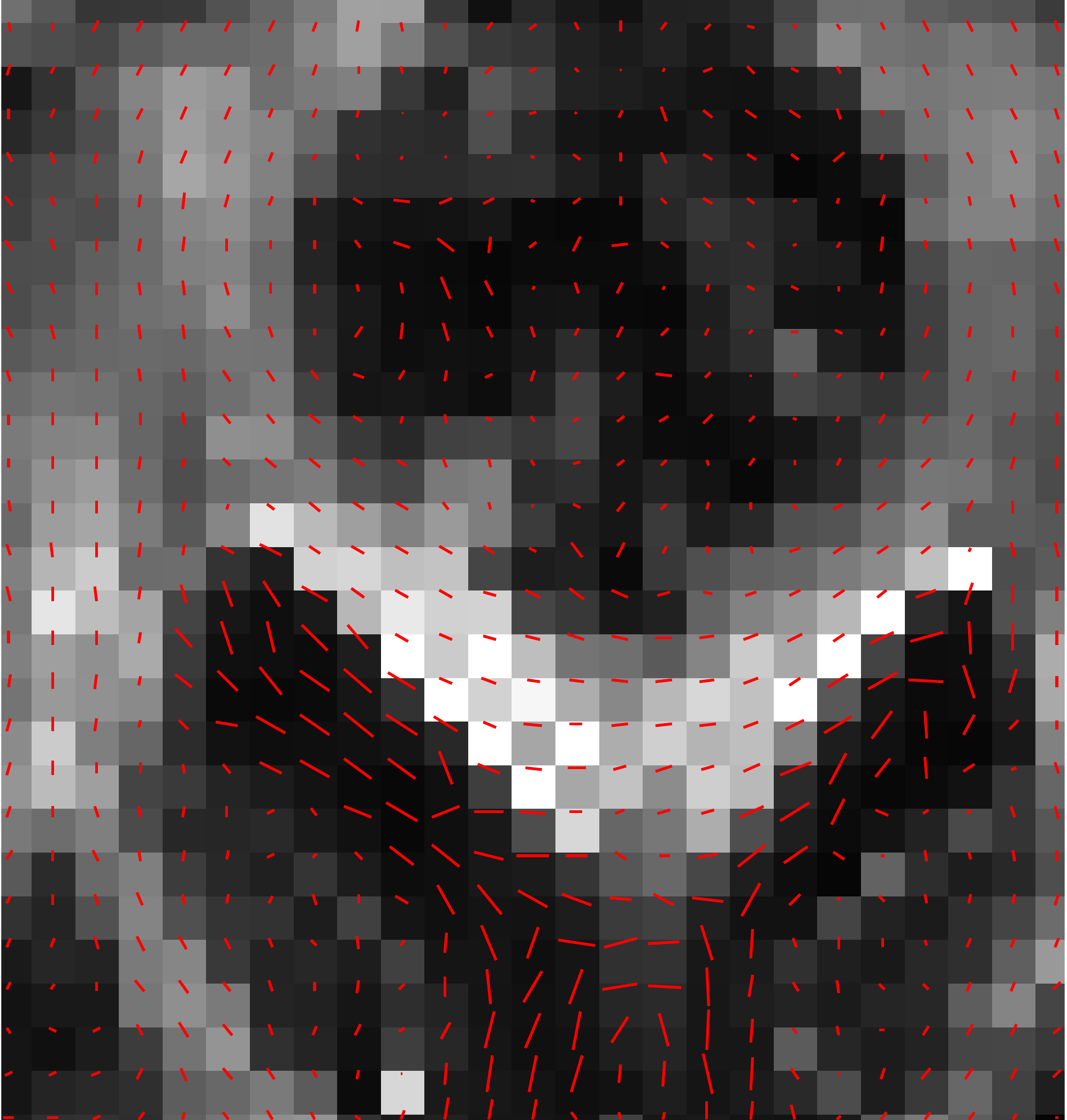}
\subcaption{Non-linear $L^2$, error-optimal}
\end{subfigure}
\begin{subfigure}[t]{0.32\textwidth}
\includegraphics[width=1\textwidth]{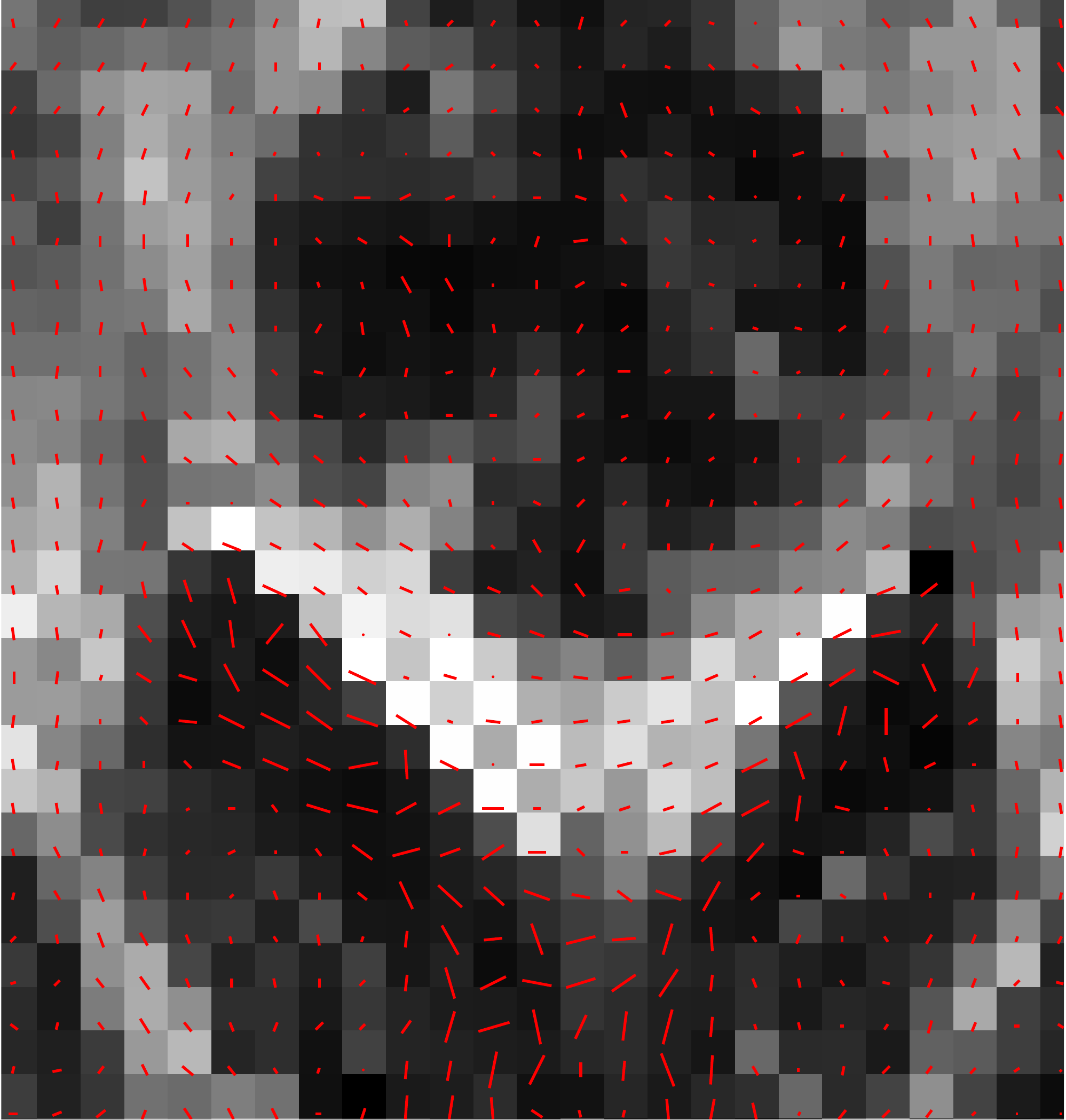}
\subcaption{Constrained, 95\% confidence intervals}
\end{subfigure}

\begin{minipage}[c]{0.8\textwidth}
\caption{Fractional anisotropy of the \emph{corpus callosum} in greyscale superimposed by principal eigenvector.
Legend on the right indicates the greyscale intensities of the fractional anisotropy.}
\label{fig:invivo-fa-pev}
\end{minipage}%
\begin{minipage}[t]{0.2\textwidth}
\flushright%
\setlength{\w}{\textwidth}
\tikzexternaldisable
\begin{tikzpicture}
        \scriptsize

        \shade [shading=axis, left color=black, right color=white]
                (-0.3\w,0.05) rectangle (0.3\w,0.1\w);

        \draw[->] (-0.3\w,0)
                  node[below] {0} --
                  node[below] {$\FA$} (0.3\w,0)
                  node[below] {1};

        \def\mp#1#2{\begin{minipage}{#1}
                #2
                \legendstyle
                Greyscale-coding of the fractional anisotropy.
                Principal eigenvector is drawn in red.
        \end{minipage}}

\end{tikzpicture}
\tikzexternalenable
\null
\vfill
\end{minipage}
\end{figure}

\def\SQl{112}
\def\SQb{130}
\def\SQr{176}
\def\SQt{194}
\newlength{\scf}
\newlength{\ww}
\def\shadowshift{0pt}
\newcommand{\drawzoomarea}{
    \draw[line width=1.5,dashed,color=red] 
        (\SQl\scf, \SQb\scf) rectangle (\SQr\scf, \SQt\scf);
}

\begin{figure}[!htp]
    \newcommand{\includegraphicspip}[3][]{%
        \setlength{\ww}{#2}%
        \setlength{\scf}{0.0039\ww}%
        \begin{tikzpicture}
        \pgftext[at=\pgfpoint{0}{0},left,bottom]{
            \includegraphics[width=\ww]{{#3}}%
        }%
        \pgftext[at=\pgfpoint{0.59\ww}{0.01\ww},left,bottom]{%
            \includegraphics[width=0.4\ww,bb=112 130 176 194,clip]{#3}%
        }%
        \draw[line width=1.5, color=red] (0.59\ww, 0.01\ww) rectangle (0.99\ww, 0.41\ww);
        #1%
        \end{tikzpicture}%
    }
\centering%
\begin{subfigure}[t]{0.32\textwidth}
\includegraphicspip[\drawzoomarea]{\textwidth}{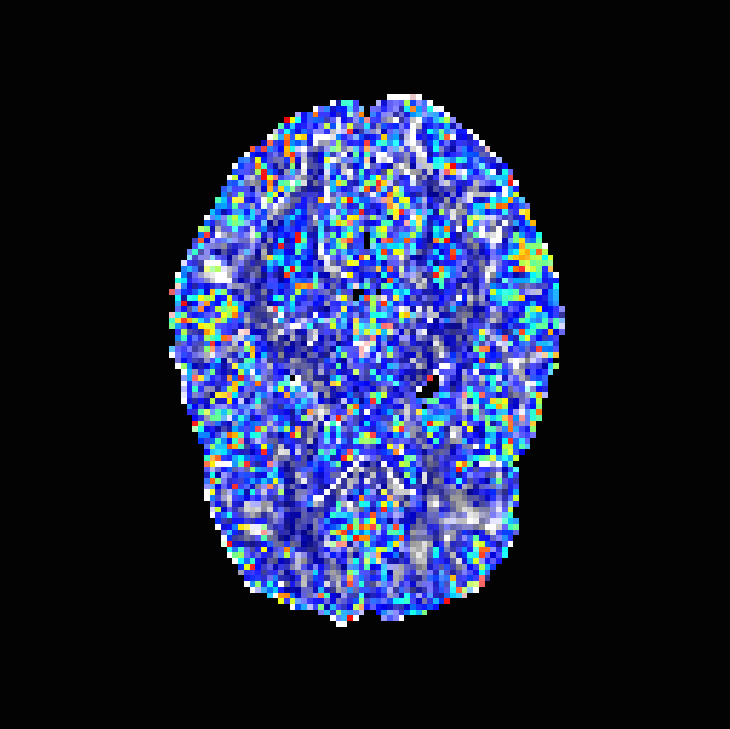}
\subcaption{Linear $L^2$, discrepancy principle}
\end{subfigure}
\begin{subfigure}[t]{0.32\textwidth}
\includegraphicspip{\textwidth}{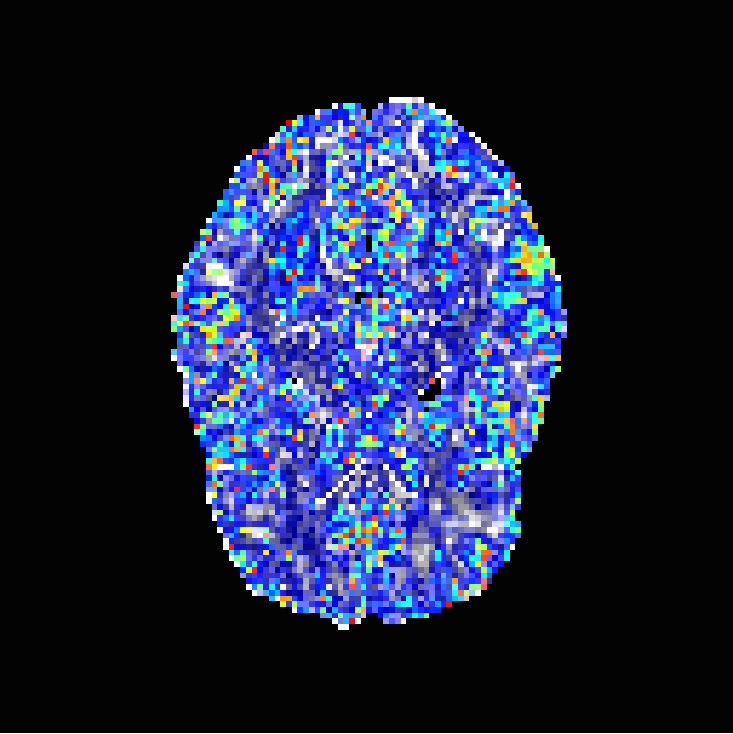}
\subcaption{Linear $L^2$, error-optimal}
\end{subfigure}
\begin{subfigure}[t]{0.32\textwidth}
\includegraphicspip{\textwidth}{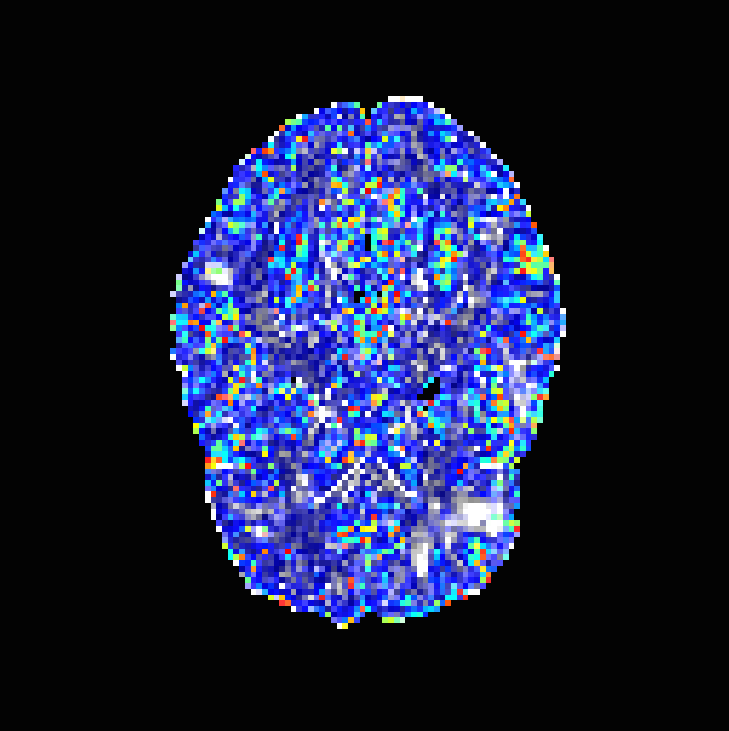}
\subcaption{Non-linear $L^2$, discrepancy principle}
\end{subfigure}
\\
\begin{subfigure}[t]{0.32\textwidth}
\includegraphicspip{\textwidth}{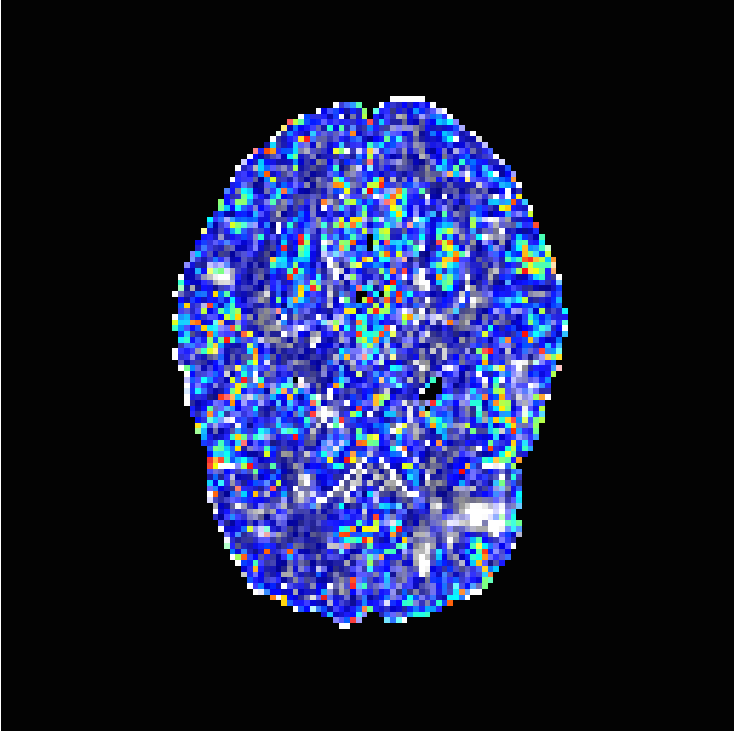}
\subcaption{Non-linear $L^2$, error-optimal}
\end{subfigure}
\begin{subfigure}[t]{0.32\textwidth}
\includegraphicspip{\textwidth}{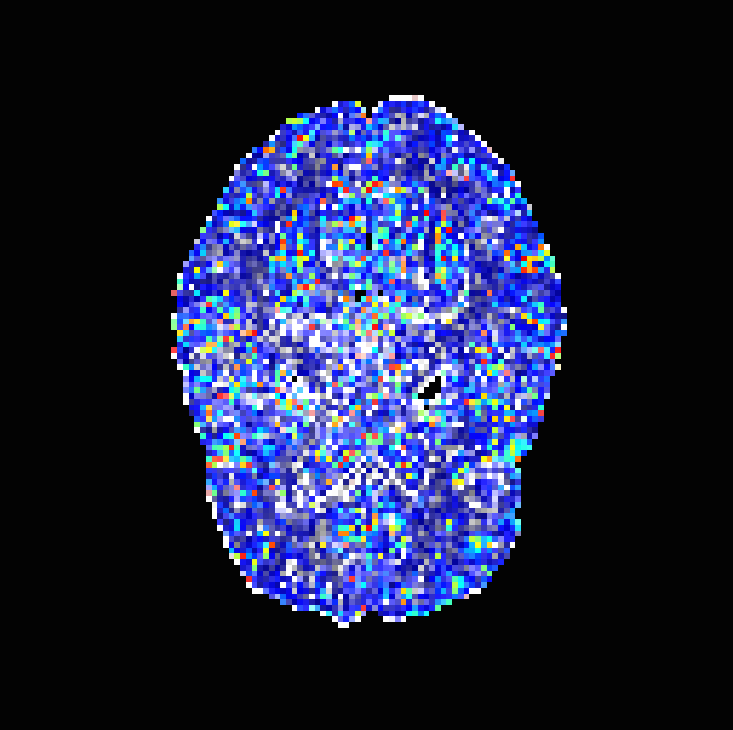}
\subcaption{Constrained, 95\% confidence intervals}
\end{subfigure}
\begin{subfigure}[b]{0.32\textwidth}
\setlength{\w}{0.7\textwidth}
\centering
\input{figures/legend2-errplot.tikz}
\vspace{.8cm}
\end{subfigure}
\caption{Colour-coded errors of fractional anisotropy and principal eigenvector for the computations on the synthetic test data. Legend on the right indicates the
colour-coding of errors between
$u$ and $g_0$ as functions of the principal eigenvector angle error
\mbox{$\theta=\inv \cos(\iprod{\hat v_u}{\hat v_{g_0}})$} in terms of the hue, and the fractional anisotropy error
\mbox{$e_\FA=\abs{\FA_u -\FA_{g_\mathrm{0}}}$} in terms of whiteness.
}
\label{fig:invivo-fa-peangle}
\end{figure}

\begin{figure}[!htp]
\begin{subfigure}[t]{0.32\textwidth}
\includegraphics[width=1\textwidth]{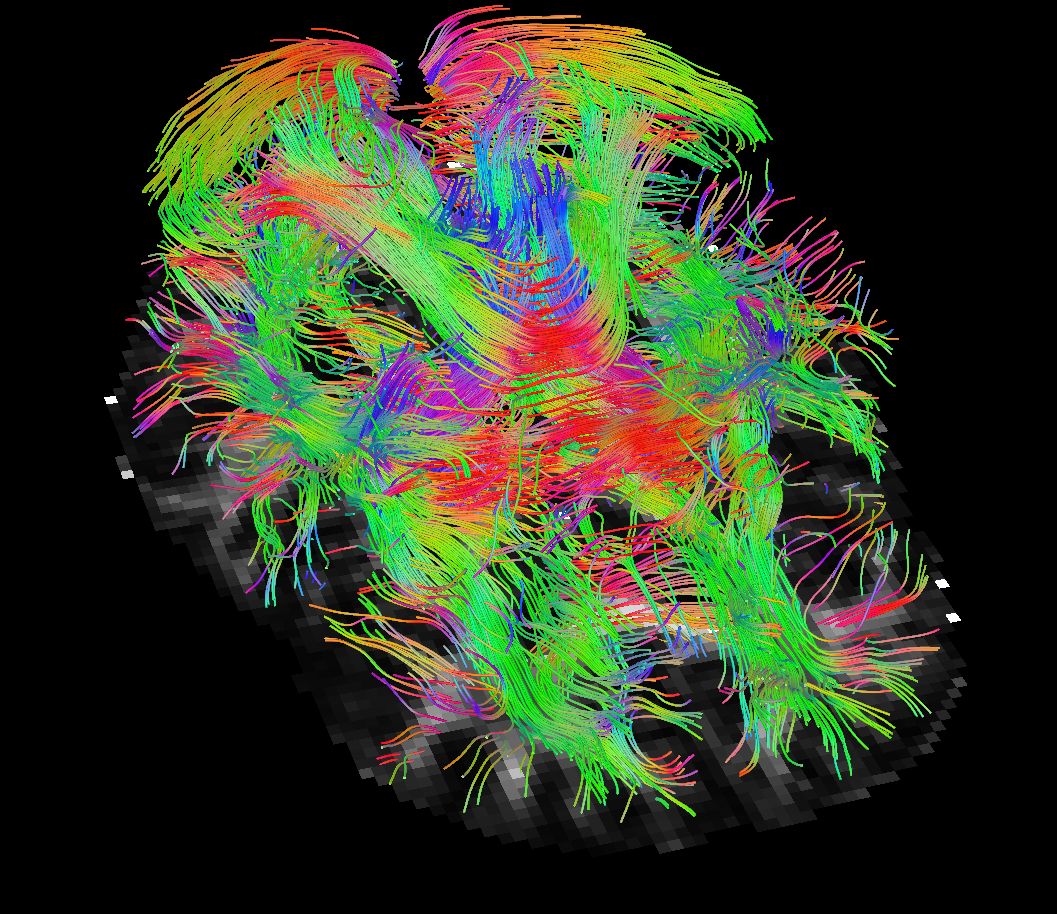}
\subcaption{Pseudo-ground-truth}
\end{subfigure}
\begin{subfigure}[t]{0.32\textwidth}
\includegraphics[width=1\textwidth]{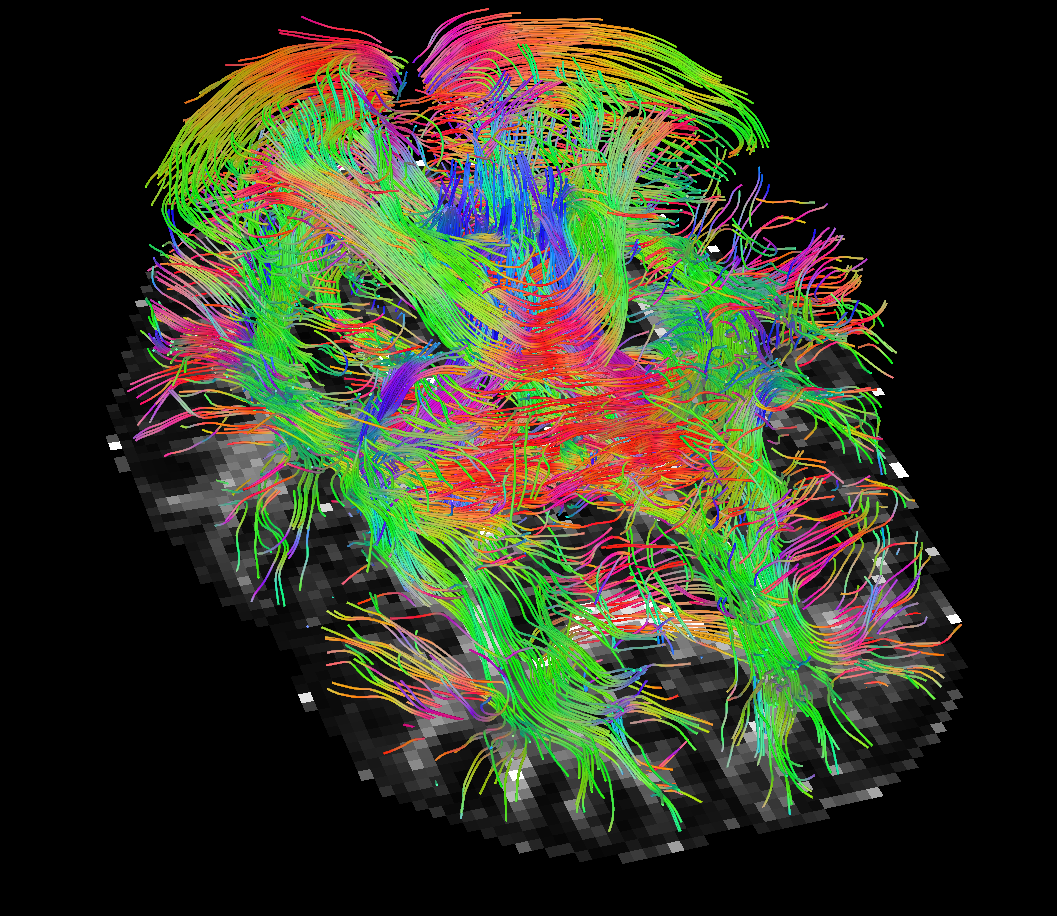}
\subcaption{Regression result}
\end{subfigure}
\begin{subfigure}[t]{0.32\textwidth}
\includegraphics[width=1\textwidth]{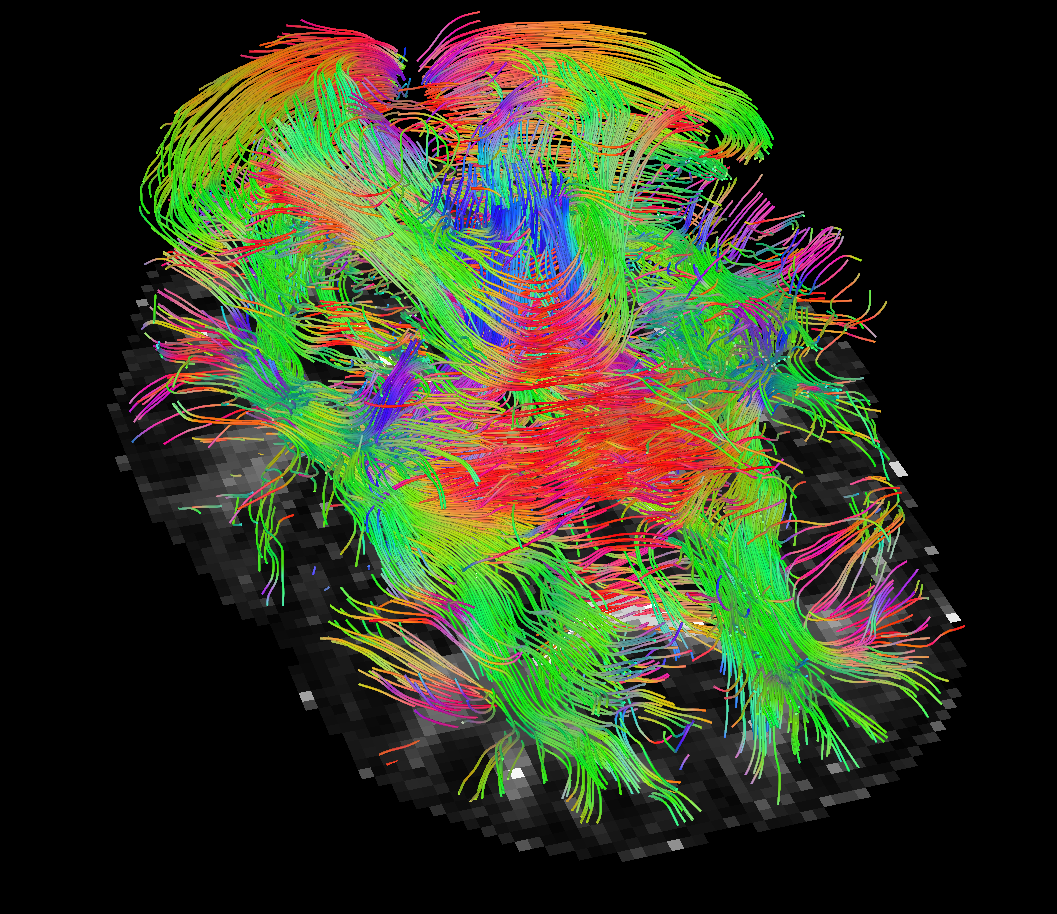}
\subcaption{Linear $L^2$, discrepancy principle}
\end{subfigure}
\begin{subfigure}[t]{0.32\textwidth}
\includegraphics[width=1\textwidth]{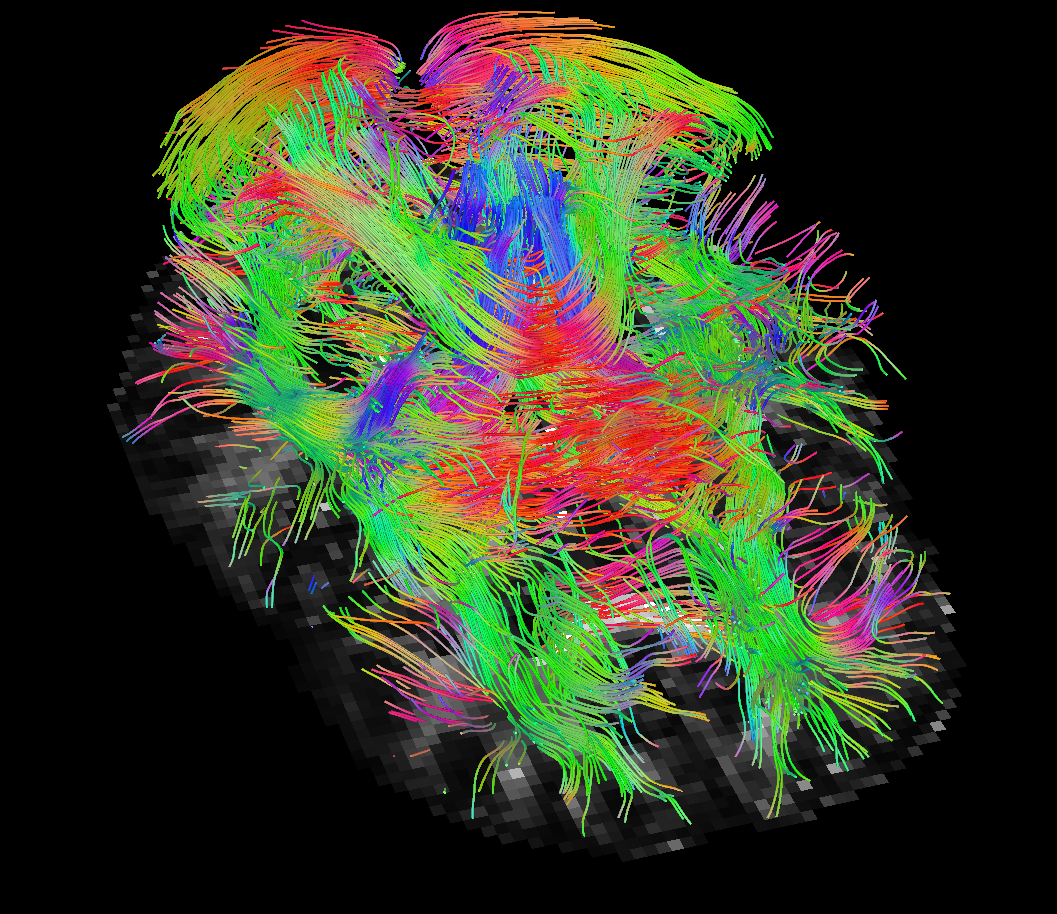}
\subcaption{Non-linear $L^2$, discrepancy principle}
\end{subfigure}
\begin{subfigure}[t]{0.32\textwidth}
\includegraphics[width=1\textwidth]{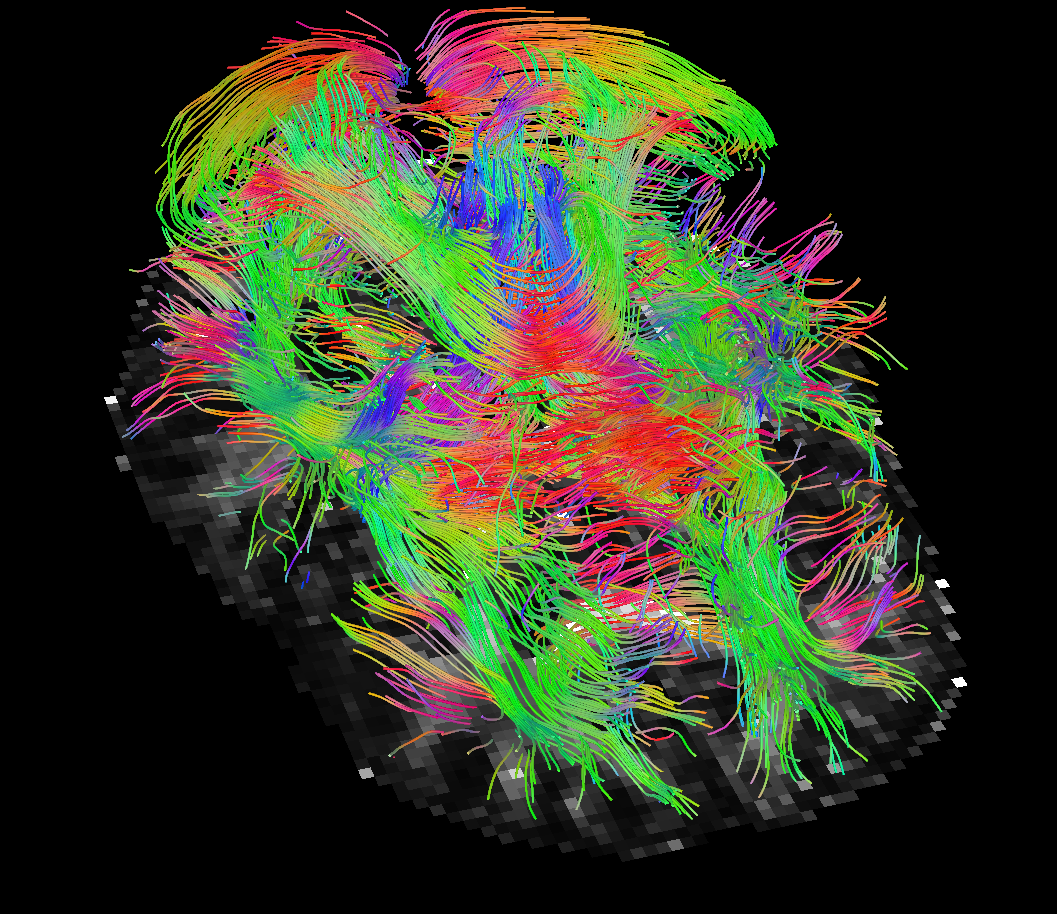}
\subcaption{Non-linear $L^2$, error-optimal}
\end{subfigure}
\begin{subfigure}[t]{0.32\textwidth}
\includegraphics[width=1\textwidth]{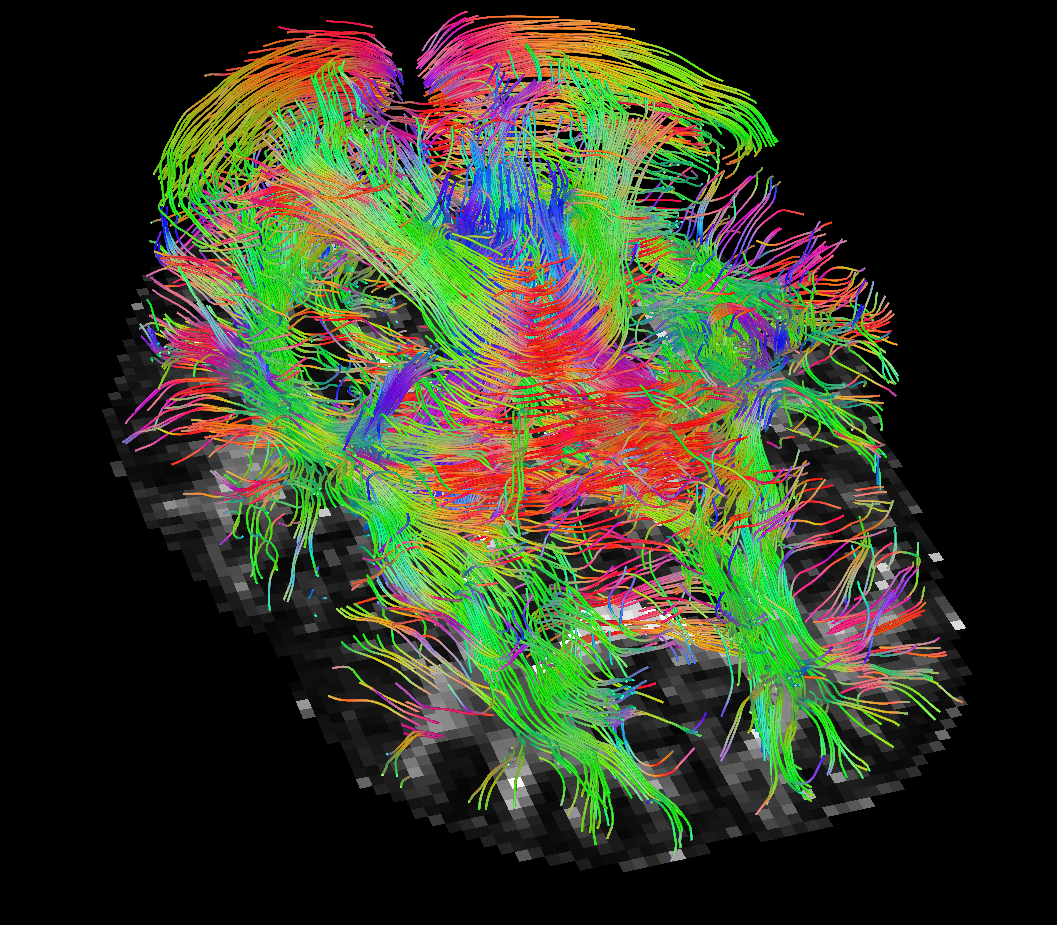}
\subcaption{Constrained, 95\% confidence intervals}
\end{subfigure}
\caption{Tractography visualization results on the in vivo brain data. (a) Pseudo-ground-truth plot. (b) regression result. (c)--(f) Plot of a slice of the solution for $L^2$, non-linear $L^2$ and constrained problem approach.
}
\label{fig:invivo-trac}
\end{figure}
This time, the linear $L^2$ approach~\eqref{eq:dti-recons-lin1} has best overall reconstruction (Frobenius PSNR), while the nonlinear $L^2$ approach~\eqref{eq:dti-recons-nl} has clearly the best principal eigenvector angle reconstruction besides the regression, which does not seem entirely reliable regarding our regression-based pseudo-ground-truth. The constraints based approach~\eqref{eq:dti-recons-constr}, with 95\% confidence intervals is, however, not far behind in terms of numbers. More detailed study of the \emph{corpus callosum} in Figure \ref{fig:invivo-fa-peangle} (small picture in picture) and Figure \ref{fig:invivo-fa-pev} however indicates a better reconstruction of this important region by the nonlinear approach. The constrained approach has some very short vectors there in the white region.
Naturally, however, these results on the in vivo data should be taken with a grain of salt, as we have only a somewhat unreliable pseudo-ground-truth available for comparison purposes.

\subsection{Conclusions from the numerical experiments}

Our conclusion is that the error bounds based approach is a feasible alternative to standard modelling with incorrect Gaussian assumptions. It can produce good reconstructions, although the non-linear $L^2$ approach of \cite{tuomov-nlpdhgm} is possibly slightly more reliable. The latter does, however, in principle depend on a good initialisation of the optimisation method, unlike the convex bounds based approach.

Further theoretical work will be undertaken to extend the partial-order-based approach to modelling errors in linear operators to the non-lattice case of the semidefinite partial order for symmetric matrices, which will allow us to consider problems of diffusion MRI with errors in the forward operator.

It shall also be investigated whether the error bounds approach needs to be combined with an alternative, novel, regulariser that would ameliorate the fractional anisotropy errors that the approach exhibits. It is important to note, however, that from the practical point of view, of using the reconstruction tensor field for basic tractography methods based solely on principal eigenvectors, these are not that critical. As pointed out by one of the reviewers, the situation could differ with more recent geodesic tractography methods \cite{hao2011adaptive,schultz2014novel,fuster2016adjugate} employing the full tensor.
We provide basic principal eigenvector tractography results for reference in Figure \ref{fig:invivo-trac}, without attempting to extensively interpret the results. It suffices to say that the results look comparable. With this in sight, the error bounds approach produces a very good reconstruction of the direction of the principal eigenvectors, although we saw some problems with the magnitude within the \emph{corpus callosum}.

\section*{Acknowledgements}

While at the Center for Mathematical Modelling of the Escuela Polit\'ecnica Nacional in Quito, Ecuador, T.~Valkonen has been supported by a Prometeo scholarship of the Senescyt (Ecuadorian Ministry of Science, Technology, Education, and Innovation). In Cambridge, T.~Valkonen has been supported by the {EPSRC} grants Nr.~EP/J009539/1 ``Sparse \& Higher-order Image Restoration'', and Nr.~EP/M00483X/1 ``Efficient computational tools for inverse imaging problems''. 

A. Gorokh and Y. Korolev are grateful to the RFBR (Russian Foundation for Basic Research) for partial financial support (projects 14-01-31173 and 14-01-91151).

The authors would further like to thank Karl Koschutnig for the in vivo data set, Kristian Bredies for scripts used to generate the tractography images, and Florian Knoll for many inspiring discussions.

\section*{A data statement for the EPSRC}

The MRI scans used for this publication have been used by the courtesy of the producer.
Data that is similar for all intents and purposes, is widespread, and can be easily produced by making a measurement of a human subject with an MRI scanner.
Our source codes are archived at \url{https://www.repository.cam.ac.uk/handle/1810/253422}.

\appendix

\section*{Appendix: Notation and techniques}
\label{sec:notation}

We recall some basic, not completely standard, mathematical notation and concepts in this appendix. We begin with partially ordered vector spaces, following the book~\cite{Schaefer}.  Then we proceed to tensor calculus and \term{tensor fields of bounded variation} and of \term{bounded deformation}. These are also covered in more detail for the diffusion tensor imaging application in \cite{tuomov-dtireg}.

\subsection*{Banach lattices}
A linear space $X$, endowed with a partial order relation $\leqslant$ is called an ordered vector space if the partial order agrees with linear operations in the following way:
\begin{equation*}
\begin{aligned}
&x \leqslant y &&\implies &&x + z \leqslant y+z \quad &&\forall x,y,z \in X, \\
&x \leqslant y &&\implies  &&\lambda x \leqslant \lambda y \quad &&\forall x,y \in X \text{ and } \lambda \in \mathbb R_+.
\end{aligned}
\end{equation*}
An ordered vector space is called a vector lattice if each pair of elements $x,y \in X$ have a supremum $x \vee y \in X$ and infimum $x \wedge y \in X$. Supremum of two elements $x,y$ of a Banach lattice $X$ is the element $z   = x \vee y$ with the following properties: $z \geqslant x, \,\, z \geqslant y$ and $\forall \tilde z \in X$ such that
$\tilde z \geqslant x$ and $\tilde z \geqslant y$ we have $\tilde z \geqslant z$.

For any $x \in X$, the element $x_+ = x \vee 0$ is called its positive part, the element $x_- = (-x) \vee 0 = (-x)_+$ is called its negative part, the element $|x| = x_+ + x_-$ is its absolute value. The equalities $x = x_+ - x_-$ and $|x| = x \vee (-x)$ hold  for any $x \in X$.

It is obvious that suprema and infima exist for any finite number of elements of a vector lattice. A vector lattice $X$ is said to be order complete if \emph{any} bounded from above set in $X$ has a supremum.

Let $X$ and $Y$ be ordered vector spaces. A linear operator $U \colon X \to Y$ is called positive, if $x \geqslant_X 0$ implies $Ux \geqslant_Y 0$. An operator $U$ is called regular, if it can be written as $U = U_1 - U_2$, where $U_1$ and $U_2$ are positive operators.

Denote the linear space of all regular operators $X \to Y$ by $L^\sim(X,Y)$. A partial order can be introduced in $L^\sim(X,Y)$ in the following way: $U_1 \geqslant U_2$, if $U_1 - U_2$ is a positive operator. If $X$ and $Y$ are vector lattices and $Y$ is order complete, then $L^\sim(X,Y)$ is also an order complete vector lattice.

A norm $\| \cdot \|$ defined in a vector lattice $X$ is called monotone if $|x| \leqslant |y|$ implies  $\|x\| \leqslant \|y\|$. A vector lattice endowed with a monotone norm is called a Banach lattice if it is norm complete. If $X$ and $Y$ are Banach lattices, then all operators in $L^\sim(X,Y)$ are continuous.

Let us list some examples of Banach lattices. The space of continuous functions $C(\Omega)$, where $\Omega \subset \mathbb R^n$, is a Banach lattice under the natural pointwise ordering: $f \geqslant_C g$ if and only if $f(x) \geqslant g(x)$ for all $x \in \Omega$. The spaces $L^p(\Omega)$, $1 \leqslant p \leqslant \infty$, are also Banach lattices under the following partial ordering: $f \geqslant_{L^p} g$ if and only if $f(x) \geqslant g(x)$ almost everywhere in $\Omega$. With this partial order, $L^p(\Omega)$, $1 \leqslant p \leqslant \infty$, are order complete Banach lattices. The Banach lattice of continuous functions $C(\Omega)$ is not order complete.

\subsection*{Tensors in the Euclidean setting}

We call a $k$-linear mapping $A: \R^m \times \cdots \times \R^m \to \R$ a $k$-tensor, denoted $A \in \Tensor^k(\R^m)$. This is a simplification from the full differential-geometric definition, sufficient for our finite-dimensional setting. We say that $A$ is symmetric, denoted $A \in \Sym^k(\R^m)$, if it satisfies for any permutation $\pi$ of $\{1,\ldots,k\}$ that
\[
    A(c_{\pi 1},\ldots,c_{\pi k})=A(c_1,\ldots,c_k).
\]

With $e_1,\ldots,e_m$ the standard basis of $\R^m$, we define on $\Tensor^k(\R^m)$ the inner product
\[
    \iprod{A}{B} \defeq \sum_{p \in \{1,\ldots,m\}^k}
            A(e_{p_1},\ldots,e_{p_k})
            B(e_{p_1},\ldots,e_{p_k}),
\]
and the Frobenius norm
\[
    \norm{A}_F \defeq \sqrt{\iprod{A}{A}}.
\]
The Frobenius norm is rotationally invariant in a sense crucial for DTI. We refer to \cite{tuomov-dtireg} for a detailed discussion of this, as well of alternative rotationally invariant norms.

\begin{example}[Vectors]
Vectors $A \in \R^m$ are of course symmetric $1$-tensors,
The inner product is the usual inner product in $\R^m$,
and the Frobenius norm is the two-norm, $\norm{A}_F=\norm{A}_2$.
\end{example}

\begin{example}[Matrices]
Matrices are $2$-tensors: $A(x, y) = \iprod{A x}{y}$, while symmetric matrices $A=A^T$ are symmetric $2$-tensors.
The inner product is $\iprod{A}{B} =\sum_{i,j} A_{ij}B_{ij}$  and $\norm{A}_F$ is the matrix Frobenius norm.

We use the notation $A \geq 0$ for positive-semidefinite matrices $A$. One can verify that this relation indeed defines a partial order in the space of symmetric matrices:
\begin{equation}\label{Loewner_order}
A \geq B \quad \text{ iff  $A-B$ is positive semidefinite}.
\end{equation}
With this partial order, the space of all symmetric matrices becomes an ordered vector space, but not a vector lattice. However, it enjoys some properties similar to those of vector lattices: for example, any directed upwards subset\footnote{Recall that an indexed subset $\{ x_\tau \colon \tau \in \{\tau\} \}$ of an ordered vector space $X$ is called directed upwards if for any pair $\tau_1,\,\tau_2 \in \{\tau\}$ there exists $\tau_3 \in \{\tau\}$ such that $x_{\tau_3} \geqslant x_{\tau_1}$ and $x_{\tau_3} \geqslant x_{\tau_2}$.} has a supremum~\cite[Ch.8]{Luxemburg_Zaanen}.
\end{example}

\subsection*{Symmetric tensor fields of bounded deformation}
\label{Symmetric tensor fields of bounded deformation}
Let $u: \Omega \to \Sym^k(\R^m)$ for a domain $\Omega \subset \R^m$. We set
\[
    \norm{u}_{F,p}
    \defeq
    \Bigl(\int_\Omega \norm{u(x)}_F^p \d x\Bigr)^{1/p}\hspace{-3ex},
    \hspace{4ex} (p \in [1,\infty)),
\]
and
\[
    \norm{u}_{F,\infty}
    \defeq
        \esssup_{x \in \Omega} \norm{u(x)}_F,
\]
The spaces $L^p(\Omega; \Sym^{k}(\R^m))$ are defined in the natural way using these norms, and clearly satisfy all the usual properties of $L^p$ spaces.

In the particular case of matrices ($k=2$), partial order can be introduced in the space $L^p(\Omega; \Sym^{2}(\R^m))$ in the following way:
\begin{equation}\label{our_order}
u \geqq v \quad \text{ iff  $u(x) \ge v(x)$ almost everywhere in $\Omega$}.
\end{equation}
Recall that the symbol $\geq$ stands for the positive semidefinite order~(\ref{Loewner_order}) in the space of symmetric matrices. Since the positive semidefinite order is not a lattice, neither is the order~(\ref{our_order}).

If $u \in C^1(\Omega; \Sym^k(\R^m))$, $k\ge 1$, we define by contraction the divergence $\divergence u \in C(\Omega; \Sym^{k-1}(\R^m)$ as
\begin{equation}
    \label{eq:divergence-def}
    [\divergence u(\cdot)](e_{i_2},\ldots,e_{i_k})
    \defeq
    \sum_{i_1=1}^m \iprod{e_{i_1}}{\grad u(\cdot)(e_{i_1},\ldots,e_{i_k})}.
\end{equation}
It is easily verified that $\divergence u(x)$ is indeed symmetric. Given a tensor field $u \in L^1(\Omega; \Tensor^{k}(\R^m))$ we then define the \emph{symmetrised distributional gradient} $E u \in [C_c^\infty(\Omega; \Sym^{k+1}(\R^m))]^*$ by
\[
    E u(\varphi) \defeq -\int_\Omega \iprod{u(x)}{\divergence \varphi(x)} \d x,
    \quad
    (\varphi \in C_c^\infty(\Omega; \Sym^{k+1}(\R^m))).
\]

With these notions at hand, we now define the spaces of \term{symmetric
tensor fields of bounded deformation} as (see also \cite{tuomov-dtireg,bredies2014symmetric})
\begin{align}
    \notag
    \BDspace(\Omega; \Sym^{k}(\R^m)) & \defeq
    \Bigl\{u \in L^1(\Omega; \Sym^{k}(\R^m)) \Bigm|
        \sup\nolimits_{\varphi \in \BALL^{k+1}_{F,\sym}} Eu(\varphi) < \infty \Bigr\},
\end{align}
where
\[
    \BALL^{k}_{F,\sym} \defeq \{ \varphi \in C_c^\infty(\Omega; \Sym^{k}(\R^m)) \mid \norm{\varphi}_{F,\infty} \le 1 \}.
\]
For $u \in \BDspace(\Omega; \Sym^{k}(\R^m))$, the symmetrised gradient $Eu$ is a Radon measure, $Eu \in \Meas(\Omega; \Sym^{k+1}(\R^m))$. For the proof of this fact we refer to \cite[\textsection 4.1.5]{federer1969gmt}.

\begin{example}
    The space $\BDspace(\Omega; \Sym^{0}(\R^m))$ agrees with the space $\BVspace(\Omega)$ of functions of bounded variation.
    The space $\BDspace(\Omega; \Sym^{1}(\R^m))=\BDspace(\Omega)$ is the space of \term{functions of bounded deformation} of \cite{temam1985mpp}.
\end{example}

\bibliographystyle{amsplnat-blkurl}
\bibliography{abbrevs,constr-dti}


\end{document}